\documentclass[11pt]{article}

\usepackage[
  letterpaper,
  left=1.5in,
  right=1.5in,
  top=1in,
  bottom=1in,
  footskip=0.6in
]{geometry}
\usepackage[english]{babel}
\usepackage[T1]{fontenc}
\usepackage{newtxtext}
\usepackage{microtype}
\usepackage{setspace}
\usepackage{titlesec}
\usepackage{etoolbox}

\usepackage{amsmath,amssymb,amsfonts,amsthm,mathtools}
\usepackage{newtxmath}
\usepackage{bbm}

\usepackage{graphicx}
\usepackage{caption}
\usepackage{subcaption}
\usepackage{booktabs}
\usepackage{multirow}
\usepackage{float}
\usepackage{algorithm}
\usepackage{algpseudocode}

\usepackage{comment}
\usepackage{xcolor}
\usepackage{xspace}
\usepackage{natbib}
\usepackage{authblk}
\usepackage[
  colorlinks=false,
  linkbordercolor={1 0 0},
  citebordercolor={0 1 0},
  urlbordercolor={0 1 1},
  pdfborder={0 0 1}
]{hyperref}
\usepackage{cleveref}

\AtBeginDocument{%
  \setlength{\abovedisplayskip}{6pt plus 2pt minus 1pt}%
  \setlength{\belowdisplayskip}{6pt plus 2pt minus 1pt}%
  \setlength{\abovedisplayshortskip}{3pt plus 1pt}%
  \setlength{\belowdisplayshortskip}{4pt plus 1pt minus 1pt}%
  \allowdisplaybreaks[1]%
}

\titleformat{\section}
  {\large\bfseries}{\thesection.}{0.65em}{}
\titleformat{\subsection}
  {\normalsize\bfseries}{\thesubsection.}{0.55em}{}
\titleformat{\subsubsection}
  {\normalsize\itshape}{\thesubsubsection.}{0.55em}{}
\titlespacing*{\section}{0pt}{2.4ex plus 0.6ex minus 0.2ex}
  {1.0ex plus 0.2ex}
\titlespacing*{\subsection}{0pt}{1.9ex plus 0.4ex minus 0.2ex}
  {0.7ex plus 0.2ex}
\titlespacing*{\subsubsection}{0pt}{1.5ex plus 0.3ex minus 0.2ex}
  {0.5ex plus 0.1ex}

\newtheoremstyle{statplain}
  {6pt}{6pt}{\itshape}{}
  {\bfseries}{.}{0.5em}{}
\newtheoremstyle{statdefinition}
  {6pt}{6pt}{\normalfont}{}
  {\bfseries}{.}{0.5em}{}

\theoremstyle{statplain}
\newtheorem{theorem}{Theorem}[section]
\newtheorem{lemma}[theorem]{Lemma}

\newtheorem{claim}[theorem]{Claim}

\theoremstyle{statdefinition}
\newtheorem{definition}[theorem]{Definition}

\newtheorem{remark}[theorem]{Remark}

\newenvironment{proofsketch}
  {\begin{proof}[Proof sketch]}
  {\end{proof}}

\DeclareMathOperator{\Var}{Var}

\newcommand{\mc}{\mathrm{mc}}
\newcommand{\tp}{\mathrm{tp}}

\newcommand{\stat}{\mathrm{stat}}
\newcommand{\sumin}{\sum_{i=1}^n}
\newcommand{\ra}{\rightarrow}
\newcommand{\la}{\leftarrow}

\newcommand{\Z}{\mathbb{Z}}

\newcommand{\E}{\mathbb{E}}
\newcommand{\PP}{\mathbb{P}}

\newcommand{\I}{\mathcal{I}}
\newcommand{\J}{\mathcal{J}}

\newcommand{\X}{\mathcal{X}}

\renewcommand{\Affilfont}{\normalfont}
\AtBeginEnvironment{thebibliography}{\small\setstretch{1}}
\providecommand{\xuelinbibstyle}{plainnat}

\definecolor{myGreen}{RGB}{80,180,0}
\definecolor{myGold}{rgb}{0.75,0.6,0.12}
\definecolor{myBlue}{rgb}{0.12,0.45,0.75}

\newcommand{\bigO}{\mathcal{O}}
\newcommand{\interval}{I} 
\newcommand{\intervalset}{\I} 
\newcommand{\numInterval}{J} 
\newcommand{\iteratorInterval}{j} 

\newcommand{\numClass}{G} 
\newcommand{\iteratorClass}{g} 

\newcommand{\mcThreshold}{\tau} 
\newcommand{\numData}{n} 

\newcommand{\failureProb}{\delta} 

\newcommand{\intervalMaxLen}{\Delta} 

\newcommand{\discreteNum}{K} 

\newcommand{\estModel}{p} 

\newcommand{\titleName}{Prediction-Interval-Conditional Prediction Interval\xspace}
\newcommand{\titleShort}{PICPI\xspace}
\newcommand{\titleNames}{Prediction-Interval-Conditional Prediction Intervals\xspace}
\newcommand{\titleShorts}{PICPIs\xspace}

\title{PICPIs: Prediction-Interval-Conditional Prediction Intervals}

\author[{}]{%
  Xuelin Yang\textsuperscript{1,}\thanks{Equal contribution. Imbens' work is supported by the Office of Naval Research under Grant N00014-17-1-2131; Jordan’s work is supported by European Union under Grant ERC-2022-SYG-OCEAN-101071601. Email correspondence: \texttt{xuelin@berkeley.edu}.  Code: \texttt{https://github.com/xyang23/picpi}.}\qquad
  Baihe Huang\textsuperscript{1,}\protect\footnotemark[1]\qquad
  Yilong Hou\textsuperscript{1,}\protect\footnotemark[1]\authorcr
  Guido Imbens\textsuperscript{2} \qquad
  Michael I. Jordan\textsuperscript{1,}\textsuperscript{3}\\
  {\Affilfont
    \textsuperscript{1}University of California, Berkeley\\
    \textsuperscript{2}Stanford University\\
    \textsuperscript{3}Inria Paris}
}
\date{}
\begin{document}

\maketitle
\begin{abstract}
  A classical question in statistics is which observable quantities to
condition on when drawing inferences about unobservable targets.
For conformal prediction in nonparametric uncertainty quantification, standard marginal validity offers limited resolution at the prediction values on which decisions are based, and fully conditional guarantees with respect to the covariates are provably unattainable. 
We address this gap by introducing a prediction-based conditioning framework that we refer to as \titleNames (\titleShorts). Formally,  a \titleShort is an interval $I$ satisfying a self-consistency condition:
$$ \E [ Y\mid p(X) \in I] \in I,$$
for predictive model $p$, contextual covariate $X$, and outcome $Y$.
Thus, an interval simultaneously defines a stratum of prediction values and certifies that the mean outcome in that stratum lies in the same interval.
This self-consistency condition yields data-adaptive strata without altering the original prediction.
Such intervals can be constructed using practical algorithms.  
Under regularity of the prediction distribution, the constructed intervals cover all but an arbitrarily small fraction of prediction values and have widths that decrease at rate $n^{-1/3}$, up to logarithmic factors and the prediction error.
Moreover, identifying these locally calibrated intervals can, in turn, inform downstream decision-making.
We derive inference procedures for \titleShorts in probabilistic prediction and multi-class classification, accompanied by theoretical guarantees.
Empirical results are provided that compare \titleShorts with existing interval-based baselines.


\end{abstract}
\noindent\textbf{Keywords:} Prediction-conditioned inference; conformal prediction; uncertainty quantification; calibration; distribution-free inference

\newpage
\section{Introduction}

A classical problem in statistical inference is that of
\emph{conditioning}---on what observable quantities should one condition in order to make inferences about unobservable quantities~\citep{fisher1934statistical,cox1958some,  cox1988some, reid1995}? 
This problem has received renewed attention in recent work on nonparametric uncertainty quantification,
specifically the area of \emph{conformal prediction}, which aims to produce confidence intervals for nonparametric prediction~\citep{vovk2005algorithmic,shafer2008tutorial}. Consider for example the following formal model for binary prediction. Let $X\in\mathcal{X}$ denote a vector of covariates, let $Y\in\{0,1\}$ denote an outcome, and let $p:\mathcal{X}\to[0,1]$ be a fitted model whose output $p(X)$ is intended to approximate the conditional probability $p^*(X):=\PP(Y=1\mid X)$. Assuming exchangeability for the observable data, $\{(X_i, Y_i)\}_{i=1}^n$, but making no other structural or probabilistic assumptions, conformal prediction yields distribution-free prediction sets that come with a marginal guarantee of the form:
\begin{align}\label{eq:intro_conformal_marginal}
\PP\Big(Y\in C(X)\Big)\ge 1-\alpha,
\end{align}
where $C(X)$ is a prediction set constructed from $p(X)$, and $\alpha \in (0,1)$ is a user-specified failure probability. As has been emphasized in recent years, this marginal guarantee is unconditional with respect to the covariate $X$, and thus does not provide direct assessment of uncertainty about a specific prediction; moreover, fully conditional coverage with respect to $X$ is generally unattainable without strong distributional assumptions.  More formally,
\citet{lei2014distribution} and \citet{barber2021limits} demonstrate that any prediction interval $C(X)$ satisfying the strict conditional validity guarantee
\begin{align}\label{eq:conditional_conformal}
    \PP\Big(Y \in C(x)\mid X=x\Big) \geq 1-\alpha \quad \text{for almost all } x,
\end{align}
must be vacuous.  A significant effort has been made to temper the message in this result by enforcing conditional or group-wise validity via coarsening of the covariate~\citep{ding2023class, martinezgil2024groups, gibbs2025conformal,duchi2025few}. 

We present an alternative perspective on the problem of obtaining conditional estimates of uncertainty. Rather than conditioning on the covariate $X$, we suggest conditioning on the prediction $p(X)$. The idea of prediction-based conditioning is not new---indeed, it is the foundation of the literature on probabilistic calibration~\citep{dawid1982wellcalibrated,foster1998asymptotic}.  Recall that a calibrated model is defined as one that satisfies the following conditional criterion:
\begin{align}\label{eq:intro_cali}
    \PP\Big(Y=1\mid p(X)=v\Big) = v.
\end{align} 
That is, among all instances where the predicted probability is approximately $v$, the outcome should occur with frequency approximately $v$. 

Our approach to conditional uncertainty quantification is built on a similar ``self-consistency'' property. Specifically, we ask that the probability of a positive outcome, conditioning on the model's prediction falling within a specific interval $I$, also lies within $I$. Formally, we have:
\begin{align}\label{eq:intro_picpi}
\PP \Big(Y=1 \mid p(X)\in I\Big)\in I.
\end{align}
Effectively, we are using the model's predictions to stratify risk. We refer to this approach as \emph{\titleNames (\titleShorts)}.  
As illustrated in Figure~\ref{fig:teaser}, each \titleShort serves a dual role: it simultaneously defines the stratum of predictions being evaluated and provides a calibrated uncertainty statement for that specific stratum. 
In this way, uncertainty quantification is aligned with prediction, producing a summary of model reliability across the prediction distribution. In contrast to the intervals defined in Eq.~\eqref{eq:conditional_conformal},we show (in Theorem~\ref{thm:consistency-prediction}) that the width of the \titleShort intervals converges to zero under suitable assumptions, providing increasingly precise characterizations of risk. 

\begin{figure}[tb]
    \centering
    \includegraphics[width=0.45\linewidth]{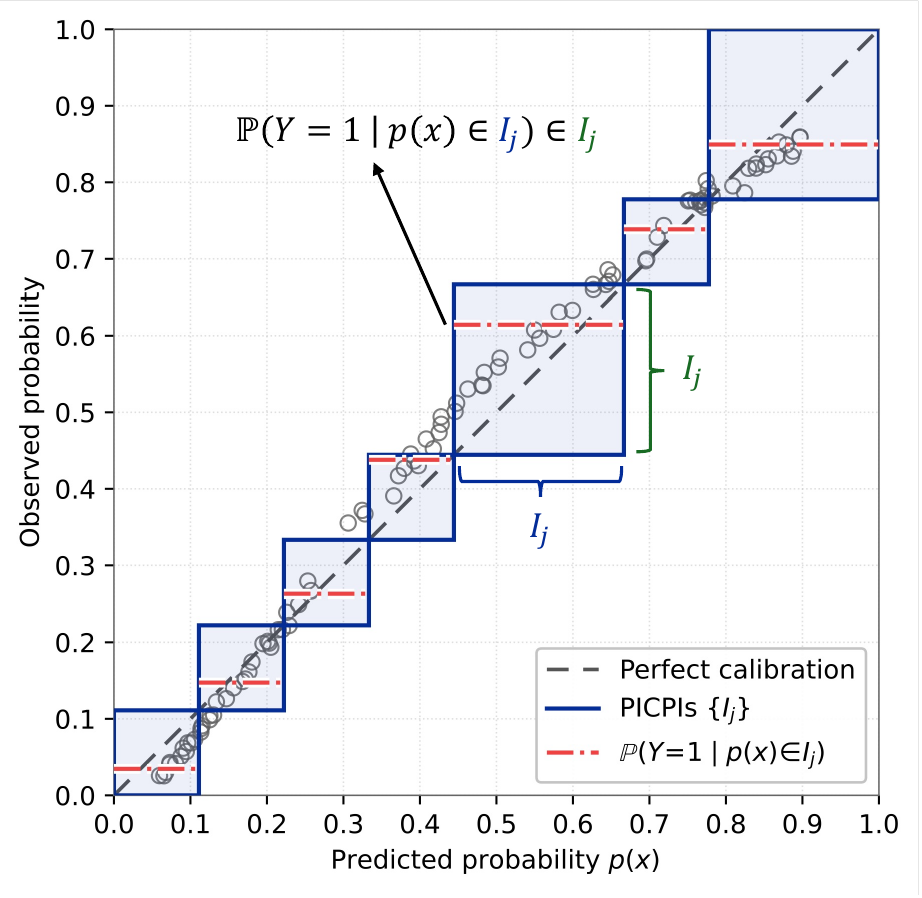}
    \caption{Illustration of the \titleShort methodology. 
    Each blue interval $I_j$ is both a stratum of predicted probabilities and a certificate that its empirical outcome frequency, shown in red, lies in $I_j$.
    The dashed diagonal line corresponds to perfect calibration. This example is generated under model misspecification (quadratic ground truth with a linear fitted model).}
    \label{fig:teaser}
\end{figure}

We make the following two distinctions in comparison with standard methodology in calibration.
First, calibration is usually assessed via binning. 
Binning schemes are typically based on \emph{fixed, external} partitions of the prediction space, with overall calibration error computed as an aggregation of the errors within each bin.
While useful for general model assessment, fixed bins need not align with the decision boundaries that arise when optimizing a downstream risk function. 
The proposed \titleShorts provide a local, data-adaptive diagnostics for  calibration.
Rather than forcing the model to conform to pre-defined bins through recalibration, we characterize regions of the prediction space where the model is  ``self-consistent'' and thus reliable. 

Second, calibration error computed from binning schemes is often used to guide \emph{post-hoc} recalibration procedures such as Platt scaling, isotonic regression, or self-calibration \citep{platt1999probabilistic,  van2024self,berta2024classifier}. 
Recalibration, however, can be undesirable \citep{guo2017calibration, pleiss2017fairness, carriero2025harms}. 
With \titleShorts, a practitioner can instead identify regions of the prediction space that are directly actionable. We further demonstrate how the knowledge of these locally calibrated intervals can inform downstream decision-making, including probabilistic prediction and classification.
An extended discussion of these issues is provided in Section~\ref{sec:related_work}.

Our specific contributions are as follows: 
\begin{enumerate}
    \item We propose \titleShorts as a novel approach to conditional uncertainty quantification, we design algorithms to construct \titleShorts, and we provide theoretical characterizations of these intervals. In particular, we show that  the \titleShorts constructed via our algorithm cover all but a user-specified prediction mass have widths controlled according to a $\Tilde{\bigO}(n^{-1/3})$ term plus the prediction error (Theorem~\ref{thm:consistency-prediction}). Consequently, these widths vanish when the base predictor is consistent.
    \item We provide inference procedures for probabilistic prediction and multi-class classification, and present both theoretical guarantees and empirical results. Specifically, we show that probabilistic prediction using \titleShorts has marginal miscoverage that is bounded by the within-stratum variation of $p^*(X)$ together with the prediction mass not covered by the \titleShorts (Theorem~\ref{thm:tp}).
Multi-class classification using \titleShorts has worst-case class-conditional coverage certificates (Theorem~\ref{thm:mc_error}), a set-size optimality result given a prescribed coverage threshold (Theorem~\ref{thm:mc_optimality}), which allows data re-use when constructing intervals (Theorem~\ref{thm:mc_same_sample}).
\end{enumerate}

The remainder of the paper is organized as follows: Section~\ref{sec:def-alg} introduces the \titleShort definition and presents its algorithmic instantiation.  
Section~\ref{sec:properties} derives theoretical properties of \titleShorts. 
Section~\ref{sec:tasks} presents an inference framework using \titleShorts for probabilistic prediction and classification. 
Section~\ref{sec:experiment} discusses the empirical performance of the \titleShort approach in comparison with several standard interval-based baselines.
Section~\ref{sec:related_work} presents related work and we present our conclusions in Section~\ref{sec:conclusions}.

\section{Definition, Algorithmic Construction, and Validity}\label{sec:def-alg}

Let $Y\in\{0,1\}$, let $X\in\X$, and let $p:\X\to[0,1]$ be a fitted
model for $p^*(x):=\PP(Y=1\mid X=x)$. The model may be trained on a separate sample or supplied as an
off-the-shelf predictor.  A PICPI is an interval in the prediction space that both groups similar predictions and contains the
group's true underlying outcome probability.

\begin{definition}[\titleName (\titleShort)]\label{def:picpi}
A collection $\intervalset_p=\{\interval_j\}_{j=1}^{J}$ is called a
collection of \emph{\titleNames (\titleShorts)} with respect to $p$ if,
for every $j\in[J]$ with $\PP(p(X)\in\interval_j)>0$,
\begin{align}
\PP(Y=1\mid p(X)\in\interval_j)\in\interval_j.
\label{eq:picpi-definition}
\end{align}
\end{definition}

For simplicity of exposition we focus on binary outcomes in this section. We note that the \titleShort notion extends directly to regression with a bounded real-valued
outcome, as we will formalize in Remark~\ref{rmk:extension_regression}.


\begin{algorithm}[bth] 
   \caption{\titleShort Construction (population mode with theoretical validity)} 
   \label{alg:calibration_population}
   \small
\begin{algorithmic}[1]
\Procedure{\textsc{Calibration}}{data points $\{(x_i,y_i)\}_{i=1}^\numData,$ predictor $ p$, failure probability $\delta$}

\Comment{Obtain a set of \titleShorts}
\State Let $\{0,1/\discreteNum,2/\discreteNum,\cdots,1\}$ be a discretization of $[0,1]$.
\State Initialize \titleShorts $\intervalset_p \la \emptyset$ 
\For {$0\leq i<j \leq\discreteNum$} 
\State Let $[a,b]\la [i/\discreteNum, j/\discreteNum]$ \Comment{Evaluate the current interval}
\If{$[a,b]$ satisfies $\sum_{i=1}^n \mathbbm{1}(p(x_i) \in [a,b]) > 0$ and 
\begin{align*}
    \frac{\sum_{i=1}^n y_i \cdot \mathbbm{1}(p(x_i) \in [a,b])}{\sum_{i=1}^n \mathbbm{1}(p(x_i) \in [a,b])} \in \left[a + 2\sqrt{\frac{\log (\discreteNum^2/\delta)}{\sum_{i=1}^n \mathbbm{1}(p(x_i) \in [a,b])}}  ,b - 2\sqrt{\frac{\log (\discreteNum^2/\delta)}{\sum_{i=1}^n \mathbbm{1}(p(x_i) \in [a,b])}}\right]
\end{align*}} \label{line:empirical-interval}

\State $\intervalset_p \la \intervalset_p \cup \{[a, b]\}$ 

\EndIf
\EndFor
\State {\bf Return} \titleShorts $\intervalset_p$
\EndProcedure
\\
\Procedure{\textsc{Inference}}{covariate input $x,$ predictor $ p$, failure probability $\delta$}
\Comment{Find the shortest \titleShort that contains $p(x)$}
\State $\intervalset_p \leftarrow \textsc{Calibration}\left(\{(x_i,y_i)\}_{i=1}^n, p, \delta \right)$
\State Let $[a,b]$ be the shortest interval in $\intervalset_p$ that contains $p(x)$
\State {\bf Return} $[a,b]$
\EndProcedure
\end{algorithmic}
\end{algorithm}

We now present an algorithm to construct \titleShorts with an accompanying validity statement. 
The population-level algorithm, presented in pseudocode as Algorithm~\ref{alg:calibration_population}, comprises two procedures.  \textsc{Calibration} returns
a collection of intervals that we will show to satisfy the \titleShort condition (Theorem~\ref{thm:algorithm_population}).  Once this collection has been computed,
\textsc{Inference} reports the shortest returned interval containing a new
prediction $p(x)$. If a disjoint summary is preferred, we select a disjoint subset with maximum covered length through the \textsc{DisjointCalibration} procedure provided in Appendix~\ref{app:alg}.

The theoretical \titleShort validity of the produced intervals---that is, whether they satisfy Definition~\ref{def:picpi} defined on the population---follows from a concentration argument. In particular, for a candidate
interval $I=[a,b]$, the \titleShort condition asks whether the unknown
conditional mean
\(
\PP(Y=1\mid p(X)\in I)
\)
lies between $a$ and $b$.  We estimate this quantity by the average label
among calibration observations whose predictions fall in $I$. Because this
average could be noisy, it is not enough for it merely to lie in $I$: we require it
to be separated from both endpoints by a concentration margin (Line~\ref{line:empirical-interval}).  The margin is
larger for intervals containing fewer calibration observations, precisely
where the empirical average is less reliable.
This ensures that every interval admitted by the algorithm satisfies the population-level \titleShort property.
We formalize this statement of validity in the following theorem:
\begin{theorem}\label{thm:algorithm_population}

Suppose $\{(x_i,y_i)\}_{i=1}^{\numData}$ are drawn i.i.d. from the
distribution of $(X,Y)$ and $p$ is fixed independently of these observations.
Then, with probability at least $1-\failureProb$ over the calibration sample,
every interval returned by \textsc{Calibration} in
Algorithm~\ref{alg:calibration_population} is a \titleShort with respect to
$p$.

\end{theorem}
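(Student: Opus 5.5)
The plan is a union bound over the finitely many candidate intervals, combined with a conditional Hoeffding inequality for the empirical label average inside each interval. Fix a candidate $I=[a,b]=[i/\discreteNum,j/\discreteNum]$. There are at most $\binom{\discreteNum+1}{2}\le \discreteNum^2$ such intervals (for $\discreteNum\ge 1$). Let $N_I=\sum_{i=1}^n \mathbbm{1}(p(x_i)\in I)$ and $\mu_I=\PP(Y=1\mid p(X)\in I)$. Let $\hat\mu_I$ be the empirical average of $y_i$ over indices with $p(x_i)\in I$. Because $p$ is fixed independently of the sample, conditioning on the index set $S=\{i: p(x_i)\in I\}$ preserves the i.i.d. structure: given $S$ with $|S|=m\ge1$, the labels $\{y_i\}_{i\in S}$ are i.i.d.\ Bernoulli$(\mu_I)$. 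Indeed, the pairs $(x_i,y_i)$ are independent, and conditioning on the events $\{p(x_i)\in I\}$ for $i\in S$ gives each such pair the conditional law of $(X,Y)$ given $p(X)\in I$. Note also that $N_I>0$ implies $\PP(p(X)\in I)>0$ almost surely, so $\mu_I$ is well defined whenever the algorithm could admit $I$.

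Next, apply Hoeffding conditionally on $S$ with $|S|=m$:
\[
\PP\Big(|\hat\mu_I-\mu_I|\ge \epsilon_m \,\Big|\, S\Big)\le 2\exp(-2m\epsilon_m^2),\qquad \epsilon_m = 2\sqrt{\log(\discreteNum^2/\delta)/m}.
\]
This gives $2\exp(-8\log(\discreteNum^2/\delta))=2(\delta/\discreteNum^2)^8$. Integrating over $S$, the unconditional probability of the bad event $E_I=\{N_I>0,\ |\hat\mu_I-\mu_I|\ge \epsilon_{N_I}\}$ is at most $2(\delta/\discreteNum^2)^8\le \delta/\discreteNum^2$ whenever $\delta/\discreteNum^2\le 1/2$. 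In the degenerate regime where that fails (e.g.\ $\discreteNum=1$, $\delta>1/2$), I would either note that the margin is generous, or simply use the weaker bound $2\exp(-2m\epsilon^2)$ with the constant $2$ in the margin. Even $\sqrt{\log(2\discreteNum^2/\delta)/(2m)}$ would suffice, and the stated margin exceeds this up to small-constant checks. A union bound over at most $\discreteNum^2$ intervals then gives $\PP(\cup_I E_I)\le\delta$.

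Finally, on the complement event, take any returned $I$. Admission means $N_I>0$ and $\hat\mu_I\in[a+\epsilon_{N_I},\,b-\epsilon_{N_I}]$. Since $|\hat\mu_I-\mu_I|<\epsilon_{N_I}$, it follows that $\mu_I\in(a,b)\subseteq I$, which is exactly the \titleShort condition of Definition~\ref{def:picpi}. Intervals with $\PP(p(X)\in I)=0$ impose no requirement.

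The main obstacle is making the conditional-on-$S$ Hoeffding step rigorous: the sample size $N_I$ is random, and the margin depends on it. The plan is to handle this by conditioning on the full membership pattern, using the independence of $p$ from the data, rather than using a fixed-$n$ bound. Bookkeeping the constant so that the per-interval failure is $\le\delta/\discreteNum^2$ is routine.
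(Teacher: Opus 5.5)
Your proposal is correct and follows the same route as the paper's proof: Hoeffding's inequality for each of the at most $K^2$ candidate intervals, a union bound, and then the admission margin forcing $\PP(Y=1\mid p(X)\in[a,b])\in[a,b]$. Your conditioning on the membership pattern $S$ is a welcome rigorization of the random-count step that the paper's proof passes over. The only loose end is the regime $\delta/K^2>1/2$, which for $\delta\le 1$ arises only when $K=1$. There the sole candidate $[0,1]$ is trivially a PICPI, so no fallback argument is needed. That matters, because your remark that the stated margin still dominates the Hoeffding radius in that regime is not actually true.
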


\begin{proofsketch}
We use Hoeffding's inequality on $\PP(Y=1\mid p(X)\in I)$ and a union bound over $\discreteNum^2$ discretized candidate intervals. For the full proof see Appendix~\ref{proof_thm:algorithm_population}.
\end{proofsketch}

Theorem~\ref{thm:algorithm_population} guarantees the PICPI property under the true underlying distribution. An empirical version of the algorithm that we will discuss in a later section (Algorithm~\ref{alg:calibration_empirical}, Section~\ref{sec:alg_emp}) produces PICPIs with respect to the empirical distribution of the calibration sample. 
Conceptually, their relationship is analogous to the distinction between a population CDF and an empirical CDF: while Theorem~\ref{thm:algorithm_population} enforces the PICPI property with respect to the  underlying population measure, Algorithm~\ref{alg:calibration_empirical} satisfies it under the plug-in sample measure.

\clearpage
\section{Properties of \titleShorts}
\label{sec:properties}

In this section we show that, when a predictor has error $\epsilon$ and its predicted probabilities satisfy a $\lambda$-regularity condition, most predictions will be enclosed by \titleShorts. As the calibration sample size grows, these intervals shrink while their union continues to cover most of the prediction mass; the rate depends on $\epsilon$, $\lambda$, and the calibration sample size.
Our result requires a regularity condition ensuring that the predicted probabilities are sufficiently spread across $[0,1]$.

\begin{definition}[$\lambda$-regularity]
We say that a random variable $Z$ over $[0,1]$ is $\lambda$-\emph{regular} if $\PP(Z \in A) \geq \lambda \cdot \mu_B(A)$ for any measurable set $A$, where $\mu_B$ is the Lebesgue measure on $[0,1]$.
\end{definition}


\begin{theorem}
[Finite-sample width guarantee for \titleShorts]
\label{thm:consistency-prediction}

Consider $\{(x_i,y_i)\}_{i=1}^{m+n}$  i.i.d.\ data points such that 
$n \geq 2^{-1/2} \lambda^{-1} \log(3n/\delta) $.
Use $m$ data points to construct the predictor $\estModel: \X \to [0,1]$ and $n$ data points to produce \titleShorts $\intervalset_p$  via Algorithm~\ref{alg:calibration_population}. 
Suppose $X \sim \mu$, $\estModel(X)$ is $\lambda$-regular, and
\begin{align*}
    \E_\mu\Big[\Big| \estModel (X) - \PP(Y = 1 \mid X)\Big|\Big] \leq \epsilon,
\end{align*}
for some $\epsilon>0$.

For any $\delta,\delta_1 > 0$, with probability at least $1-\delta$ over the randomness of $\{(x_i,y_i)\}_{i=1}^n$,
\begin{align}
    \PP_{x \sim \mu}\Big(\exists [a,b] \in \intervalset_p ~s.t.~ |b-a| \leq C(n,\epsilon,\delta,\delta_1) ~\text{and}~ p(x) \in [a,b] \Big) \geq 1-\delta_1,
\end{align}
where 
\begin{align}\label{eq:width_function}
    C(n,\epsilon,\delta,\delta_1) = c_0 \log(1/\delta_1) \cdot \Big(\Big(\frac{\log(n/\delta)}{\lambda n}\Big)^{1/3}  + \frac{\epsilon}{\delta_1}\Big)
\end{align}
for a universal constant $c_0$.

\end{theorem}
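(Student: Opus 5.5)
Write $q(v):=\PP(Y=1\mid p(X)=v)$ and, for an interval $I$, $\bar q(I):=\PP(Y=1\mid p(X)\in I)$. The plan has two parts. First, show that at the population level most prediction values lie in some short grid interval $[a,b]$ whose conditional mean $\bar q([a,b])$ sits well inside it, with margin of order the width. Second, show that the empirical test in Line~\ref{line:empirical-interval} accepts every such interval. For the second part we use the reverse direction of the Hoeffding/union-bound argument from Theorem~\ref{thm:algorithm_population}. With probability $1-\delta$, uniformly over all $K^2$ grid intervals, the empirical mean is within $\sqrt{\log(K^2/\delta)/N_I}$ of $\bar q(I)$, where $N_I$ is the count in $I$. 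By $\lambda$-regularity and a multiplicative Chernoff bound, also uniform over the grid, $N_I\gtrsim \lambda n|I|$ whenever $\lambda n |I|\gtrsim\log(K/\delta)$; the hypothesis $n\ge 2^{-1/2}\lambda^{-1}\log(3n/\delta)$ is what makes this usable. Take $K\asymp n$. An interval of width $w$ with population margin $\ge c\,w$ is then accepted as soon as $3\sqrt{\log(n/\delta)/(\lambda n w)}\le c\,w$, that is, $w\gtrsim(\log(n/\delta)/(\lambda n))^{1/3}$. This is the source of the $n^{-1/3}$ term.

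For the population part, let $h(x):=|p(x)-p^*(x)|$. Note that $|v-q(v)|\le \E[h(X)\mid p(X)=v]$, so $\E|p(X)-q(p(X))|\le\epsilon$. By Markov, the set $B$ of prediction values $v$ with $|v-q(v)|>\epsilon/\delta_1'$ has prediction mass at most $\delta_1'$. Fix a scale $w$. For a good value $v$, consider the grid interval $I=[v-w,v+w]$ (rounded to the grid). Then $\bar q(I)$ is an average of $q(u)$ over $u\in I$, and each $q(u)$ is within $|u-q(u)|$ of $u$. Hence
\[
|\bar q(I)-\E[p(X)\mid p(X)\in I]|\le \E[|p(X)-q(p(X))|\mid p(X)\in I].
\]
We need this local average error to be small compared with $w$ for most $v$. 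For this we use a maximal-function (Vitali/Hardy--Littlewood type) argument on the prediction-space measure. The set of $v$ for which some window of width $\ge w$ around $v$ has local average error $>\eta$ has mass $\lesssim \epsilon/\eta$. Choosing $\eta\asymp\epsilon/\delta_1$ gives the $\epsilon/\delta_1$ term.

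Centering is the remaining issue. $\E[p(X)\mid p(X)\in I]$ need not be near the midpoint, since the density is only bounded below. So the margin condition $\bar q(I)\in[a+cw,b-cw]$ can fail when the conditional mean of $p(X)$ is pushed to an endpoint. We handle this by a search over $O(\log(1/\delta_1))$ dyadic scales or shifts. If the mass inside $I$ concentrates near an endpoint, enlarge or shift the window. By $\lambda$-regularity the conditional mean of $p(X)$ in a window of width $2^k w$ around $v$ cannot stay within $c\,2^k w$ of an endpoint at every scale, except for $v$ in a set whose mass decays geometrically. Taking $\log(1/\delta_1)$ scales leaves exceptional mass $\le\delta_1/3$, and this costs the $\log(1/\delta_1)$ factor multiplying the width.

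The main obstacle is making this centering/multiscale step rigorous with only a lower density bound. I would try first to apply a covering lemma to the bad set at each scale, comparing the mass of its $w$-neighborhood with the Lebesgue measure through $\lambda$. Finally, union the three exceptional events (Markov set, maximal-function set, multiscale set), each with mass $\le\delta_1/3$. Combined with the sample-level acceptance event of probability $1-\delta$, this yields the bound with $C(n,\epsilon,\delta,\delta_1)$ as in \eqref{eq:width_function}.
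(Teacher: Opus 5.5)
Your statistical step and your maximal-function step are sound. The first uses uniform Hoeffding over the grid, a Chernoff lower bound on counts from $\lambda$-regularity, and balances $\sqrt{\log(n/\delta)/(\lambda n w)}$ against a margin of order $w$. The second discards windows on which $\E[|p(X)-q(p(X))|\mid p(X)\in I]$ exceeds $\eta\asymp\epsilon/\delta_1$, via a bounded-overlap covering. These play the roles of Lemma~\ref{lem:sample-size-interval} with Hoeffding and of Lemma~\ref{lem:conditional-dist-shift} in the paper.

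The gap is the centering step, which is the core of the theorem and which you leave open. Your sketch for it has two problems.
\begin{itemize}
\item $\lambda$-regularity is not what makes centering possible. A lower bound on the density does not stop the mass inside a window from piling up near one endpoint, since atoms or steep bumps are allowed at every scale. The paper's centering result, Lemma~\ref{lem:dist-regularity}, holds for an arbitrary law on $[0,1]$; regularity enters only through the sample counts.
\item The accounting does not give the stated rate. Suppose the exceptional mass does decay geometrically per dyadic scale. Reaching $\delta_1/3$ then takes $\Theta(\log(1/\delta_1))$ doublings, so windows grow to width $2^{\Theta(\log(1/\delta_1))}w=\delta_1^{-\Theta(1)}w$, not $\log(1/\delta_1)\,w$. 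You would get a factor polynomial in $1/\delta_1$ in front of the bracket in \eqref{eq:width_function}.
\end{itemize}

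The missing idea is to enlarge intervals additively, one bin at a time, in a way that makes the discarded mass decay geometrically per bin. The paper's construction works as follows.
\begin{itemize}
\item Bin $[0,1]$ at width $4\epsilon_1$, with $\epsilon_1=\epsilon_{\mathrm{stat}}+4\epsilon/\delta_1$. Draw an edge $i\to i\pm1$ when bin $i$'s conditional mean lies on that side of its midpoint, or when bin $i$ has at most three times that neighbor's mass.
\item Build intervals from maximal bidirectional blocks, cut into groups of two or three bins, and attach the adjacent one-directional chains.
\item The outgoing edge at each end of an interval gives margin $\epsilon_1$ by total expectation. Either the end bin's own mean lies on the inner side, or the end bin carries at most $3/4$ of the interval's mass.
\item Along a chain, a missing reverse edge forces the mass to drop by a factor larger than $3$ per bin. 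Truncating chains after $\lceil\log(6/\delta_1)\rceil$ bins therefore discards only $O(\delta_1)$ mass and keeps widths $O(\epsilon_1\log(1/\delta_1))$.
\end{itemize}

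Your requirement of a margin of order the width at \emph{both} ends also cannot hold at the boundary. Suppose $p(X)$ has an atom at $1$ and labels are calibrated, which $\lambda$-regularity allows. Then $1-\bar q([a,1])=O((1-a)^2)$, so no short interval covering the atom has a margin proportional to its width. The paper's lemma waives the margin at endpoints equal to $0$ or $1$. The acceptance test must then also drop the concentration margin there, as in the relaxed rule noted after the proof of Theorem~\ref{thm:algorithm_population}.
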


\begin{proofsketch}
    First, we construct an initial collection of intervals of controlled length and desired coverage by applying Lemma~\ref{lem:dist-regularity} to the distribution of $p(X)$. That is, there exist intervals $\{[a_j,b_j]\}$ where the conditional mean prediction $\E[p(X) \mid p(X)\in [a_j, b_j]]$ is bounded away from its own endpoints. Then we transfer this property from the conditional mean prediction $\E[p(X)\mid p(X)\in [a_j, b_j]]$ to the conditional label probability $\E[Y \mid p(X)\in [a_j, b_j]]$ (by setting $U=p^*(X)$ and $V=p(X)$ in Lemma~\ref{lem:conditional-dist-shift}). Upon discarding a small amount of mass, the retained intervals satisfy the desired \titleShort property. We then show that these intervals can be produced by Algorithm~\ref{alg:calibration_population}. Define $A_1$ to be the event that every interval contains sufficiently many calibration samples, and $A_2$ the event that the empirical label average close to the true conditional label probability. Under $A_1\cap A_2$, the empirical estimate is sufficiently far from both endpoints for the algorithm to accept. By Lemma~\ref{lem:sample-size-interval} and Hoeffding's inequality, $A_1\cap A_2$ occurs with probability at least $1-\delta$. We note that Eq.~\eqref{eq:width_function} serves as an upper bound on the widths, while obtaining adaptive bounds that exploit additional model properties or parametric structure is an open direction for future work. For the full proof see
Appendix~\ref{proof:thm:consistency-prediction}.
\end{proofsketch}


\begin{figure}[htbp]
    \centering
    \begin{subfigure}[b]{0.52\textwidth}
        \centering
        \includegraphics[width=\textwidth]{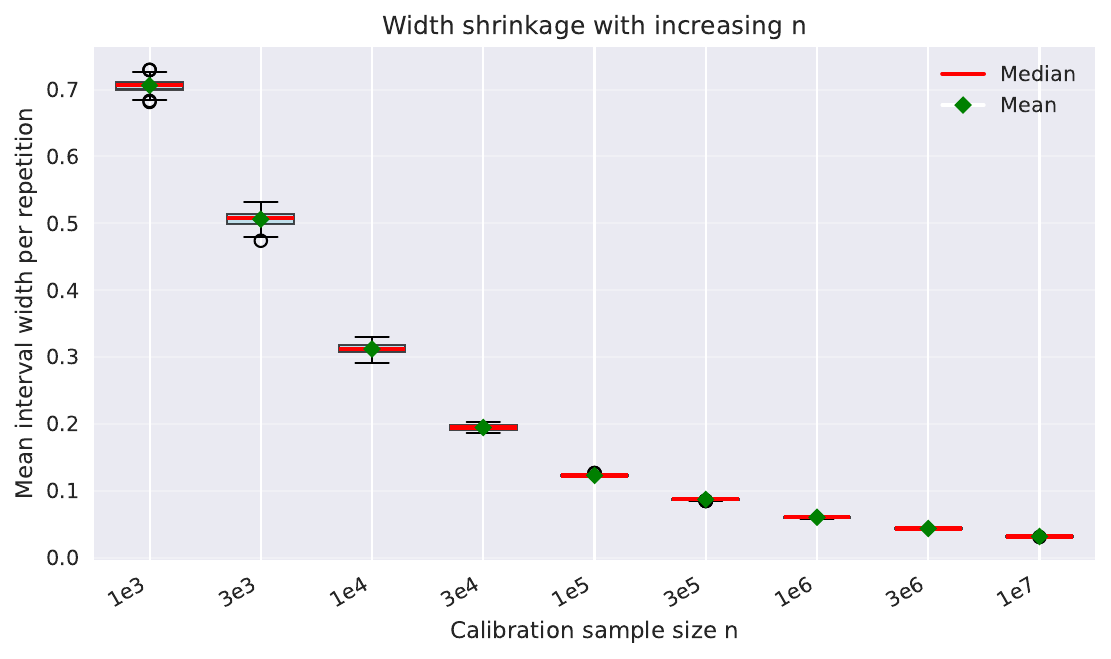} 
  
        \label{fig:thm52_box_rate_a}
    \end{subfigure}
    \hfill 
    \begin{subfigure}[b]{0.45\textwidth}
        \centering
        \includegraphics[width=\textwidth]{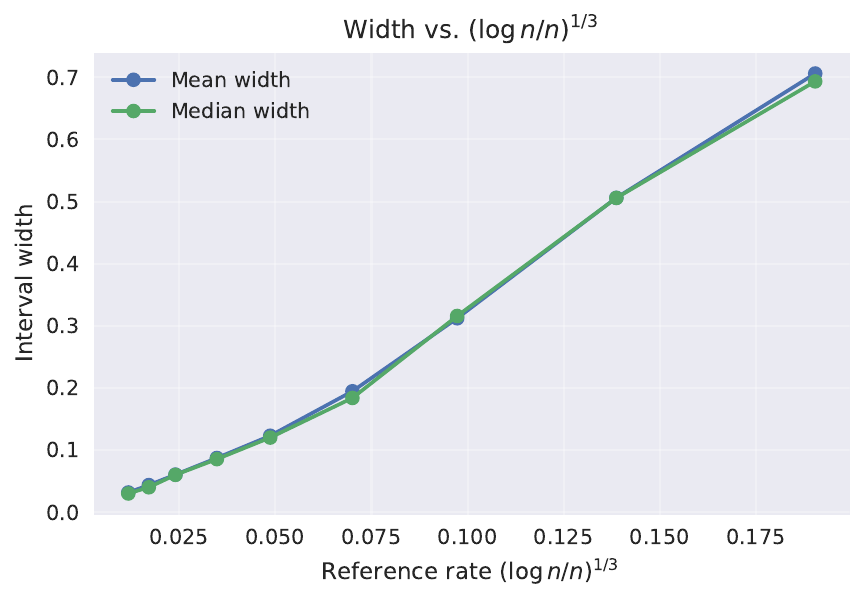} 
        
        \label{fig:thm52_box_rate_2}
    \end{subfigure}
    
    \caption{Demonstration of width shrinkage. For implementation details see Appendix~\ref{app:verification-consistency-prediction-setup}.}
    \label{fig:thm52_box_rate}
\end{figure}

\subsection{Supporting lemmas}

As shown in the proof sketch, Theorem~\ref{thm:consistency-prediction} relies on three supporting lemmas.  We present these lemmas here to illustrate the key arguments needed to establish the theorem.

\begin{lemma}[Sample size in each interval]\label{lem:sample-size-interval}
Let $\nu$ be a $\lambda$-regular distribution over $[0,1]$ and let $[a_\iteratorInterval,b_\iteratorInterval], \iteratorInterval \in [\numInterval]$ are be intervals such that $|b_\iteratorInterval - a_\iteratorInterval| \geq c$ for all $\iteratorInterval \in [\numInterval]$. For any $n \in Z_+$, let $z_1,\dots,z_n$ be i.i.d. samples from $\nu$, then with probability at least 
$1-J\exp\Big(-\lambda cn/8\Big)$
, we have that
\begin{align*}
    \sumin \mathbbm{1}(z_i \in [a_j,b_j]) \geq \lambda cn/2
\end{align*}
holds for all $j \in [J]$.
\end{lemma}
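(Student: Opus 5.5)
The plan is to fix one index $j$, bound the probability that interval $[a_j,b_j]$ receives too few samples using a multiplicative Chernoff bound, and then take a union bound over the $J$ intervals.

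\textbf{Mass of each interval.} Fix $j\in[J]$ and let $q_j := \PP_{z\sim\nu}(z\in[a_j,b_j])$. The interval $[a_j,b_j]\cap[0,1]$ has Lebesgue measure at least $c$; this is implicit when $[a_j,b_j]\subseteq[0,1]$, as in Algorithm~\ref{alg:calibration_population}, and I will assume it. By $\lambda$-regularity, $q_j \ge \lambda\,\mu_B([a_j,b_j]) \ge \lambda c$.

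\textbf{Chernoff bound for one interval.} The count $S_j:=\sumin \mathbbm{1}(z_i\in[a_j,b_j])$ is $\mathrm{Binomial}(n,q_j)$ with mean $nq_j\ge \lambda c n$. The multiplicative Chernoff lower-tail bound $\PP(S_j\le (1-\eta)\,\E S_j)\le \exp(-\eta^2\E S_j/2)$ with $\eta=1/2$ gives
\begin{align*}
\PP\big(S_j < \lambda c n/2\big)\le \PP\big(S_j < nq_j/2\big)\le \exp(-nq_j/8)\le \exp(-\lambda c n/8).
\end{align*}
The first inequality uses $nq_j/2\ge \lambda cn/2$. The last uses that the Chernoff exponent is monotone in the mean. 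To be safe about monotonicity, I can instead reduce to the worst case $q_j=\lambda c$ through a coupling: thin each success independently with probability $\lambda c/q_j$, producing $S_j'\le S_j$ with $S_j'\sim\mathrm{Binomial}(n,\lambda c)$, and apply Chernoff to $S_j'$ directly.

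\textbf{Union bound.} Summing the failure probabilities over the $J$ intervals shows that all counts are at least $\lambda cn/2$ with probability at least $1-J\exp(-\lambda cn/8)$.

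The only delicate step is matching the constant $1/8$ exactly. This is achieved by taking the Chernoff deviation $\eta=1/2$ around a mean lower-bounded by $\lambda cn$, and then handling the fact that the true mean $nq_j$ may exceed $\lambda c n$, either by monotonicity of the bound in the mean or by the thinning coupling above.
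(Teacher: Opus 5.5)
Your proposal is correct and matches the paper's proof: both lower-bound each interval's mass by $\lambda c$ via regularity, apply the multiplicative Chernoff lower tail with deviation $1/2$, and take a union bound over the $J$ intervals. The thinning coupling is unnecessary, since $\exp(-nq_j/8)\le\exp(-\lambda cn/8)$ follows directly from $q_j\ge\lambda c$; your note that $[a_j,b_j]\subseteq[0,1]$ is needed makes explicit an assumption the paper's proof uses implicitly.
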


\begin{proofsketch}
Write the sample count in each interval as a sum of Bernoulli variables and lower-bound their expectations using the $\lambda$-regularity condition. The result follows from a lower-tail Chernoff bound. The full proof can be found in 
Appendix~\ref{proof_lem:sample-size-interval}.
\end{proofsketch}

\begin{lemma}[Partition of bounded conditional mean]\label{lem:dist-regularity}
For any $\epsilon, \delta > 0$, and any distribution $\nu$ over $[0,1]$, there exist disjoint intervals $[a_i,b_i], i \in [L]$ such that:
\begin{enumerate}
    \item \emph{(Well-centered conditional means)}  
    The conditional mean of $Z$ on $[a_i,b_i]$ lies away from the interval boundaries by at least $\epsilon$ (except for the endpoints $0$ or $1$). That is, for any $i \in [L]$, 
    \begin{align}\label{eq:regular-condition}
        a_i + \epsilon \cdot \mathbbm{1}(a_i \neq 0) \leq \E_{Z \sim \nu}[Z \mid Z \in [a_i,b_i]] \leq b_i - \epsilon \cdot \mathbbm{1}(b_i \neq 1).
    \end{align}
    \item 
    \emph{(Coverage guarantee)}  
    The union of these \titleShorts covers all but a $\delta$-fraction of the probability mass. That is, \begin{align}\label{eq:regular-condition-coverage}
        \PP_{Z \sim \nu}\Big(Z \notin \cup_{i \in [L]} [a_i,b_i]\Big) \leq \delta.
    \end{align}
    \item \emph{(Bounded interval lengths)}  
    Each interval has a bounded length. That is, for any $i \in [L]$,
    \begin{align}\label{eq:regular-condition-length}
        4\epsilon \leq \Big|b_i - a_i\Big| \leq 56 \epsilon \cdot \log(3/\delta).
    \end{align} 
\end{enumerate}
\end{lemma}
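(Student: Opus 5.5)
We would construct the intervals by a left-to-right greedy sweep of $[0,1]$, cutting off intervals of length between $4\epsilon$ and $L:=56\epsilon\log(3/\delta)$ that satisfy the well-centering condition \eqref{eq:regular-condition}. Whenever no such interval exists, we discard a short piece. The whole difficulty is to show that the discarded pieces carry total $\nu$-mass at most $\delta$.

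\textbf{Only one side can fail.} Fix a current left endpoint $a$ and consider $m(b):=\E_{Z\sim\nu}[Z\mid Z\in[a,b]]$ for $b\ge a+4\epsilon$. The two conditions in \eqref{eq:regular-condition} cannot fail simultaneously once $b-a\ge 2\epsilon$. Indeed, $m(b)<a+\epsilon$ and $m(b)>b-\epsilon$ together force $b-a<2\epsilon$. So for each candidate $b$, either the interval is acceptable, or it is \emph{left-heavy} ($m(b)<a+\epsilon$), or it is \emph{right-heavy} ($m(b)>b-\epsilon$).

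\textbf{Sweep rule.} Starting from $a=0$, scan $b$ over $[a+4\epsilon,\min(a+L,1)]$.
\begin{itemize}
\item If some $b$ is acceptable, take the first such $b$, record $[a,b]$, and continue from just after $b$.
\item If every $b$ is left-heavy, discard $[a,a+2\epsilon)$ and restart at $a+2\epsilon$.
\item If the status switches from left-heavy to right-heavy as $b$ grows, use the jump of $m$ at an atom, together with right-continuity of $m$, to locate an acceptable $b$ in between, or else shrink slightly. The goal is to argue that a switch with no acceptable point in between is impossible unless an atom sits exactly at $b$. In that case closed intervals include the atom and we use it.
\end{itemize}
Near $1$ the indicator $\mathbbm{1}(b_i\ne 1)$ relaxes the right condition, so the final interval can always be closed up to $1$.

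\textbf{Mass accounting (expected main obstacle).} This is where the factor $\log(3/\delta)$ must come from. If all intervals $[a,b]$ with $b\le a+L$ are left-heavy, then applying Markov's inequality to $Z-a$ conditional on $[a,a+L]$ gives
\[
\nu\bigl((a+t,\,a+L]\bigr)\le \tfrac{\epsilon}{t}\,\nu\bigl([a,a+L]\bigr).
\]
Applying this at dyadic scales $t=2^k\epsilon$ yields geometric decay of mass away from $a$. Each discard of $[a,a+2\epsilon)$ is followed by a region whose mass is comparable to the discarded mass. We would therefore charge each discarded piece to the mass lying within distance $L$ to its right, which must eventually fall into an accepted interval.

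The choice $L\asymp\epsilon\log(3/\delta)$ should make a run of consecutive failures over length $L$ shrink the relevant mass by a factor $e^{-L/(c\epsilon)}\le\delta/3$. Hence discarded mass is at most a $\delta/3$ fraction of the mass charged to it, plus analogous contributions from right-heavy failures and the boundary piece. We expect the constant $56$ to come out of tracking these three contributions of $\delta/3$ each.

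The step we expect to be hardest is making this charging argument rigorous with atoms and with the restart rule, so that no mass is charged twice. We would first try to formalize it as a potential-function argument: the potential is $\nu$ of the unscanned part, and we show that every discard is preceded or followed, within length $L$, by accepted mass at least $(3/\delta-1)$ times larger.
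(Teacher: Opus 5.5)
\textbf{Gap: the left-heavy branch of your sweep and the charging argument built on it.}

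Suppose every window $[a,b]$ with $b\le a+L$ is left-heavy. Then $\E[Z-a\mid Z\in[a,a+L]]<\epsilon$, and Markov's inequality at $t=2\epsilon$ gives $\nu([a+2\epsilon,a+L])<\tfrac12\,\nu([a,a+L])$. So the piece $[a,a+2\epsilon)$ you discard carries \emph{more than half} of the local mass. Left-heaviness means mass is piled up at the cut and thins out to the right of it. The discarded piece is therefore the heavy part, and there is no larger mass to its right to charge it to. Your potential-function claim (each discard is followed within $L$ by $(3/\delta-1)$ times more accepted mass) runs in the wrong direction.

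The underlying cause is the greedy commitment to the first acceptable $b$. Mass concentrated just to the right of a cut can only be well-centered by an interval reaching back across that cut, and a one-pass left-to-right sweep forbids this. Concretely, let $\nu$ put mass $1/2$ uniformly on $[0,8\epsilon]$ and a point mass $1/2$ at $8.1\epsilon$.
\begin{enumerate}
\item From $a=0$, the first acceptable endpoint is $b=4\epsilon$: the left condition is vacuous and the mean is $2\epsilon\le 3\epsilon$.
\item From $a=4\epsilon$, the window $[4\epsilon,8\epsilon]$ has mean $6\epsilon\in[5\epsilon,7\epsilon]$ and is accepted.
\item From $a\approx 8\epsilon$, every window has mean $\approx 8.1\epsilon<a+\epsilon$. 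You discard $[a,a+2\epsilon)$ and lose mass $1/2$, which exceeds $\delta$ for every $\delta<1/2$.
\end{enumerate}
Yet splitting at $6\epsilon$ instead, with $[0,6\epsilon]$ and $[6\epsilon,10.2\epsilon]$ (conditional means $3\epsilon$ and $7.88\epsilon$), satisfies all three properties with nothing uncovered.

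\textbf{What is missing} is a mechanism that anchors intervals at local concentrations of mass and discards only in tails that decay geometrically \emph{away} from those anchors. The paper gets this from a discretization into $K=\lfloor 1/(4\epsilon)\rfloor$ bins and a directed graph. Bin $i$ points to a neighbor if its conditional mean lies on that side of its midpoint, or if its mass is at most three times the neighbor's. Maximal bidirectional runs act as anchors, each split into groups of two or three bins. The bins between consecutive runs form a leftward and a rightward chain. Because reverse edges are absent along these chains, mass drops by a factor larger than $3$ per bin moving away from the anchor. Each chain is attached to the adjacent anchor's end group and truncated after $\lceil\log(3/\delta)\rceil$ bins, losing at most $3^{-\lceil\log(3/\delta)\rceil}\le\delta/3$.

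\textbf{Two further problems.}
\begin{itemize}
\item The $\log(3/\delta)$ length bound comes from this engineered factor-$3$ decay. Markov applied to the conditional-mean condition only yields decay like $\epsilon/t$, i.e.\ a factor of order $1/\log(3/\delta)$ at distance $L$, not $\delta/3$. So your accounting would not produce the stated length even where charging goes the right way.
\item Your sweep has no branch for windows that are right-heavy throughout $[a+4\epsilon,a+L]$. Since right-heaviness can never revert to left-heaviness as $b$ grows, this case does occur (e.g.\ under a steeply increasing density) and must be handled too.
\end{itemize}
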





\begin{proof}[Proof Sketch]
We provide a constructive proof. Discretize $[0,1]$ into $K = \lfloor1/(4\epsilon)\rfloor$ subintervals of length $4\epsilon$ and form a directed graph $G=(V,E)$ on $V = [K]$. We draw a directed edge $i \to j$ ($|i-j|=1$) if either the conditional mean of $Z$ in bin $i$ $\E[Z \mid Z \in B_i]$ is closer to the neighboring endpoint on the side of bin $j$, or the probability mass of $Z$ in bin $i$ is not substantially greater than that of bin $j$ (i.e., $\PP(Z\in B_i)\le 3\PP(Z\in B_j)$). By construction, every vertex has at least one outgoing edge, and there exists  bidirectional edges. 
Group consecutive bidirectional edges into maximal blocks. Partition each block into groups of two or three vertices. Notice that the edges between blocks split into a leftward chain followed by a rightward chain. Attach them to the first and last groups of the neighboring blocks, truncated at a radius of $\lceil\log(3/\delta)\rceil$. This would give the desired intervals.
We then show these intervals satisfy the stated properties. 
The boundedness of widths follows from truncation of the chain. 
The conditional mean property comes from the edge construction and total expectation. Finally, outside these intervals, the absence of reverse edges implies exponential mass decay, bounding total uncovered probability mass by $\delta$. For the full proof see
Appendix~\ref{proof_lem:dist-regularity}.
\end{proof}

\begin{lemma}[Bounds under conditional distribution shift]\label{lem:conditional-dist-shift}
Let $U,V$ be random variables jointly over $[0,1]$, such that $\E[|U-V|] \leq \epsilon > 0$. Then for any $\delta > 0$, integer $L$ and intervals $[a_i,b_i], i = 1,\dots, L$ with disjoint interiors such that $\PP\Big(V \in \cup_{j \in [L]} [a_j,b_j]\Big) > 1- \delta$, 
there exists a subset $\{i_1,\dots,i_\numInterval\}$ of $L$ such that
\begin{enumerate}
    \item \emph{(Conditional distribution on selected intervals)} Conditioning on a selected interval, the discrepancy of the expected value between $U$ and $V$ is controlled by $2\epsilon/\delta$.
 That is, for any $\iteratorInterval \in [\numInterval]$, 
    \begin{align}\label{eq:distribution-shift}
        \Big|\E\Big[U\mid V \in [a_{i_\iteratorInterval},b_{i_\iteratorInterval}]\Big] - \E\Big[V \mid V \in [a_{i_\iteratorInterval},b_{i_\iteratorInterval}]\Big]\Big| \leq 2\epsilon/\delta.
    \end{align}
    \item \emph{(Coverage guarantee)}
    The intervals not selected carry at most a $\delta$-fraction of the probability mass. That is,
\begin{align}\label{eq:distribution-shift-coverage}
        \PP\Big(V \in \cup_{\iteratorInterval \in [L] \setminus \{i_1,\dots,i_\numInterval\}} [a_\iteratorInterval,b_\iteratorInterval]\Big) \leq \delta.
    \end{align}
    Consequently, the union of the selected intervals has probability mass at least $1-2\delta$.
\end{enumerate}
\end{lemma}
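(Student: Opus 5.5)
The plan is to select the intervals by a Markov-type argument on the per-interval discrepancy. For each $i\in[L]$ write $I_i=[a_i,b_i]$ and $q_i=\PP(V\in I_i)$. For every $i$ with $q_i>0$ define the local error
\[
e_i := \E\bigl[|U-V| \,\big|\, V\in I_i\bigr].
\]
Because the intervals have disjoint interiors, they overlap at most at endpoints. If $V$ has atoms at shared endpoints, I will first assign each shared endpoint to a single interval so that the events $\{V\in I_i\}$ become disjoint up to this bookkeeping. I expect this to be the only technical nuisance; under $\lambda$-regularity-type continuity the overlaps have probability zero anyway. With that convention,
\[
\sum_i q_i e_i=\E\bigl[|U-V|\mathbbm{1}(V\in\cup_i I_i)\bigr]\le \E|U-V|\le\epsilon .
\]
Jensen's inequality then gives, for each $i$,
\[
\bigl|\E[U\mid V\in I_i]-\E[V\mid V\in I_i]\bigr|\le e_i .
\]

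Next I select the good indices $S=\{i: e_i\le 2\epsilon/\delta\}$, and these are the $i_1,\dots,i_J$. Item 1 then follows immediately from the Jensen bound. For item 2, the unselected intervals satisfy $e_i>2\epsilon/\delta$, so
\[
\sum_{i\notin S} q_i \;<\; \frac{\delta}{2\epsilon}\sum_{i\notin S}q_ie_i \;\le\; \frac{\delta}{2\epsilon}\,\epsilon=\delta/2\le\delta .
\]
This is Markov's inequality with respect to the weights $q_i$. Indices with $q_i=0$ can be placed in $S$ or discarded freely, since they contribute no mass.

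Finally, the ``consequently'' clause is a union bound. We have
\[
\PP\bigl(V\in\cup_{i\in S}I_i\bigr)\ge \PP\bigl(V\in\cup_i I_i\bigr)-\PP\bigl(V\in\cup_{i\notin S}I_i\bigr)>1-\delta-\delta=1-2\delta .
\]
The main obstacle is only the endpoint-overlap accounting in the first paragraph. If atoms at shared endpoints are not excluded, I would instead work with half-open versions of the intervals when computing $q_i$ and $e_i$. This yields a genuine partition and leaves the union bound unchanged. The constant $2$ in $2\epsilon/\delta$ even leaves slack, since the argument only needs $\epsilon/\delta$.
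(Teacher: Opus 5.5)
Your core argument (Jensen/triangle inequality on each interval, then a Markov bound over the interval masses $q_i$) is the same as the paper's. The endpoint bookkeeping, however, breaks item 1.

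Once you reassign shared endpoints, or pass to half-open intervals, your $e_i$ and your selection rule refer to the modified events $\{V\in I_i'\}$. Item 1, by contrast, concerns $\E[U-V\mid V\in[a_i,b_i]]$ on the closed intervals. Its use in Theorem~\ref{thm:consistency-prediction} also needs closed intervals, since Algorithm~\ref{alg:calibration_population} tests closed intervals. The two conditional means differ when $V$ has an atom at a shared endpoint, and your selected set can then violate item 1.

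For instance, take the intervals $[0,\tfrac12]$ and $[\tfrac12,1]$ with $\delta=0.1$. Let $V$ put mass $0.01$ at $\tfrac12$, mass $0.02$ uniformly on $[0,\tfrac12)$, and mass $0.97$ uniformly on $(\tfrac12,1]$. Let $U=V+0.1$ on the atom and $U=V$ elsewhere, so $\epsilon=0.001$ and $2\epsilon/\delta=0.02$. Under the convention $[a_i,b_i)$, the first interval has half-open discrepancy $0$ and you select it. Yet $\E[U-V\mid V\in[0,\tfrac12]]=0.001/0.03>0.02$.

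You also cannot dismiss atoms. The lemma assumes nothing about the law of $V$. Moreover, $\lambda$-regularity is only a lower bound $\PP(Z\in A)\ge\lambda\mu_B(A)$ and does not rule out atoms; a piecewise-constant predictor such as the decision tree in the experiments makes $p(X)$ atomic.

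The repair is to keep the closed intervals throughout. Intervals with disjoint interiors meet only at endpoints, so each point lies in at most two of them and $\sum_i \mathbbm{1}(V\in[a_i,b_i])\le 2$. With $e_i$ computed on the closed intervals, this gives $\sum_i q_ie_i\le \E\bigl[|U-V|\sum_i\mathbbm{1}(V\in[a_i,b_i])\bigr]\le 2\epsilon$. Markov then gives $\sum_{i\notin S}q_i<\frac{\delta}{2\epsilon}\cdot 2\epsilon=\delta$.

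This is exactly the paper's proof. It also shows that the factor $2$ in $2\epsilon/\delta$ is not slack: it is precisely what absorbs the double counting at shared endpoints. Your sharper bound $\delta/2$ holds only when those endpoints carry no mass.
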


\begin{proofsketch}
By applying the triangle inequality to each interval $w$ violating Eq.~\eqref{eq:distribution-shift}, the conditional mean absolute deviation satisfies $\mathbb{E}[\vert{}U - V\vert{} \mid V \in [a_w, b_w]] > 2\epsilon/\delta$. Accounting for endpoint overlaps ($\sum_w \mathbbm{1}(V \in [a_w, b_w]) \leq 2$), we have $2\epsilon \ge \sum_w \mathbb{E}[|U-V| \mathbbm{1}(V \in [a_w, b_w])] > \mathbb{P}(V \in \bigcup_w [a_w, b_w]) \cdot (2\epsilon/\delta)$, yielding total violating mass strictly less than $\delta$. The full proof is provided in Appendix~\ref{proof_lem:conditional-dist-shift}.
\end{proofsketch}

\begin{remark}[Extension to regression] 
\label{rmk:extension_regression}
If $p$ is a regression predictor intended to approximate the conditional mean
$
\mathbb{E}[Y\mid X=x],
$ Definition~\ref{def:picpi} becomes 
\[
\mathbb{E}\!\left[Y\mid p(X)\in I_j\right]\in I_j.
\]
A binary outcome $Y$ can be viewed as a special case where
\[
\mathbb{E}\!\left[Y\mid p(X)\in I_j\right]
=
\mathbb{P}\!\left(Y=1\mid p(X)\in I_j\right).
\]
For bounded outcomes where $Y\in [a,b]$ almost surely, the constructions in Section~\ref{sec:def-alg} still apply by replacing the empirical positive-outcome frequency with the empirical average of $Y$.
The finite-sample argument in \Cref{thm:consistency-prediction} holds via an affine rescaling of $Y$ and $p(X)$ from $[0,1]$ to $[a,b]$. For unbounded outcomes, the definition remains valid whenever the relevant
conditional expectations exist, but the distribution-free concentration
guarantees developed here do not apply without additional tail or moment assumptions. 
\end{remark}

\section{Inference via \titleShorts}\label{sec:tasks}

We now show how to use \titleShorts as building blocks for prediction sets and inference in two tasks: interval prediction and label set prediction.

\subsection{Interval prediction for probabilities}\label{sec:task_1}

In this section we show how \titleShorts can be used to construct intervals for probabilistic prediction with coverage guarantees.
Assume that we are given a prediction model $p$ and \titleShorts $\intervalset_p$.
Define functions $l,u: [0,1] \to [0,1]$ such that the interval $[l(z),u(z)]$ represents a \titleShort that contains $z$. 
In other words, $[l(z),u(z)]\in\intervalset_p$ and $z \in [l(z) ,u(z)]$.
If there is no such interval, then let $[l(z),u(z)] = [0,1]$. 
Define the expansion operator $E^\epsilon$ as follows: for any set $A \subset [0,1]$, $E^\epsilon(A) = \{b \in [0,1]: d(b,A) \leq \epsilon\}$ where $d$ is the Euclidean distance. 
For $x\in\X$, we obtain a prediction sets based on \titleShorts $\intervalset_p$ by:
\begin{align}\label{eq:task-predicting_true_prob}
    C_{\tp}(x;\epsilon) = E^\epsilon\Big(\Big[l(p(x)),u(p(x))\Big]\Big).
\end{align}
This can be viewed as an $\epsilon$-expanded version of a \titleShort evaluated at $p(x)$.

The following theorem establishes on how well $C_{\tp}(x;\epsilon)$ captures the true conditional probability $\PP(Y=1\mid X=x) $. Specifically, the coverage error is upper-bounded by the fluctuation of true probabilities within each interval and the uncovered prediction mass. By Lemma~\ref{lem:dist-regularity} the second term is at most $\delta$.



\begin{theorem}[Marginal coverage of the true probability]\label{thm:tp}
Let $p^*(x)=\PP(Y=1\mid X=x)$ denote the true probability that $Y=1$
given $x$. Assume we have disjoint \titleShorts
$\{\interval_j\}_{j=1}^{\numInterval}$, and write
$m_j=\PP(p(X)\in\interval_j)$. For any $\epsilon>0$, it holds that
\begin{align*}
  \PP\Big(p^*(X)\notin C_{\tp}(X;\epsilon)\Big) \leq\epsilon^{-2}
 \underbrace{\sum_{j:m_j>0}m_j
 \Var\Big(p^*(X)\mid p(X)\in\interval_j\Big)}_{\text{``in-bin'' variation}}
 +\underbrace{\Big(1-\sum_{j=1}^{\numInterval}m_j\Big)}_{
 \text{predictions not covered by }\intervalset_p}.
\end{align*}
\end{theorem}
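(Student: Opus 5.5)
Decompose the event $\{p^*(X)\notin C_{\tp}(X;\epsilon)\}$ according to which interval, if any, contains $p(X)$. The intervals $\{\interval_j\}$ are disjoint, so we may take $[l(z),u(z)]=\interval_j$ for every $z\in\interval_j$, and $[l(z),u(z)]=[0,1]$ when $z\notin\cup_j\interval_j$. By the law of total probability,
\begin{align*}
\PP\big(p^*(X)\notin C_{\tp}(X;\epsilon)\big)
=\sum_{j:m_j>0} m_j\,\PP\big(p^*(X)\notin E^\epsilon(\interval_j)\mid p(X)\in\interval_j\big)
+\PP\big(p^*(X)\notin C_{\tp}(X;\epsilon),\, p(X)\notin \textstyle\cup_j\interval_j\big).
\end{align*}
On the last event $C_{\tp}=E^\epsilon([0,1])=[0,1]$, so that probability is zero. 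If one prefers not to use this, it is trivially at most $\PP(p(X)\notin\cup_j\interval_j)=1-\sum_j m_j$. In either case it is bounded by the second term of the claimed bound.

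For the in-bin term, fix $j$ with $m_j>0$ and set $\mu_j:=\E[p^*(X)\mid p(X)\in\interval_j]$. By the tower property,
\[
\mu_j=\E[\PP(Y=1\mid X)\mid p(X)\in\interval_j]=\PP(Y=1\mid p(X)\in\interval_j),
\]
because the event $\{p(X)\in\interval_j\}$ is $X$-measurable. The \titleShort property (Definition~\ref{def:picpi}) then gives $\mu_j\in\interval_j$. Consequently, if $p^*(X)\notin E^\epsilon(\interval_j)$, then $d(p^*(X),\interval_j)>\epsilon$, and so $|p^*(X)-\mu_j|>\epsilon$.

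Applying Chebyshev's inequality under the conditional law given $p(X)\in\interval_j$ yields
\[
\PP\big(p^*(X)\notin E^\epsilon(\interval_j)\mid p(X)\in\interval_j\big)\le \epsilon^{-2}\Var\big(p^*(X)\mid p(X)\in\interval_j\big).
\]
Multiplying by $m_j$ and summing over $j$ produces the in-bin variation term, which completes the bound.

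The main point requiring care is the identification $\mu_j=\PP(Y=1\mid p(X)\in\interval_j)$. This is what allows the self-consistency condition to centre the Chebyshev bound at a point inside $\interval_j$. It also relies on the disjointness assumption, which makes $l,u$ well defined, so that the decomposition over $j$ is a genuine partition of the covered region. Measurability of $p$ is needed so that conditioning on $\{p(X)\in\interval_j\}$ is legitimate. The remaining steps are routine.
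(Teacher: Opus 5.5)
Your proof is correct and follows essentially the same route as the paper: you partition on the events $\{p(X)\in I_j\}$ and the uncovered event, use the tower property together with the \titleShort condition to place $\mu_j=\E[p^*(X)\mid p(X)\in I_j]$ inside $I_j$, and then apply Chebyshev's inequality conditionally on each $\{p(X)\in I_j\}$. Your side observation that the uncovered piece actually has probability zero is also valid, since there $C_{\tp}=E^\epsilon([0,1])=[0,1]\ni p^*(X)$; this is slightly sharper than the paper, which simply bounds that piece by $1-\sum_j m_j$.
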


The proof can be found in Appendix~\ref{proof_thm:tp}.
When the predictor and interval collection are constructed from data that is independent of the test observation, the theorem applies conditionally on those data whenever the resulting intervals satisfy its hypotheses.

The bound in Theorem~\ref{thm:tp} can be sharpened when additional structure is available. The following remark shows that if the predictor $p$ approximates the true probability $p^*$ uniformly or satisfies a Lipschitz-type condition, the in-bin variation term admits explicit upper bounds in terms of the interval lengths.



\begin{remark}[Corollaries under smoothness conditions]
Under the assumptions of Theorem~\ref{thm:tp}, suppose first that
$\|p^*-p\|_\infty\leq\eta$ for some $\eta\geq0$.
For every positive-mass interval $\interval_j=[a_j,b_j]$, conditioning
on $p(X)\in\interval_j$ gives
\begin{align*}
 p^*(X)\in[\max\{0,a_j-\eta\},\min\{1,b_j+\eta\}]
 \qquad\text{almost surely}.
\end{align*}
The length of this range is at most $|\interval_j|+2\eta$.
Since a random variable supported on an interval of length $d$ has
variance at most $d^2/4$, Theorem~\ref{thm:tp} yields
\begin{align*}
 \PP\Big(p^*(X)\notin C_{\tp}(X;\epsilon)\Big) &~\leq\epsilon^{-2}\sum_{j:m_j>0}m_j
 \Var\Big(p^*(X)\mid p(X)\in\interval_j\Big)
 +\Big(1-\sum_{j=1}^{\numInterval}m_j\Big)\\
 &~\leq\frac{1}{4\epsilon^2}\sum_{j:m_j>0}
 m_j(|\interval_j|+2\eta)^2
 +\Big(1-\sum_{j=1}^{\numInterval}m_j\Big).
\end{align*}
Here $\epsilon>0$ is the expansion radius in
Eq.~\eqref{eq:task-predicting_true_prob}.

Alternatively, suppose that
$|p^*(x)-p^*(x')|\leq L\cdot|p(x)-p(x')|$ for all $x,x'\in\X$.
When $p(x),p(x')\in\interval_j$, their difference is at most
$|\interval_j|$, so the conditional range of $p^*(X)$ has length
at most $L|\interval_j|$. Applying the same variance bound gives
\begin{align*}
 \PP\Big(p^*(X)\notin C_{\tp}(X;\epsilon)\Big)&~\leq\epsilon^{-2}\sum_{j:m_j>0}m_j
 \Var\Big(p^*(X)\mid p(X)\in\interval_j\Big)
 +\Big(1-\sum_{j=1}^{\numInterval}m_j\Big)\\
 &~\leq\frac{L^2}{4\epsilon^2}\sum_{j:m_j>0}
 m_j|\interval_j|^2
 +\Big(1-\sum_{j=1}^{\numInterval}m_j\Big).
\end{align*}
\end{remark}

\subsection{Label set prediction for multi-class classification}\label{sec:task_2}


We now use \titleShorts for label set prediction in classification with $\numClass$ classes. In particular, we establish three guarantees: (i) an exact finite-sample certificate for class-conditional coverage, providing the tightest worst-case miscoverage bound available from the interval information (Theorem~\ref{thm:mc_error}); (ii) population size-optimality, showing that our interval selection minimizes expected label-set size among deterministic unions of the
given classwise intervals satisfying the prescribed class-conditional coverage guarantees (Theorem~\ref{thm:mc_optimality}); and (iii) a finite-sample same-sample guarantee, showing that the empirically feasible procedure retains valid class-conditional coverage when the same calibration data are used to construct the intervals and estimate their masses (Theorem~\ref{thm:mc_same_sample}).

Let $p: \X \rightarrow [0,1]^\numClass$ be a prediction model that outputs class probabilities, where its $\iteratorClass$-th entry 
$p_\iteratorClass(x)$ predicts ${\PP}(Y(X)=\iteratorClass \mid X=x)$.
For each class $\iteratorClass\in[\numClass]$, we construct disjoint \titleShorts
\[
\intervalset_{p_\iteratorClass}
=
\big\{[a_{\iteratorClass,j},b_{\iteratorClass,j}]\big\}_{j=1}^{J_\iteratorClass}
\]
for the one-vs-rest problem $1\{Y(X)=\iteratorClass\}$ using the interval-construction procedure in Section~\ref{sec:def-alg}.
Suppose we select a subset of interval indices $S_\iteratorClass\subseteq [J_\iteratorClass]$ for class $g$ (by some selection rules of our choice), and denote their collection as $S=(S_1,\ldots,S_\numClass)$.
For any given $S$, we define the classwise acceptance region as the union of intervals in $S_\iteratorClass$:
\[
A_\iteratorClass(S_\iteratorClass)
:=
\bigcup_{j\in S_\iteratorClass}[a_{\iteratorClass,j},b_{\iteratorClass,j}].
\]
Given input $x$, we predict a set of labels which collects the classes whose predicted probabilities fall within their respective acceptance regions:
\begin{align}\label{eq:mc_acceptance}
C_{\mc}(x;S)
=
\Bigl\{\iteratorClass\in[\numClass]:
p_\iteratorClass(x)\in A_\iteratorClass(S_\iteratorClass)\Bigr\}.
\end{align}
For example, $S_\iteratorClass$ can be constructed by a simple upper-endpoint threshold rule with threshold $\tau$:
\begin{align}\label{eq:classification_thresholding_rule}
S_\iteratorClass(\mcThreshold)
:=
\Bigl\{j\in[J_\iteratorClass]: b_{\iteratorClass,j}\ge \mcThreshold\Bigr\},
\qquad
S(\mcThreshold)
:=
\bigl(S_1(\mcThreshold),\ldots,S_\numClass(\mcThreshold)\bigr),
\end{align}
so that
\begin{align}\label{eq:task-classification}
    C_{\mc}\bigl(x;S(\mcThreshold)\bigr)
    =
    \Bigl\{\iteratorClass\in[\numClass]:
    p_\iteratorClass(x)\in \bigcup_{j:\,b_{\iteratorClass,j}\ge \mcThreshold}
    [a_{\iteratorClass,j},b_{\iteratorClass,j}]
    \Bigr\}.
\end{align}

For notational convenience, define
\[
m_{\iteratorClass,j}
:=
\PP\bigl(p_\iteratorClass(X)\in [a_{\iteratorClass,j},b_{\iteratorClass,j}]\bigr),
\qquad
\pi_\iteratorClass
:=
\PP\bigl(Y(X)=\iteratorClass\bigr),
\]
\[
t_\iteratorClass
:=
1-\sum_{j=1}^{J_\iteratorClass} m_{\iteratorClass,j},
\qquad
\intervalMaxLen_{p_\iteratorClass}
:=
\max_{1\le j\le J_\iteratorClass}\bigl(b_{\iteratorClass,j}-a_{\iteratorClass,j}\bigr).
\]
Throughout, we assume $\pi_\iteratorClass>0$ for every class $\iteratorClass\in[\numClass]$.

The following theorem gives an exact class-conditional coverage certificate for any rule of the form $C_{\mc}(x;S)$.

\begin{theorem}[Class-conditional coverage of the true label]\label{thm:mc_error}
Fix a class $\iteratorClass\in[\numClass]$ and a subset $S_\iteratorClass\subseteq [J_\iteratorClass]$.
It holds that 
\begin{align}
\PP\Bigl(Y(X)\notin C_{\mc}(X;S)\mid Y(X)=\iteratorClass\Bigr)
\le
\Gamma_{\iteratorClass}(S_\iteratorClass),
\end{align}
where
\[
\Gamma_{\iteratorClass}(S_\iteratorClass)
:=
\min\Bigl\{
\Gamma_{\iteratorClass}^{\mathrm{up}}(S_\iteratorClass),
\Gamma_{\iteratorClass}^{\mathrm{low}}(S_\iteratorClass)
\Bigr\},
\]
and 
\[
\Gamma_{\iteratorClass}^{\mathrm{up}}(S_\iteratorClass)
:=
\frac{t_\iteratorClass+\sum_{j\notin S_\iteratorClass} m_{\iteratorClass,j} b_{\iteratorClass,j}}{\pi_\iteratorClass},
\qquad
\Gamma_{\iteratorClass}^{\mathrm{low}}(S_\iteratorClass)
:=
1-\frac{\sum_{j\in S_\iteratorClass} m_{\iteratorClass,j} a_{\iteratorClass,j}}{\pi_\iteratorClass},
\]

Moreover, this bound is exact in the following sense: for fixed
\[
\pi_\iteratorClass,\quad
t_\iteratorClass,\quad
\bigl\{m_{\iteratorClass,j},a_{\iteratorClass,j},b_{\iteratorClass,j}\bigr\}_{j=1}^{J_\iteratorClass}.
\]
the quantity $\Gamma_{\iteratorClass}(S_\iteratorClass)$ is the worst-case class-conditional miscoverage
over all one-vs-rest laws satisfying the \titleShorts definition
\[
\PP\Bigl(Y(X)=\iteratorClass \,\big|\, p_\iteratorClass(X)\in[a_{\iteratorClass,j},b_{\iteratorClass,j}]\Bigr)
\in [a_{\iteratorClass,j},b_{\iteratorClass,j}]
\quad \text{for all }j.
\]
\end{theorem}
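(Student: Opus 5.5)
Throughout, fix $g$ and write $Z=p_g(X)$, $I_j=[a_{g,j},b_{g,j}]$, $m_j=m_{g,j}$, and $q_j:=\PP(Y=g\mid Z\in I_j)\in I_j$ (the \titleShort property). Because the intervals in $\intervalset_{p_g}$ are disjoint, the events $\{Z\in I_j\}$ together with $\{Z\notin\cup_j I_j\}$ partition the sample space. Class $g$ is missed exactly when $Z\notin A_g(S_g)$, that is, when $Z$ lies in some $I_j$ with $j\notin S_g$ or in no interval. Hence
\[
\PP\bigl(Y\ne\text{covered},\,Y=g\bigr)=\sum_{j\notin S_g} m_j q_j+\PP\bigl(Y=g,\ Z\notin\textstyle\cup_j I_j\bigr).
\]
The upper bound follows from $q_j\le b_{g,j}$ and from bounding the last probability by $t_g$; dividing by $\pi_g$ gives $\Gamma^{\mathrm{up}}_g$. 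For the lower bound, write the complementary quantity as $\PP(\text{covered},Y=g)=\sum_{j\in S_g} m_j q_j\ge\sum_{j\in S_g}m_j a_{g,j}$. Then the miscoverage is $1-\PP(\text{covered},Y=g)/\pi_g\le\Gamma^{\mathrm{low}}_g$. Taking the minimum of the two bounds proves the inequality.

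For exactness, I would fix the data $(\pi_g,t_g,\{m_j,a_j,b_j\})$ and consider all admissible laws. A law is determined, for our purposes, by numbers $q_j\in I_j$ and $r:=\PP(Y=g,Z\notin\cup I_j)\in[0,t_g]$, subject to the constraint $\sum_j m_jq_j+r=\pi_g$. Conversely, any such $(q,r)$ is realizable: take $X$ so that $Z$ has the prescribed masses, and let $Y$ be conditionally Bernoulli with parameter $q_j$ on each bin and $r/t_g$ off the bins. So the worst-case miscoverage is the linear program of maximizing $\bigl(\sum_{j\notin S}m_jq_j+r\bigr)/\pi_g$ over the box $\prod_j I_j\times[0,t_g]$ intersected with the hyperplane $\sum_j m_jq_j+r=\pi_g$. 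This equals maximizing $1-\sum_{j\in S}m_jq_j/\pi_g$, i.e., minimizing $\sum_{j\in S}m_jq_j$.

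The main obstacle is showing that the LP optimum equals $\min\{\Gamma^{\mathrm{up}},\Gamma^{\mathrm{low}}\}$ exactly. The plan is a greedy argument:
\begin{itemize}
\item First try setting $q_j=a_j$ for all $j\in S$, which attains $\Gamma^{\mathrm{low}}$. This is feasible iff the remaining mass $\pi_g-\sum_{j\in S}m_ja_j$ can be supplied by the uncovered variables, i.e., lies in $\bigl[\sum_{j\notin S}m_ja_j,\ t_g+\sum_{j\notin S}m_jb_j\bigr]$.
\item If the remaining mass exceeds the upper end of that range, push all uncovered variables to their maxima ($q_j=b_j$ for $j\notin S$, $r=t_g$). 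Then raise the $q_j$ with $j\in S$ continuously until the hyperplane is hit, which attains $\Gamma^{\mathrm{up}}$. In this case $\Gamma^{\mathrm{up}}<\Gamma^{\mathrm{low}}$, so the minimum is attained.
\item The case where the remaining mass is below $\sum_{j\notin S}m_ja_j$ cannot occur under consistency of the data; it is excluded by the assumption that some admissible law exists, which I would state explicitly as an implicit hypothesis.
\end{itemize}
The continuity/intermediate-value step and the realizability of $(q,r)$, including the edge cases $t_g=0$ and $m_j=0$, are the routine checks.
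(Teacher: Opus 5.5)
Your proposal is correct and follows essentially the same route as the paper: the same partition identity for the miscoverage numerator, the same two endpoint bounds, and for exactness the same two cases (when $L\le U$, pin the retained conditional means at $a_j$ and fill the remainder from the dropped bins and the uncovered region; when $L>U$, saturate the dropped bins at $b_j$ and the uncovered region at $1$, then solve for the retained ones). One small point: reaching the hyperplane in your second bullet also needs the consistency condition $\pi_g\le t_g+\sum_j m_j b_j$, not only the lower condition you flag in the third bullet. The paper likewise takes both conditions from the tacit existence of an admissible law (``any admissible law must satisfy''), so your explicit hypothesis covers it.
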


For the proof see
Appendix~\ref{proof_thm:mc_error}.


\paragraph{Population-optimal label set.}
Ideally, we want our predicted label set to be as small as possible while still guaranteeing that the true label is caught within our error budget. To formalize this trade-off, let $\alpha = (\alpha_1, \ldots, \alpha_\numClass) \in [0,1]^\numClass$ denote a vector of target class-conditional miscoverage levels specified by the user.
For each class $\iteratorClass \in [\numClass]$, we find the optimal subset of prediction intervals to keep by minimizing the expected size of the prediction set, subject to the error guarantee. Formally, we define the population-optimal retained interval set as:
\begin{align}
S_\iteratorClass^\star(\alpha_\iteratorClass)
\in
\arg\min_{S\subseteq[J_\iteratorClass]}
\sum_{j\in S} m_{\iteratorClass,j}
\quad \text{subject to} \quad
\Gamma_{\iteratorClass}(S)\le \alpha_\iteratorClass.
\label{eq:mc_population_opt}
\end{align}

The resulting population-optimal label set simply collects all classes whose predicted probabilities fall into these optimally selected intervals:
\[
C_{\mc}^\star(x;\alpha)
:=
C_{\mc}\bigl(x;S^\star(\alpha)\bigr),
\qquad
S^\star(\alpha)
:=
\bigl(
S_1^\star(\alpha_1),\ldots,S_\numClass^\star(\alpha_\numClass)
\bigr).
\]
Because
\[
\Gamma_{\iteratorClass}(S)
=
\min\Bigl\{
\Gamma_{\iteratorClass}^{\mathrm{up}}(S),
\Gamma_{\iteratorClass}^{\mathrm{low}}(S)
\Bigr\},
\]
the feasible set in \eqref{eq:mc_population_opt} is the union of two simpler feasible sets.
Equivalently, one may solve
\begin{align}
S_\iteratorClass^{\mathrm{keep}}(\alpha_\iteratorClass)
\in
\arg\min_{S\subseteq[J_\iteratorClass]}
&\sum_{j\in S} m_{\iteratorClass,j}
\label{eq:mc_keep_problem}\\
\text{subject to}\quad
&\sum_{j\in S} a_{\iteratorClass,j}m_{\iteratorClass,j}
\ge
(1-\alpha_\iteratorClass)\pi_\iteratorClass,
\nonumber
\end{align}
and
\begin{align}
S_\iteratorClass^{\mathrm{drop}}(\alpha_\iteratorClass)
\in
\arg\min_{S\subseteq[J_\iteratorClass]}
&\sum_{j\in S} m_{\iteratorClass,j}
\label{eq:mc_drop_problem}\\
\text{subject to}\quad
&t_\iteratorClass+\sum_{j\notin S} m_{\iteratorClass,j} b_{\iteratorClass,j}
\le
\alpha_\iteratorClass \pi_\iteratorClass,
\nonumber
\end{align}
and then keep whichever feasible solution has the smaller objective value.

\begin{algorithm}[t] 
\caption{Constructing population-optimal label set} 
\label{alg:mc_population_optimal}
\begin{algorithmic}[1]
\Require Disjoint \titleShorts $\intervalset_{p_\iteratorClass}=\{[a_{\iteratorClass,j},b_{\iteratorClass,j}]\}_{j=1}^{J_\iteratorClass}$, population interval masses $\{m_{\iteratorClass,j}\}$, class priors $\{\pi_\iteratorClass\}$, and targets $\alpha=(\alpha_1,\ldots,\alpha_\numClass)$
\Ensure Population-optimal label set $C_{\mc}^\star(\cdot;\alpha)$
\For{$\iteratorClass=1$ to $\numClass$}
    \State Compute $t_\iteratorClass = 1-\sum_{j=1}^{J_\iteratorClass} m_{\iteratorClass,j}$.
    \State Solve the keep-side problem \eqref{eq:mc_keep_problem}; if infeasible, assign objective value $+\infty$.
    \State Solve the drop-side problem \eqref{eq:mc_drop_problem}; if infeasible, assign objective value $+\infty$.
    \State Let $S_\iteratorClass^\star(\alpha_\iteratorClass)$ be the feasible solution with smaller objective value.
\EndFor
\State \Return $C_{\mc}^\star(x;\alpha)=\Bigl\{\iteratorClass\in[\numClass]: p_\iteratorClass(x)\in \bigcup_{j\in S_\iteratorClass^\star(\alpha_\iteratorClass)} [a_{\iteratorClass,j},b_{\iteratorClass,j}]\Bigr\}$.
\end{algorithmic}
\end{algorithm}

\begin{theorem}[Size-optimality at fixed guaranteed coverage]\label{thm:mc_optimality}
Let $C_{\mc}^\star(x;\alpha)$ be defined by \eqref{eq:mc_population_opt}.
Then, for every class $\iteratorClass\in[\numClass]$,
\[
\PP\Bigl(Y(X)\notin C_{\mc}^\star(X;\alpha)\mid Y(X)=\iteratorClass\Bigr)
\le
\alpha_\iteratorClass.
\]
Moreover,
\[
\E\Bigl[\bigl|C_{\mc}^\star(X;\alpha)\bigr|\Bigr]
=
\min\Bigl\{
\E\bigl[|C_{\mc}(X;S)|\bigr]:
\Gamma_{\iteratorClass}(S_\iteratorClass)\le \alpha_\iteratorClass
\text{ for all }\iteratorClass\in[\numClass]
\Bigr\}.
\]
\end{theorem}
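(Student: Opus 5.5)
The proof splits into a coverage part and an optimality part; both reduce, class by class, to earlier statements.

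\emph{Coverage.} Fix a class $\iteratorClass$. By construction, $S^\star_\iteratorClass(\alpha_\iteratorClass)$ is feasible for \eqref{eq:mc_population_opt}, so $\Gamma_\iteratorClass(S^\star_\iteratorClass(\alpha_\iteratorClass))\le\alpha_\iteratorClass$. I will implicitly assume the problem is feasible; otherwise $C^\star_{\mc}$ is undefined. The event $\{Y(X)\notin C_{\mc}(X;S)\}\cap\{Y(X)=\iteratorClass\}$ equals $\{p_\iteratorClass(X)\notin A_\iteratorClass(S_\iteratorClass)\}\cap\{Y(X)=\iteratorClass\}$. This event depends only on $S_\iteratorClass$, not on the other classes' selections. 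Applying Theorem~\ref{thm:mc_error} with $S=S^\star(\alpha)$ therefore gives miscoverage at most $\Gamma_\iteratorClass(S^\star_\iteratorClass)\le\alpha_\iteratorClass$. I will also record the equivalence behind Algorithm~\ref{alg:mc_population_optimal}. Since $\Gamma_\iteratorClass=\min\{\Gamma^{\mathrm{up}}_\iteratorClass,\Gamma^{\mathrm{low}}_\iteratorClass\}$, the constraint $\Gamma_\iteratorClass(S)\le\alpha_\iteratorClass$ holds iff $\Gamma^{\mathrm{low}}_\iteratorClass(S)\le\alpha_\iteratorClass$ or $\Gamma^{\mathrm{up}}_\iteratorClass(S)\le\alpha_\iteratorClass$. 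After multiplying by $\pi_\iteratorClass>0$, these are exactly the constraints of \eqref{eq:mc_keep_problem} and \eqref{eq:mc_drop_problem}. The better of the two minimizers is thus a minimizer over the union of the two feasible sets.

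\emph{Size.} First write the expected size as a sum over classes:
\[
\E\bigl[|C_{\mc}(X;S)|\bigr]=\sum_{\iteratorClass=1}^{\numClass}\PP\bigl(p_\iteratorClass(X)\in A_\iteratorClass(S_\iteratorClass)\bigr).
\]
Next, since the intervals in $\intervalset_{p_\iteratorClass}$ are disjoint, each summand equals $\sum_{j\in S_\iteratorClass}m_{\iteratorClass,j}$. The resulting objective separates across classes. The constraint set $\{S:\Gamma_\iteratorClass(S_\iteratorClass)\le\alpha_\iteratorClass\ \forall\iteratorClass\}$ is a Cartesian product of per-class feasible sets. Hence the minimum of the sum equals the sum of the per-class minima, and it is attained by taking each $S_\iteratorClass$ to be a per-class minimizer $S^\star_\iteratorClass(\alpha_\iteratorClass)$. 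This gives the claimed equality. The finite number of subsets guarantees that the minima exist.

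\emph{Main obstacle.} The only delicate point is the disjointness in the size identity. If intervals share an endpoint, which can happen with closed intervals, the mass of the union could differ from the sum of the masses by the atom at the shared point. I would handle this by noting that ``disjoint'' is assumed in the hypotheses, so the masses add exactly. If only interiors are disjoint, I would instead assume $p_\iteratorClass(X)$ has no atoms at interval endpoints, for instance under $\lambda$-regularity with a continuous law. Everything else follows directly from Theorem~\ref{thm:mc_error} and the separable structure.
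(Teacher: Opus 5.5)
Your proposal is correct and follows the paper's proof essentially step for step. Coverage comes from applying Theorem~\ref{thm:mc_error} to the feasible $S^\star(\alpha)$, and size-optimality comes from writing $\E[|C_{\mc}(X;S)|]=\sum_{\iteratorClass}\sum_{j\in S_\iteratorClass}m_{\iteratorClass,j}$ via disjointness and noting that both the objective and the constraints separate across classes. Your extra remarks (the keep/drop equivalence and the shared-endpoint caveat for closed intervals) are not needed for the argument, but they are accurate, and the caveat rightly points out that the paper's ``disjoint'' hypothesis is doing real work.
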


For the proof see
Appendix~\ref{proof_thm:mc_optimality}.

\paragraph{Empirically feasible version of the algorithm.}
The population-optimal algorithm presented above is not directly implementable because the interval masses $m_{\iteratorClass,j}$ and the class priors $\pi_\iteratorClass$ are unknown.
To obtain a practical algorithm we thus replace them by one-sided confidence bounds computed from the same calibration set that is already used to construct the \titleShorts.
Let
\[
D_{\mathrm{cal}}=\bigl\{(X_k,Y_k)\bigr\}_{k=1}^{n_{\mathrm{cal}}}
\]
be the calibration set, and let
\[
\widehat{\intervalset}_{p_\iteratorClass}
=
\bigl\{
\widehat I_{\iteratorClass,j}
=
[\widehat a_{\iteratorClass,j},\widehat b_{\iteratorClass,j}]
\bigr\}_{j=1}^{\widehat J_\iteratorClass}
\]
be the resulting disjoint \titleShorts for class $\iteratorClass$.
Let $\mathcal{A}_{p_\iteratorClass}$ denote the finite candidate interval family searched by Algorithm~\ref{alg:mc_population_optimal} for class $\iteratorClass$, and define
\[
N_{\mathrm{cand}}
:=
\sum_{\iteratorClass=1}^{\numClass}
\bigl|\mathcal{A}_{p_\iteratorClass}\bigr|.
\]
For each selected interval, define the empirical mass
\[
\widehat m_{\iteratorClass,j}
:=
\frac{1}{n_{\mathrm{cal}}}
\sum_{k=1}^{n_{\mathrm{cal}}}
\mathbbm{1}\Bigl(
p_\iteratorClass(X_k)\in \widehat I_{\iteratorClass,j}
\Bigr),
\qquad
\widehat\pi_\iteratorClass
:=
\frac{1}{n_{\mathrm{cal}}}
\sum_{k=1}^{n_{\mathrm{cal}}}
\mathbbm{1}(Y_k=\iteratorClass).
\]
For confidence levels $\delta_m,\delta_\pi\in(0,1)$, let
\[
\varepsilon_m
:=
\sqrt{
\frac{\log(2N_{\mathrm{cand}}/\delta_m)}{2n_{\mathrm{cal}}}
},
\qquad
\varepsilon_\pi
:=
\sqrt{
\frac{\log(2\numClass/\delta_\pi)}{2n_{\mathrm{cal}}}
}.
\]
We then define
\[
\underline m_{\iteratorClass,j}
:=
\bigl(\widehat m_{\iteratorClass,j}-\varepsilon_m\bigr)_+,
\qquad
\underline\pi_\iteratorClass
:=
\bigl(\widehat\pi_\iteratorClass-\varepsilon_\pi\bigr)_+,
\qquad
\overline\pi_\iteratorClass
:=
\bigl(\widehat\pi_\iteratorClass+\varepsilon_\pi\bigr)\wedge 1.
\]
For any subset $S_\iteratorClass\subseteq [\widehat J_\iteratorClass]$, define
\[
\widehat\Gamma_{\iteratorClass}^{\mathrm{up}}(S_\iteratorClass)
:=
\begin{cases}
\displaystyle
\frac{
1-\sum_{j\in S_\iteratorClass}\underline m_{\iteratorClass,j}
-\sum_{j\notin S_\iteratorClass}(1-\widehat b_{\iteratorClass,j})\underline m_{\iteratorClass,j}
}{\underline\pi_\iteratorClass},
& \underline\pi_\iteratorClass>0,\\[3ex]
+\infty,
& \underline\pi_\iteratorClass=0,
\end{cases}
\]
\[
\widehat\Gamma_{\iteratorClass}^{\mathrm{low}}(S_\iteratorClass)
:=
1-
\frac{
\sum_{j\in S_\iteratorClass}\widehat a_{\iteratorClass,j}\underline m_{\iteratorClass,j}
}{\overline\pi_\iteratorClass},
\qquad
\widehat\Gamma_{\iteratorClass}(S_\iteratorClass)
:=
\min\Bigl\{
\widehat\Gamma_{\iteratorClass}^{\mathrm{up}}(S_\iteratorClass),
\widehat\Gamma_{\iteratorClass}^{\mathrm{low}}(S_\iteratorClass)
\Bigr\}.
\]
This yields the empirical relaxation
\[
\widehat S_\iteratorClass^\star(\alpha_\iteratorClass)
\in
\arg\min_{S\subseteq[\widehat J_\iteratorClass]}
\sum_{j\in S}\widehat m_{\iteratorClass,j}
\quad \text{subject to}\quad
\widehat\Gamma_{\iteratorClass}(S)\le \alpha_\iteratorClass,
\]
and the resulting feasible label set
\[
\widehat C_{\mc}^\star(x;\alpha)
:=
\Bigl\{\iteratorClass\in[\numClass]:
p_\iteratorClass(x)\in
\bigcup_{j\in \widehat S_\iteratorClass^\star(\alpha_\iteratorClass)}
\widehat I_{\iteratorClass,j}
\Bigr\}.
\]
The key point is that the same calibration sample may be used both to construct the intervals and to estimate their masses: because Algorithm~\ref{alg:mc_population_optimal} only searches over a fixed finite candidate family, uniform concentration over that family automatically controls the masses of the finally selected intervals as well.

\begin{theorem}[Fully feasible same-sample certificate]\label{thm:mc_same_sample}
Suppose Algorithm~\ref{alg:mc_population_optimal} is run for each class $\iteratorClass\in[\numClass]$ with classwise failure budgets summing to $\delta_{\mathrm{PI}}$, so that with probability at least $1-\delta_{\mathrm{PI}}$ all returned intervals in
\[
\widehat{\intervalset}_{p_\iteratorClass}
=
\bigl\{
\widehat I_{\iteratorClass,j}
=
[\widehat a_{\iteratorClass,j},\widehat b_{\iteratorClass,j}]
\bigr\}_{j=1}^{\widehat J_\iteratorClass},
\qquad
\iteratorClass\in[\numClass],
\]
are genuine disjoint \titleShorts simultaneously.
For any $S=(S_1,\ldots,S_\numClass)$ with $S_\iteratorClass\subseteq [\widehat J_\iteratorClass]$, define
\[
\widehat C_{\mc}(x;S)
=
\Bigl\{
\iteratorClass\in[\numClass]:
p_\iteratorClass(x)\in
\bigcup_{j\in S_\iteratorClass}\widehat I_{\iteratorClass,j}
\Bigr\}.
\]
Then, with probability at least
\[
1-\delta_{\mathrm{PI}}-\delta_m-\delta_\pi
\]
over the draw of $D_{\mathrm{cal}}$, the following holds simultaneously for all classes $\iteratorClass\in[\numClass]$ and all subsets $S_\iteratorClass\subseteq[\widehat J_\iteratorClass]$:
\[
\PP\Bigl(
Y(X)\notin \widehat C_{\mc}(X;S)
\ \Big|\ 
Y(X)=\iteratorClass,\ D_{\mathrm{cal}}
\Bigr)
\le
\widehat\Gamma_{\iteratorClass}(S_\iteratorClass).
\]
In particular, whenever $\widehat S_\iteratorClass^\star(\alpha_\iteratorClass)$ is feasible for every class,
\[
\PP\Bigl(
Y(X)\notin \widehat C_{\mc}^\star(X;\alpha)
\ \Big|\ 
Y(X)=\iteratorClass,\ D_{\mathrm{cal}}
\Bigr)
\le
\alpha_\iteratorClass
\qquad
\text{for all }\iteratorClass\in[\numClass].
\]
\end{theorem}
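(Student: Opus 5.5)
The proof works on the intersection of three high-probability events. The first step verifies that on this event the population quantities dominate their empirical surrogates in the right direction. The bound then follows from the population certificate of Theorem~\ref{thm:mc_error}, read off in its raw (un-normalized) form before dividing by $\pi_\iteratorClass$. The three events are:
\begin{itemize}
\item $E_{\mathrm{PI}}$: every returned interval for every class is a genuine disjoint \titleShort. It has probability at least $1-\delta_{\mathrm{PI}}$ by hypothesis.
\item $E_m$: for every class $\iteratorClass$ and every candidate interval $I\in\mathcal{A}_{p_\iteratorClass}$, the empirical mass $\frac{1}{n_{\mathrm{cal}}}\sum_k \mathbbm{1}(p_\iteratorClass(X_k)\in I)$ is within $\varepsilon_m$ of $\PP(p_\iteratorClass(X)\in I)$. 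Each candidate interval is fixed before seeing data, so Hoeffding's inequality applies to it, and a two-sided union bound over the $N_{\mathrm{cand}}$ candidates gives probability at least $1-\delta_m$.
\item $E_\pi$: $|\widehat\pi_\iteratorClass-\pi_\iteratorClass|\le\varepsilon_\pi$ for all $\iteratorClass$. Hoeffding's inequality with a union bound over the $\numClass$ classes gives probability at least $1-\delta_\pi$.
\end{itemize}
A union bound gives probability at least $1-\delta_{\mathrm{PI}}-\delta_m-\delta_\pi$ for the intersection.

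The point that makes same-sample reuse legitimate is this. Every data-selected interval $\widehat I_{\iteratorClass,j}$ is a member of the fixed family $\mathcal{A}_{p_\iteratorClass}$, and the concentration on $E_m$ is uniform over that family. So on $E_m$, $\underline m_{\iteratorClass,j}\le m_{\iteratorClass,j}$ holds for the selected intervals despite the data dependence. Likewise, on $E_\pi$ we have $\underline\pi_\iteratorClass\le\pi_\iteratorClass\le\overline\pi_\iteratorClass$. This is the step I expect to require the most care: one must ensure that conditioning on $D_{\mathrm{cal}}$ freezes the intervals, while the inequalities hold because they hold simultaneously for all candidates.

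Next, fix the realized $D_{\mathrm{cal}}$ in the good event. The intervals are then deterministic disjoint \titleShorts with respect to the fresh test law, and Theorem~\ref{thm:mc_error} applies conditionally. I would not use its normalized form. Instead I would rederive the two raw bounds directly from the \titleShort property $\PP(Y=\iteratorClass,\,p_\iteratorClass(X)\in I_j)\in[a_jm_j,\,b_jm_j]$.

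For the upper bound, the miscoverage mass $\PP(Y=\iteratorClass,\,p_\iteratorClass(X)\notin A)$ is at most the mass outside all intervals plus $\sum_{j\notin S}b_jm_j$. This equals $1-\sum_{j\in S}m_j-\sum_{j\notin S}(1-b_j)m_j$. That expression is decreasing in each $m_j$, since the coefficients $1$ and $1-b_j$ are nonnegative. Hence it is at most the numerator of $\widehat\Gamma^{\mathrm{up}}$ with $\underline m$ in place of $m$. Dividing by $\pi_\iteratorClass\ge\underline\pi_\iteratorClass>0$ gives the bound $\widehat\Gamma^{\mathrm{up}}$; when $\underline\pi_\iteratorClass=0$ the bound $+\infty$ is trivial.

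For the lower bound, the coverage mass is at least $\sum_{j\in S}a_jm_j\ge\sum_{j\in S}\widehat a_j\underline m_j$. Dividing by $\pi_\iteratorClass\le\overline\pi_\iteratorClass$ lowers the coverage ratio, so the miscoverage is at most $1-\sum_{j\in S}\widehat a_j\underline m_j/\overline\pi_\iteratorClass$, which is $\widehat\Gamma^{\mathrm{low}}$.

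Taking the minimum of the two bounds gives $\widehat\Gamma_\iteratorClass(S_\iteratorClass)$. The argument holds for all classes and all subsets $S_\iteratorClass$ simultaneously, because no further probabilistic event was used. The final claim follows by plugging in $\widehat S^\star_\iteratorClass(\alpha_\iteratorClass)$: its feasibility means $\widehat\Gamma_\iteratorClass\le\alpha_\iteratorClass$.
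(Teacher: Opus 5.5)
Your proposal is correct and follows essentially the same route as the paper: the same three events $\mathcal{E}_{\mathrm{PI}},\mathcal{E}_m,\mathcal{E}_\pi$ combined by a union bound, and uniform Hoeffding concentration over the fixed candidate family to justify same-sample reuse. It also uses, as the paper does, monotonicity of the rewritten numerator $1-\sum_{j\in S}m_j-\sum_{j\notin S}(1-b_j)m_j$ in each $m_j$, together with $\underline\pi_\iteratorClass\le\pi_\iteratorClass\le\overline\pi_\iteratorClass$. The differences are cosmetic: you rederive the unnormalized bounds from the \titleShort property instead of citing the normalized form of Theorem~\ref{thm:mc_error}, and you make the $\underline\pi_\iteratorClass=0$ case explicit where the paper leaves it implicit.
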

The proof can be found in 
Appendix~\ref{proof_thm:mc_same_sample}.
\section{Experiments}\label{sec:experiment}


We compare \titleShort-based methods with other interval-prediction methods for predicting probabilities (Task 1) and multiclass classification (Task 2). In both tasks, point predictors are fitted on a training split, intervals (or label sets) are constructed using a separate calibration split, and the resulting intervals (or label sets) are evaluated on a test split. 

\subsection{Task 1: Interval prediction for probabilities.}

As described in Section~\ref{sec:task_1}, the goal is to produce an interval to predict $\PP (Y=1\mid X=x)$. Each baseline produces a set of intervals $\intervalset$. That is, for a predictor $\hat{p}$, we obtain the interval $\interval\in\intervalset$ containing predicted $\hat p(x)$ for an input $x$.

The data-generating processes (DGPs) use 1-dimensional and 20-dimensional covariates with binary targets. Details of the target probabilities and label-generating mechanisms are provided in Appendix~\ref{app:task1_setup}.

\subsection{Task 2: Label set prediction for multi-class classification.}

As described in Section~\ref{sec:task_2}, the goal is to produce a label set. For each class, interval-based baselines produce an interval set $\intervalset^{(g)}$, from which we select the interval containing $\hat p_g(x)$.


For DGPs, target labels are drawn as $Y \mid X=x \sim \mathrm{Categorical}(q(x))$. The Linear-Gaussian DGP uses isotropic Gaussian covariates $X \sim \mathcal{N}(0, I_{10})$ with sparse linear softmax probabilities $q(x) = \mathrm{softmax}(Wx + b)$. The Nonlinear-Mixture DGP generates non-Gaussian, multimodal covariates via a four-component latent-factor mixture. Its class probabilities scale the linear logits by a bounded covariate-dependent function $s(x)$ such that $q(x) = \mathrm{softmax}\left((Wx + b)/{s(x)}\right)$. Details are provided in Appendix~\ref{app:task2_setup}.

\subsection{Baselines}
We compare the following baselines: (i) \titleShort constructs intervals from the calibration split using the same underlying procedure as our method, with an $\epsilon$-expansion for Task~1 and classwise thresholding of upper endpoints for Task~2. 
(ii) \textbf{Simultaneous confidence intervals} construct pointwise intervals by applying the Delta method to a Wald confidence region for the parameters of a parametric probability model. 
(iii) \textbf{Split conformal prediction} forms symmetric intervals around the fitted probability using a calibration quantile of absolute residuals, providing marginal coverage guarantees. 
(iv) \textbf{Calibration-based intervals} discretize $[0,1]$ into bins and retain bins whose mass-weighted calibration error is comparable to that of \titleShort, assigning each prediction to the closest retained bin when necessary. 
(v) \textbf{Fixed-width binning} discretizes $[0,1]$ into fixed-width bins and reports the bin containing the predicted probability. (vi) \textbf{Conformal classification}, used only for Task~2, constructs prediction sets by including classes whose predicted probabilities exceed a calibration-based threshold rather. This is not an interval-based method, but has marginal coverage of the true label.
We extend their formulation details and metrics in Appendix~\ref{app:baselines} and~\ref{app:metrics}, respectively.

\subsection{PICPIs with empirical self-consistency}
\label{sec:alg_emp}
\begin{algorithm}[tb] 
   \caption{ \titleShort Construction (empirical mode with self-consistency on observed data)}
   \label{alg:calibration_empirical} 

\begin{algorithmic}[1]
\Procedure{\textsc{Calibration}}{data points $\{(x_i,y_i)\}_{i=1}^\numData,$ predictor $ p$} \Comment{Obtain a set of \titleShorts with respect to the empirical distribution}
\State Compute predictions $p_i\leftarrow p(x_i)$ and sort $p_i$ in ascending order. Let $p_i'$ and $y_i'$ to denote the sorted prediction and outcome respectively.
\State Let $t_0 \leftarrow 0$.

\For {$j = 1,2,\dots,$}
\State Find $\min t_j \in (t_{j-1},1] $ subject to \Comment{find the next endpoint on a grid of [0,1]}
\begin{align*}
    \frac{\sum_{i=1}^n y_i' \mathbbm{1}(p_i' \in (t_{j-1}, t_j])}{\sum_{i=1}^n \mathbbm{1}(p_i' \in (t_{j-1}, t_j])} \in &~ (t_{j-1}, t_j].\\
    \frac{\sum_{i=1}^n y_i' \mathbbm{1}(p_i' \in (t_{j}, 1])}{\sum_{i=1}^n \mathbbm{1}(p_i' \in (t_{j}, 1])} \in &~ (t_{j}, 1].
\end{align*}
\State Let $\interval_j \leftarrow (t_{j-1}, t_j]$.
\If {$t_j = 1$}
\State Let $\numInterval \leftarrow j$. {\bf Break}.
\EndIf
\EndFor
\State {\bf Return} $\{\interval_j\}_{j=1}^\numInterval$
\EndProcedure
\\
\Procedure{\textsc{Inference}}{covariate input $x,$ predictor $ p$} 
\Comment{Find the \titleShort that contains $p(x)$}
\State $\{\interval_j\}_{j=1}^\numInterval \leftarrow \textsc{Calibration}\left(\{(x_i,y_i)\}_{i=1}^n, p\right)$
\State Let $\interval$ be the interval among $\{\interval_j\}_{j=1}^\numInterval$ that contains $p(x)$
\State {\bf Return} $\interval$
\EndProcedure
\end{algorithmic}
\end{algorithm}

We now present Algorithm~\ref{alg:calibration_empirical}, the empirical variant of algorithm used in experiments in this section. It returns \titleShorts with respect to the empirical distribution of the calibration sample. We refer to this property as \emph{empirical self-consistency} to distinguish it from the population validity guaranteed by Theorem~\ref{thm:algorithm_population}. 
The key difference is that Algorithm~\ref{alg:calibration_empirical} omits the concentration margin (Line~\ref{line:empirical-interval}, Algorithm~\ref{alg:calibration_population}) when evaluating intervals.  
While this omission yields narrower, more practical intervals, the theoretical population validity guaranteed by Theorem~\ref{thm:algorithm_population} no longer applies.
Figure~\ref{fig:emp_mode_diagnostics} illustrates the resulting tradeoff. 
Most proposed intervals also satisfy the \titleShort condition on held-out data, whereas the shortest intervals (of length at most $0.03$) are more prone to false acceptance.

The algorithm proceeds as follows: starting at the left
endpoint, it chooses the smallest next endpoint for which the \emph{empirical} label mean of the new bin lies in that bin, and so does the remaining part in $[0,1]$. It then repeats this step on the remainder of $[0,1]$. At inference time, a new
prediction is assigned to its unique bin in the resulting partition.

\begin{figure}[tb]
    \centering
    \begin{subfigure}[b]{0.36\textwidth}
        \centering
        \includegraphics[width=\textwidth]{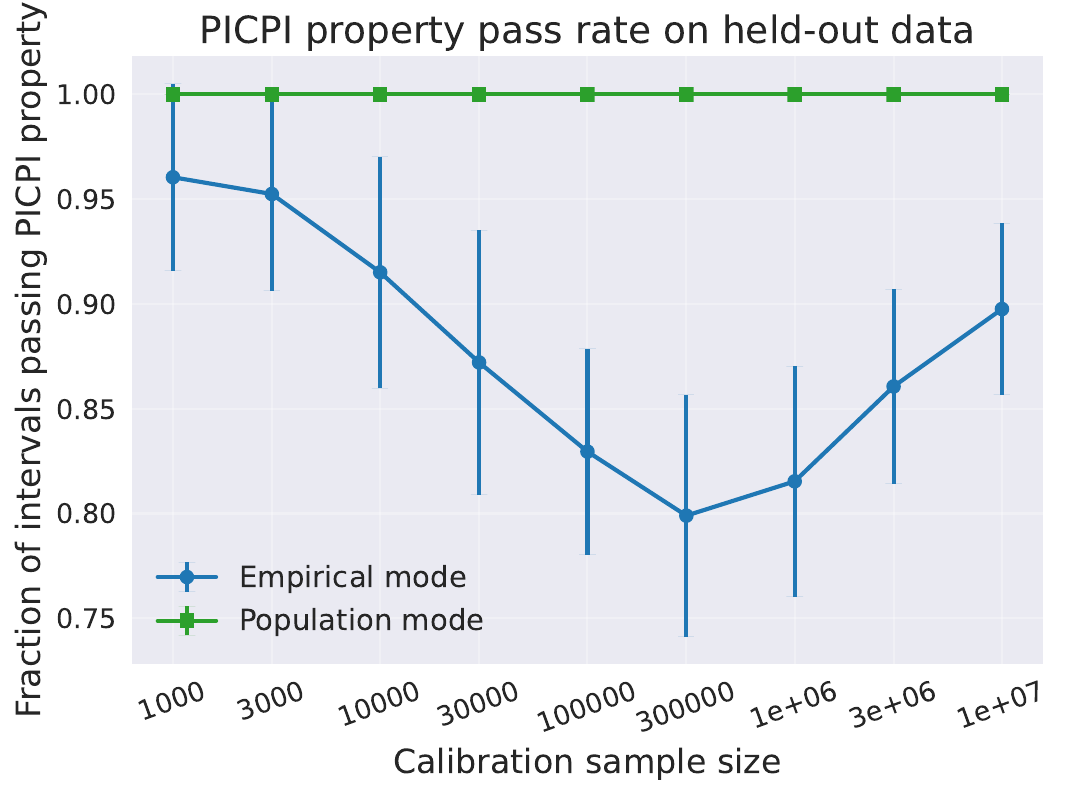}

    \end{subfigure}
    \hfill 
    \begin{subfigure}[b]{0.6\textwidth}
        \centering
        \includegraphics[width=\textwidth]{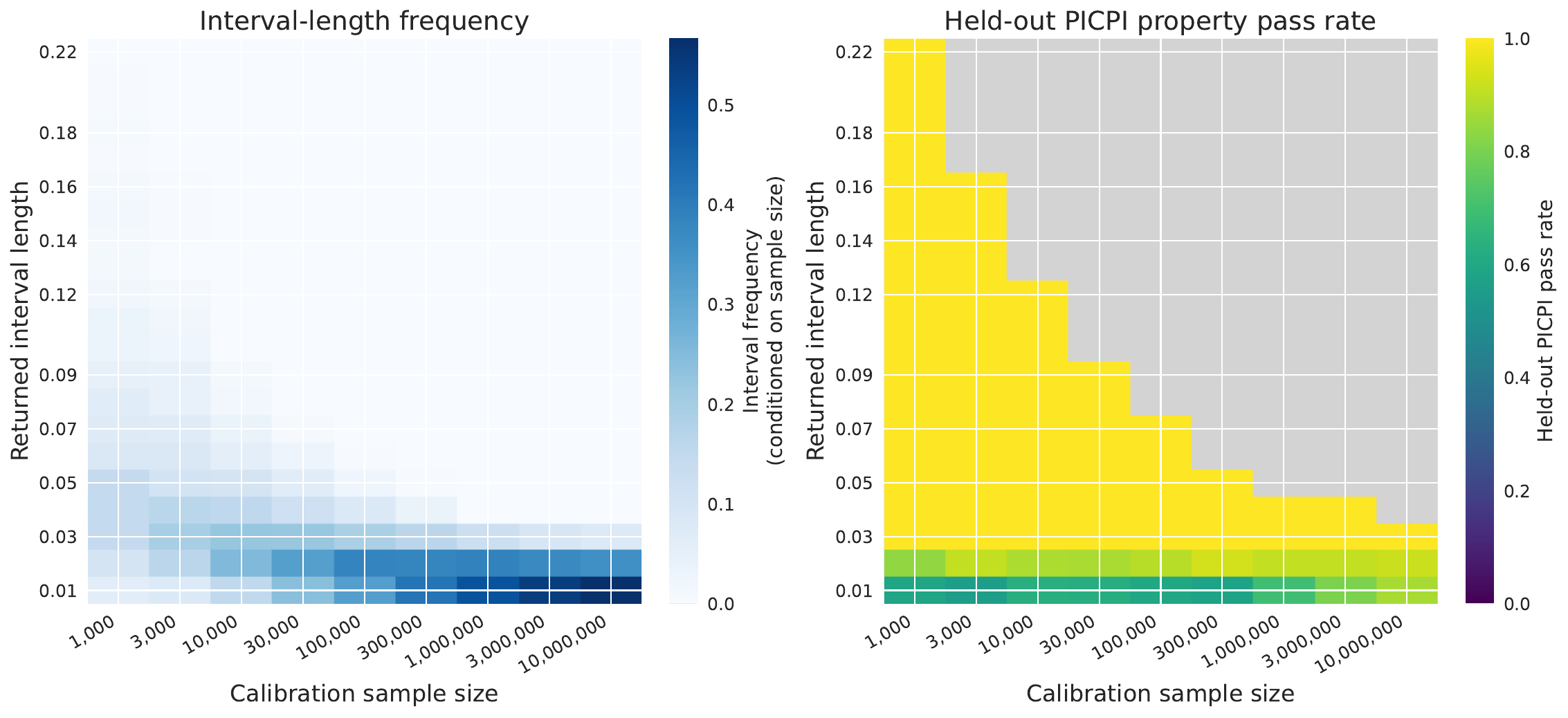}

    \end{subfigure}
    
    \caption{Demonstration of empirical version (Algorithm~\ref{alg:calibration_empirical}). \textbf{Left:} Proportion of proposed intervals satisfying the \titleShort property on a held-out dataset. \textbf{Middle:} With more calibration data, the empirical version admits narrower intervals. \textbf{Right:} While most intervals satisfy the \titleShort property, the narrowest intervals (length $\leq 0.03$) at the bottom exhibit lower pass rates. Their pass rate follows a U-shaped pattern as the amount of calibration data increases: initially, additional data allow the procedure to admit increasingly narrow and more challenging intervals, leading to a lower pass rate; with sufficiently large calibration samples, however, the increased information improves calibration even for these narrow intervals, and the pass rate recovers.
     For a formal presentation of the setup studied here, see Appendix~\ref{app:task1_setup_emp_mode_diagnostics}. }
    \label{fig:emp_mode_diagnostics}
\end{figure}

\begin{figure}[!ht]
\centering
\includegraphics[width=\linewidth]{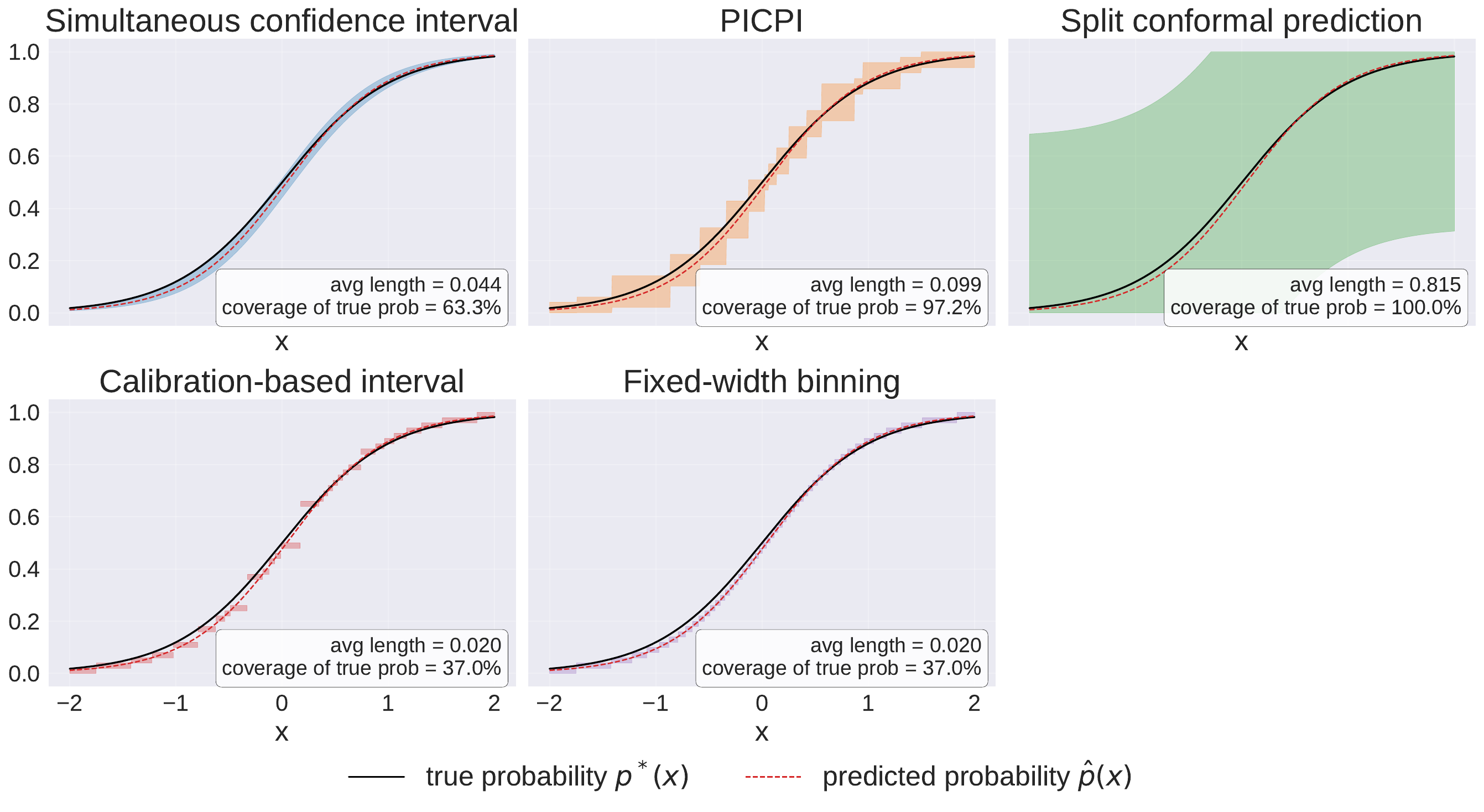}
\caption{Visualization under mildly miscalibrated model. 
\titleShort adapts its interval endpoints to local prediction bins to account for deviations between the fitted and target probabilities, while maintaining substantially shorter intervals than split conformal prediction. The simultaneous confidence interval is less responsive to bin-specific deviations from calibration, while the binning-based baselines have poor coverage of the target probability. 
}
\label{fig:task1_interval_visualization} 
\end{figure}

\begin{figure}[ht]
\centering
\includegraphics[width=\linewidth]{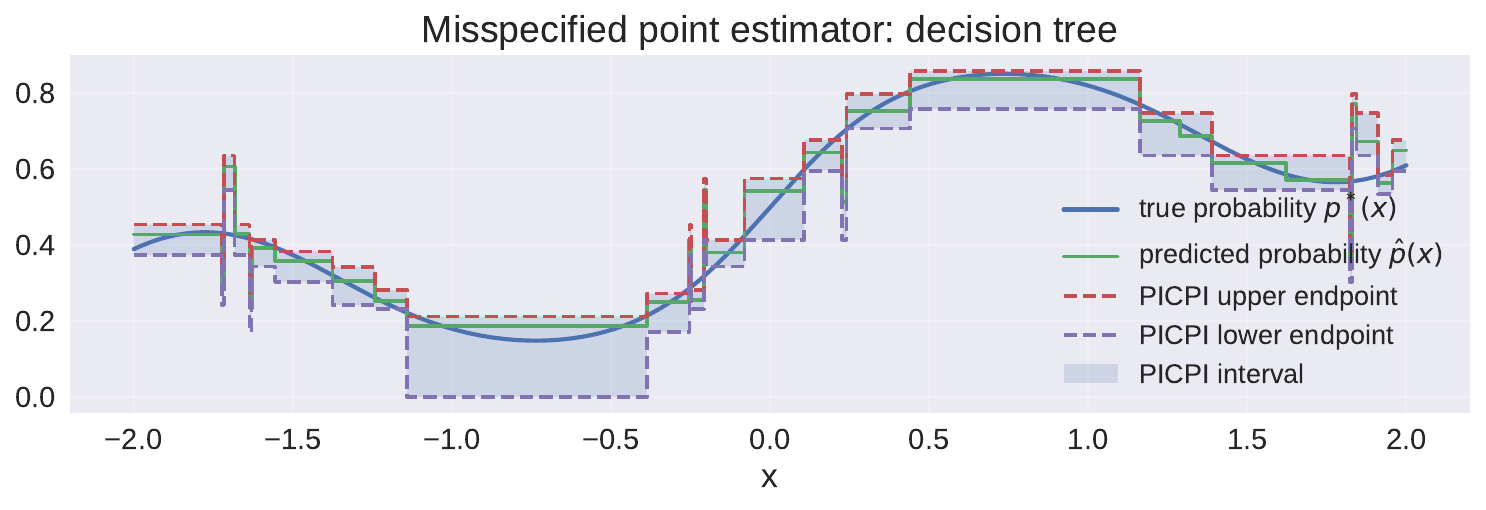}
\caption{Task~1 demonstration with a misspecified decision-tree point estimator. \titleShorts adapt to biased, piecewise-constant predictions by adjusting interval endpoints and widening where calibration reveals instability or mismatch. The smooth curve shows the true conditional probability $p^*(x)$, while the step function shows the misspecified decision-tree estimate $\hat p(x)$. The dashed step curves mark the lower and upper endpoints of the \titleShort interval, and the shaded boxes show the resulting interval assigned at each $x$.}
\label{fig:task1_misspecified_tree}
\end{figure}

\subsection{Task 1 Results}

Figures~\ref{fig:task1_interval_visualization} and~\ref{fig:task1_misspecified_tree}, together with Table~\ref{tab:task1_multivariate_mc_summary}, show that \titleShort achieves a favorable balance among length, calibration, and coverage measures. In the univariate visualization, \titleShort adapts to the local score partition and tracks the nonlinear target probability $p^*(x)$ more faithfully than the two binning baselines, while split conformal is visibly conservative. 

In the misspecified decision-tree example of Figure~\ref{fig:task1_misspecified_tree}, the \titleShorts remain responsive when the fitted predictor is structurally biased: the decision tree produces a piecewise-constant \(\hat p(x)\) that fails to track the nonlinear true probability curve, but \titleShort adapts its interval endpoints around these score plateaus and widens in regions where the calibration data reveal instability or mismatch.

Quantitatively, Table~\ref{tab:task1_multivariate_mc_summary} shows that \titleShort attains the lowest calibration errors while maintaining substantially shorter intervals than the simultaneous confidence interval and split conformal prediction, and it achieves much higher true-probability coverage than the calibration-based and fixed-width binning baselines. It avoids the over-conservatism of conformal-style wide intervals while retaining far better coverage and calibration behavior than binning alternatives. 
In particular,
its calibration errors indicate that the reported intervals are well centered around the true probabilities and the its binning follows the calibration structure of predictor. 


\begin{table}[ht]
\centering
\scriptsize

\begin{tabular}{lccccc}

\toprule
Method & Avg. Length & Rel. ECE$_\text{mid}$ &  Rel. ECE$_\text{pred}$  & Coverage over [0,1] & Coverage of $p^*$ \\
\midrule
Simultaneous CI & 0.41 $\pm$ 0.01 & 6.39 & 3.25 & -- & 100.00\% $\pm$ 0.00\% \\
PICPI & 0.11 $\pm$ 0.01 & 1.00 & 1.00 & 100.00\% $\pm$ 0.00\% & 79.47\% $\pm$ 4.68\% \\
Split conformal prediction & 0.87 $\pm$ 0.01 & 27.02 & 2.43 & -- & 100.00\% $\pm$ 0.00\% \\
Calibration-based interval & 0.02 $\pm$ 0.00 & 1.16 & 1.01 & 60.78\% $\pm$ 11.49\% & 28.07\% $\pm$ 3.25\% \\
Fixed-width binning & 0.02 $\pm$ 0.00 & 1.05 & 1.11 & 100.00\% $\pm$ 0.00\% & 29.28\% $\pm$ 3.34\% \\
\bottomrule

\end{tabular}
\caption{Comparison of interval length, calibration errors, and coverage across methods. \titleShort achieves substantially shorter intervals than simultaneous and conformal methods while maintaining full coverage of the prediction space and improved coverage of the target probability.
$\mathrm{ECE_{mid}}$ measures if interval midpoint is representative of the true probabilities of the predictions assigned to it.
$\mathrm{ECE_{pred}}$ measures if average prediction agree with the true probability among the points receiving this interval. It evaluates how well the interval grouping preserves the calibration of $p(X)$. Definitions of the metrics are supplemented in Appendix~\ref{app:metrics}. Covariate dimension $d=20$. }
\label{tab:task1_multivariate_mc_summary}
\end{table}

\subsection{Task 2 Results}

\begin{figure}[ht]
\centering
\includegraphics[width=\linewidth]{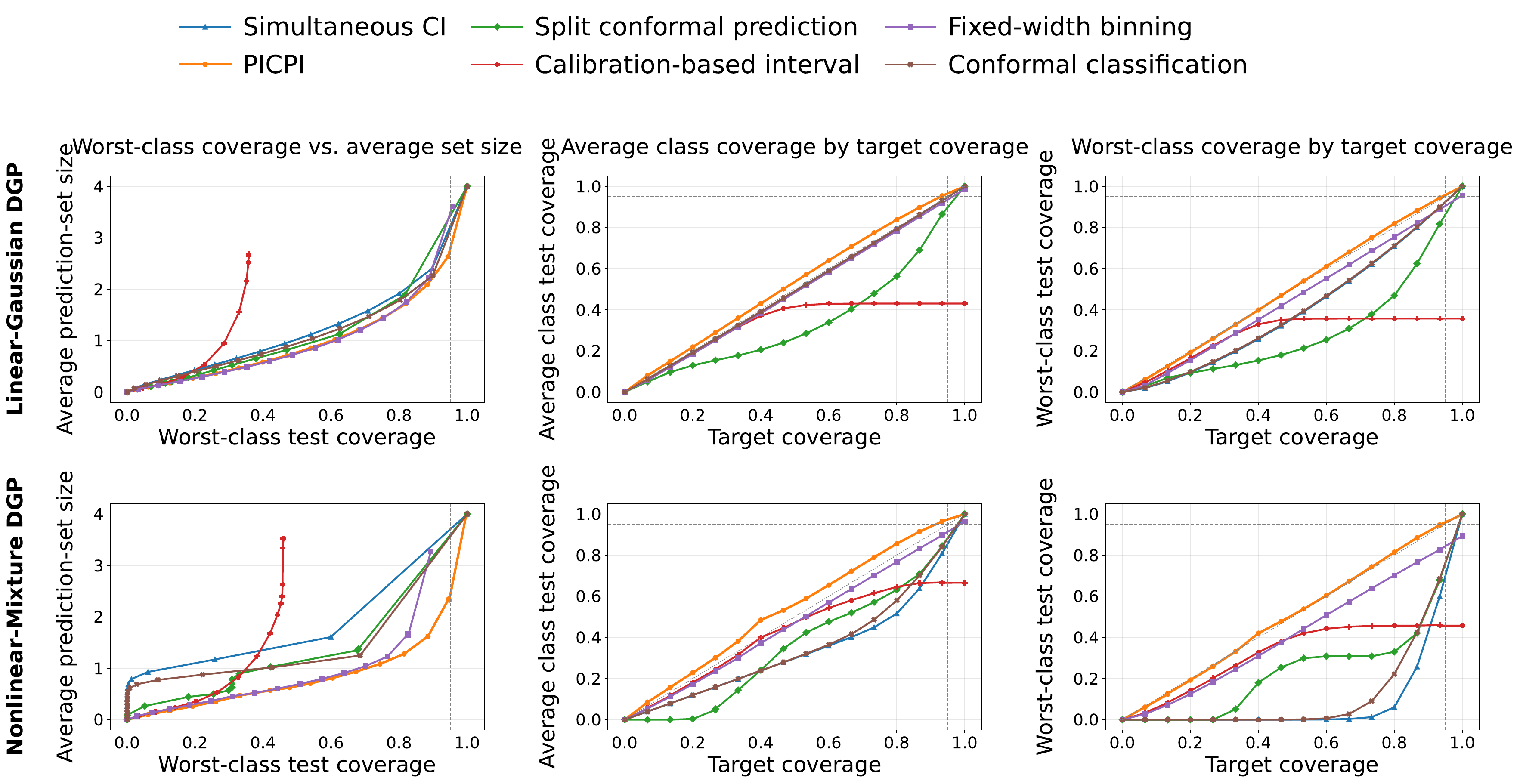}
\caption{Task~2 results, averaged over $1000$ replications. First row: linear-Gaussian DGP. Second row: nonlinear-mixture softmax DGP. Each replication uses disjoint splits of size $n_{\mathrm{train}}=n_{\mathrm{cal}}=n_{\mathrm{eval}}=2000$, with $d=10$ covariates and $\numClass=4$ classes. \titleShort maintains near-target worst-class coverage in both settings. This advantage is more pronounced under the more complex nonlinear-mixture DGP, where competing baselines show substantial degradation in worst-class coverage. Implementation details are given in Appendix~\ref{app:task2_setup}.}
\label{fig:task2}
\end{figure}

Across both the linear-Gaussian and nonlinear-mixture softmax DGPs, \titleShort delivers the most stable worst-class coverage relative to prediction-set size, as illustrated in the first column of Figure~\ref{fig:task2}. This advantage becomes increasingly pronounced as the complexity of the DGP grows. The second and third columns of Figure~\ref{fig:task2} demonstrate that under the nonlinear-mixture DGP, the performance of competing methods deteriorates substantially. In contrast, \titleShort continues to track the target coverage closely while maintaining the smallest prediction sets. Note that the average class coverage shown in the second column is distinct from marginal coverage--standard conformal classification, despite its marginal coverage guarantees, suffers from poor worst-class coverage. Conversely, \titleShort consistently yields class-wise coverage that is lower-bounded by the target level. These empirical observations align exactly with Theorems~\ref{thm:mc_error}, \ref{thm:mc_optimality}, and \ref{thm:mc_same_sample}, suggesting that \titleShort is robust to miscalibration resulting from nonlinear covariate structures and model misspecification.

\section{Related Work}\label{sec:related_work}

\paragraph{Conformal prediction.}
Conformal prediction constructs set-valued predictors $C(X)$ with finite-sample marginal validity,
\[
\mathbb{P}\Big(Y \in C(X)\Big) \ge 1-\alpha,
\]
under exchangeability \citep{vovk2005algorithmic,shafer2008tutorial}. Split, cross-, and jackknife-style variants---including the jackknife+, jackknife-after-bootstrap, conformalized quantile regression, and conformal predictive distributions---show that the framework can be layered on top of a broad range of base learners \citep{barber2021predictive,kim2020predictive,romano2019conformalized,vovk2015cross,vovk2018cross}. A stronger target is conditional coverage,
\[
\mathbb{P}\Big(Y \in C(X)\mid X=x\Big) \ge 1-\alpha \quad \text{for a.e. }x,
\]
but distribution-free versions of this guarantee are impossible except through vacuous sets in the worst case \citep{lei2014distribution,barber2021limits}. Recent work therefore studies weaker conditional notions obtained by coarsening the conditioning event: Mondrian or group-conditional validity conditions on a prespecified partition or label \citep{vovk2003mondrian,vovk2012conditional,martinezgil2024groups}; multivalid coverage asks for simultaneous approximate validity over a rich family of subpopulations \citep{jung2023multivalid,gibbs2025conformal}; and recent analyses clarify sample- and training-conditional targets and when they can be attained under additional assumptions \citep{duchi2025few,liang2025algorithmic}. Related extensions handle covariate shift, label shift, online adaptation, and broader forms of nonexchangeability \citep{tibshirani2019conformal,gibbs2021adaptive,podkopaev2021distribution,barber2022beyondexchangeability, berta2024classifier}.

The \titleShort target,
\[
\mathbb{P}\Big(Y=1 \mid p(X)\in I\Big)\in I,
\]
belongs to this ``nearly conditional'' regime, but the conditioning variable is the model score itself rather than the full feature vector. This distinction is central for our results: the score is exactly the quantity used downstream for thresholding, classification, and decision-making, so conditioning on $p(X)$ avoids the impossibility of full covariate-conditional validity while remaining aligned with the deployed predictor. Theorem~\ref{thm:tp} converts this score-conditional statement into a conformal-style guarantee for the latent mean $p^\ast(X)$, and Theorem~\ref{thm:consistency-prediction} supplies an efficiency statement: under $\lambda$-regularity and $L_1$ score error $\varepsilon$, for all but a $\delta_1$ fraction of score values there exists a \titleShort of width of order $(\log n/n)^{1/3}+\varepsilon/\delta_1$. This complements a separate line of work that studies the length, volume, or oracle optimality of conformal regions. For density-level-set predictors, \citet{lei2013distribution} establish finite-sample validity together with asymptotic efficiency relative to oracle level sets; \citet{lei2018distribution} show that split conformal regression inherits consistency and can adapt interval length to heteroscedasticity; \citet{chernozhukov2021distributional} and \citet{izbicki2020flexible,izbicki2022cdsplit} obtain approximate conditional validity and convergence to oracle conditional intervals or highest-density regions when conditional distribution estimators are consistent; and \citet{yang2025selection,lebars2025volume} quantify finite-sample width losses relative to oracle choices. Relative to these papers, Theorem~\ref{thm:consistency-prediction} studies the shrinking object most relevant for our problem: not a response interval around $Y$, but a score-conditional calibration interval that certifies the meaning of the model's own probability output.

\paragraph{Calibration.}
For a binary score $p(X)\in[0,1]$, perfect calibration means
\[
\mathbb{P}\Big(Y=1\mid p(X)=v\Big)=v,
\]
for all relevant $v$ \citep{dawid1982wellcalibrated,foster1998asymptotic}. In practice this pointwise condition is usually assessed through coarser summaries: reliability diagrams compare empirical frequencies to predicted probabilities on bins of the score, and scalar criteria such as ECE aggregate these discrepancies across bins \citep{gneiting2007strictly,vaicenavicius2019evaluating,carrell2022calibration,blasiok2024smooth}. A large post-hoc calibration literature then modifies the original score via parametric or nonparametric maps---Platt scaling, isotonic regression, Bayesian binning, temperature scaling, beta calibration, and Dirichlet calibration---to improve this global agreement \citep{platt1999probabilistic,zadrozny2002transforming,niculescu2005predicting,naeini2015obtaining,guo2017calibration,kull2017beta,kull2019beyond}. Our objective is different: we do not alter $p$, but instead seek data-adaptive subsets of the score range on which the conditional mean outcome is certified to lie inside the same interval. In this sense, classical conformal prediction controls the random event $\{Y\in C(X)\}$, whereas \titleShort applies conformal-style uncertainty quantification to the conditional mean outcome along the score axis.

This places \titleShort between descriptive calibration diagnostics and conformal uncertainty quantification. Multicalibration asks for simultaneous approximate calibration over many subgroups and is closely tied to fairness constraints \citep{hebert2018multicalibration,kleinberg2016inherent,pleiss2017fairness}. \titleShort can be read as a one-dimensional, score-indexed analogue of this idea: the conditioning sets are subsets of the score axis rather than externally specified demographic groups, and the guarantee is interval-valued rather than an additive calibration error bound. Vovk and collaborators developed self-calibrating forecasters, Venn predictors, and Venn--Abers methods that output calibrated probabilities or probability intervals under exchangeability \citep{vovk2003self,vovk2014vennabers}. More recently, \citet{gupta2020distribution} give distribution-free confidence intervals for calibrated binary probabilities, \citet{podkopaev2021distribution} study such statements under structured shift, and \citet{van2024self} recalibrate the models with conformal guarantees. These works are the closest precedents to our calibration view. The key distinction is that \titleShort produces the conditioning interval and the uncertainty statement simultaneously through the self-consistency condition, which is exactly the structure exploited in our theoretical results.

\section{Conclusions}
\label{sec:conclusions}

In summary, we have introduced \titleShorts as a framework for local, data-driven uncertainty quantification that bridges the gap between calibration and conformal prediction. By conditioning on the model’s predicted probabilities, \titleShorts provide intervals that are both interpretable and adaptive to the precision of the underlying predictor. Unlike traditional calibration methods, which rely on fixed bins or post-hoc adjustments, our approach identifies regions of the prediction space where the model is self-certified to be reliable, yielding uncertainty statements that are directly actionable without modifying the base predictor. We show that, under suitable regularity conditions, the width of these intervals shrinks with increasing sample size, enabling increasingly precise risk characterization. 

Practically, \titleShorts offer a tool for probabilistic prediction and multi-class classification with respective coverage guarantees. Overall, our results highlight that leveraging the model’s own risk stratification can produce rigorous, locally adaptive uncertainty quantification, providing an alternative to conventional recalibration or fully conditional prediction frameworks. This opens up directions on extending \titleShorts to structured settings inherently require prediction-conditioned uncertainty quantification, such as representation-based inference (where the predictions acts as a low-dimensional summary), covariate shift, and distribution shift (where uncertainty guarantees may need to hold conditionally on prediction-induced strata that capture regions of varying model reliability). On the theoretical side, a direction for future work is to develop adaptive rates on the width bound that exploit additional properties or parametric structure of the prediction model.



\clearpage
\bibliographystyle{\xuelinbibstyle}
\bibliography{ref}

\clearpage
\appendix
\section{Proofs}

\subsection{Proof of Theorem~\ref{thm:algorithm_population}}
\label{proof_thm:algorithm_population}

\begin{proof}

For each discretized interval $[a, b]$ with empirical count $N_{[a,b]} = \sum_i \mathbbm{1}\{p(x_i) \in [a,b]\} > 0$, we apply Hoeffding's inequality and a union bound to derive that with probability at least $1-\delta$:
\begin{align*}
    |\hat{\mu}_{[a,b]} - \mu_{[a,b]}| \leq 2\sqrt{\frac{\log(|\mathcal{I}|/\delta)}{N_{[a,b]}}},
\end{align*}
where $\mu_{[a,b]} = \PP(Y=1 \mid p(X) \in [a, b])$,
\begin{align*}
    \hat{\mu}_{[a,b]} = \frac{\sum_{i=1}^n y_i \cdot \mathbbm{1}(p(x_i) \in [a,b])}{\sum_{i=1}^n \mathbbm{1}(p(x_i) \in [a,b])}
\end{align*}
and $|\mathcal{I}| = K^2$ is the number of candidate intervals.

On the event that all these bounds hold simultaneously, Line~\ref{line:empirical-interval} implies that
\begin{align*}
    \mu_{[a,b]} \in [a,b].
\end{align*}
This confirms that with probability at least $1-\failureProb$, $\intervalset_p$ is a set of \titleShorts.

We note that a relaxed alternative for Line~\ref{line:empirical-interval} is to add indicator functions at the boundary endpoints 0 and 1. That is, 
{
\begin{align}
    & \frac{\sum_{i=1}^n y_i \cdot \mathbbm{1}(p(x_i) \in [a,b])}{\sum_{i=1}^n \mathbbm{1}(p(x_i) \in [a,b])} \nonumber\\
    &\quad \in \left[a + 2\sqrt{\frac{\log (\discreteNum^2/\delta)}{\sum_{i=1}^n \mathbbm{1}(p(x_i) \in [a,b])}} \mathbbm{1}\{a=0\} ,b - 2\sqrt{\frac{\log (\discreteNum^2/\delta)}{\sum_{i=1}^n \mathbbm{1}(p(x_i) \in [a,b])}}\mathbbm{1}\{b=1\}\right].
\end{align}} 
This formulation may be preferable in edge cases where $Y=0$ or $Y=1$ almost surely. The proof still holds since $\mu_{[a,b]}\in[0,1]$. For simplicity, we proceed with the simpler formulation in the main text.
\end{proof}

\subsection{Proof of Theorem~\ref{thm:consistency-prediction}}\label{proof:thm:consistency-prediction}

\begin{proof}
Let 
\begin{align*}
    \epsilon_\stat = &~ 2\Big(\frac{2\log(3n/\delta)}{\lambda n}\Big)^{1/3}, \\
    \epsilon_1 = &~ \epsilon_\stat + \frac{4\epsilon}{\delta_1},\\
    K = &~ \max\left\{1, \left\lfloor\frac{1}{4\epsilon_1}\right\rfloor\right\}. 
\end{align*} 
If $K=1$, the interval $[0,1]$ is returned by the algorithm and covers all prediction values, proving the result directly with $C(n,\epsilon,\delta,\delta_1)=1$. For $K \geq 2$, the proof consists of three steps. 

\paragraph{Step 1. } 
We show that there exists $J \in \Z_+$ and disjoint intervals $[a_j,b_j], j \in [J]$ such that:
\begin{enumerate}
    \item For any $j \in [J]$, 
    \begin{align}\label{eq:consistency-existence-calibration}
        a_j + \epsilon_\stat \cdot \mathbbm{1}(a_j \neq 0) \leq \PP[Y=1\mid \estModel(X) \in [a_j,b_j]] \leq b_j - \epsilon_\stat \cdot \mathbbm{1}(b_j \neq 1);
    \end{align}
    \item 
    \begin{align}\label{eq:consistency-existence-coverage}
        \PP\Big(\estModel(X) \notin \cup_{j \in [J]} [a_j,b_j]\Big) \leq \delta_1;
    \end{align}
    \item For any $j \in [J]$,
    \begin{align}\label{eq:consistency-existence-length}
        4\epsilon_1 \leq \Big|b_j - a_j\Big| \leq 56\epsilon_1 \cdot \log(6/\delta_1).
    \end{align}
\end{enumerate}
First, applying Lemma~\ref{lem:dist-regularity} with $\nu$ being the distribution of $\estModel(X)$, uncovered mass budget $\delta_1$, and accuracy parameter $\epsilon_1$, we see that there exist disjoint intervals $[a_i,b_i], i \in [L]$ that satisfy 
\begin{align*}
    \PP\Big(\estModel(X) \notin \cup_{i \in [L]} [a_i,b_i]\Big) \leq \delta_1/2,
\end{align*}
Eq.~\eqref{eq:consistency-existence-length}, and
\begin{align}\label{eq:regularity-pm}
    a_i + \epsilon_1 \cdot \mathbbm{1}(a_i \neq 0) \leq \E[\estModel(X)\mid\estModel(X) \in [a_i,b_i]] \leq b_i - \epsilon_1 \cdot \mathbbm{1}(b_i \neq 1).
\end{align}
By Lemma~\ref{lem:conditional-dist-shift}, there exist $i_1,\dots,i_J \in [L]$ such that for any $j \in [J]$, 
\begin{align}\label{eq:conditional-dist-shift-pm}
    \Big|\PP(Y=1\mid\estModel(X) \in [a_{i_j},b_{i_j}])  - \E[\estModel(X)\mid\estModel(X) \in [a_{i_j},b_{i_j}]] \Big| \leq 4\epsilon/\delta_1 
\end{align}
and
\begin{align*}
    \PP\Big(\estModel(X) \in \cup_{j' \notin \{i_1,\dots,i_J\}} [a_{j'},b_{j'}]\Big) \leq \delta_1/2.
\end{align*}
Combining Eq.~\eqref{eq:conditional-dist-shift-pm} and Eq.~\eqref{eq:regularity-pm}, we have
\begin{align*}
    \MoveEqLeft{ \PP(Y = 1 \mid\estModel(X) \in [a_{i_j},b_{i_j}])} \\
    & \leq  \E[\estModel(X)\mid\estModel(X) \in [a_{i_j},b_{i_j}]] + 4\epsilon/\delta_1 \\
    & \leq  b_{i_j} - \epsilon_1 \cdot \mathbbm{1}(b_{i_j} \neq 1) + 4\epsilon/\delta_1 \\
    & =  b_{i_j} - \epsilon_\stat \cdot \mathbbm{1}(b_{i_j} \neq 1), 
\end{align*}
where the last step follows from the definition of $\epsilon_1$, and
\begin{align*}
    \MoveEqLeft{ \PP(Y = 1\mid\estModel(X) \in [a_{i_j},b_{i_j}])} \\
    & \geq  \E[\estModel(X)\mid\estModel(X) \in [a_{i_j},b_{i_j}]] - 4\epsilon/\delta_1\\
    & \geq   a_{i_j} + \epsilon_1 \cdot \mathbbm{1}(a_{i_j} \neq 0) - 4\epsilon/\delta_1\\
    & =  a_{i_j} + \epsilon_\stat \cdot \mathbbm{1}(a_{i_j} \neq 0) \tag{if $a_{i_j} = 0$ then trivially $a_{i_j} \leq \PP(Y=1\mid\estModel(X) \in [a_{i_j},b_{i_j}])$}.
\end{align*}
This establishes 
\begin{align*}
    a_{i_j} + \epsilon_\stat \cdot \mathbbm{1}(a_{i_j} \neq 0) \leq \PP(Y=1\mid\estModel(X) \in [a_{i_j},b_{i_j}]) \leq b_{i_j} - \epsilon_\stat \cdot \mathbbm{1}(b_{i_j} \neq 1),~\forall j \in [J].
\end{align*}
Furthermore,
\begin{align*}
    \PP\Big(\estModel(X) \notin \cup_{j \in [J]} [a_{i_j},b_{i_j}]\Big) \leq &~ \PP\Big(\estModel(X) \notin \cup_{i \in [L]} [a_i,b_i]\Big) + \PP\Big(\estModel(X) \in \cup_{j' \notin \{i_1,\dots,i_J\}} [a_{j'},b_{j'}]\Big) \\
    \leq &~ \delta_1.
\end{align*}
In sum, the intervals $[a_{i_1},b_{i_1}],\dots,[a_{i_J},b_{i_J}]$ satisfy Eq.~\eqref{eq:consistency-existence-calibration}-\eqref{eq:consistency-existence-length} (where we view $[a_{i_j},b_{i_j}]$ as $[a_{j},b_{j}]$).

\paragraph{Step 2. } 
We show that with probability at least $1-\delta$ over the randomness of $\{(x_i,y_i)\}_{i=1}^n$, the intervals in Step 1 will be included in the set of intervals $\I$ output by Algorithm~\ref{alg:calibration_population}. 
For simplicity of notation, we let $[a_{1},b_{1}],\dots,[a_J,b_J]$ denote the intervals found in Step 1; i.e. $[a_{1},b_{1}],\dots,[a_J,b_J] = [a_{i_1},b_{i_1}],\dots,[a_{i_J},b_{i_J}]$. Let $z_i = p(x_{m+i})$ be a shorthand for the predicted probability.
Define events
\begin{align*}
    A_1 = &~ \Big\{\sumin \mathbbm{1}(z_i \in [a_j,b_j]) \geq 2\lambda \epsilon_1 n,\forall j \in [J]\Big\}\\
    A_2 = &~ \Big\{\Big|\frac{\sum_{i=1}^n y_i \cdot \mathbbm{1}(z_i \in [a_j,b_j])}{\sum_{i=1}^n \mathbbm{1}(z_i \in [a_j,b_j])} - \PP(Y=1\mid \estModel(X) \in [a_j,b_j])\Big| \leq \epsilon_\stat/2, \forall j \in [J]\Big\}.
\end{align*}
Next, we show that under events $A_1, A_2$, the intervals $[a_{1},b_{1}],\dots,[a_J,b_J]$ are included in $\I$ by Algorithm~\ref{alg:calibration_population}. 
Indeed, we have
\begin{align*}
    \frac{\sum_{i=1}^n y_i \cdot \mathbbm{1}(z_i \in [a_j,b_j])}{\sum_{i=1}^n \mathbbm{1}(z_i \in [a_j,b_j])} \leq &~ \PP(Y=1\mid\estModel(X) \in [a_j,b_j]) + \epsilon_\stat/2\\
    \leq &~ b_{j} - \epsilon_\stat \cdot \mathbbm{1}(b_{j} \neq 1) + \epsilon_\stat/2\\
    \leq &~  b_{j} - \epsilon_\stat/2 \cdot \mathbbm{1}(b_{j} \neq 1)\\
    \leq &~ b_j - 2\sqrt{\frac{\log (2K^2/\delta)}{\sum_{i=1}^n \mathbbm{1}(z_i \in [a_j,b_j])}},
\end{align*}
where the last inequality is due to
\begin{align*}
    2\sqrt{\frac{\log (2K^2/\delta)}{\sum_{i=1}^n \mathbbm{1}(z_i \in [a_j,b_j])}} \leq &~ 2\sqrt{\frac{2\log(3n/\delta)}{2\lambda \epsilon_1 n}}\\
    \leq &~ \Big(\frac{2\log(3n/\delta)}{\lambda n}\Big)^{1/3} = \epsilon_\stat/2,
\end{align*}
under event $A_1$. 
Similarly,
\begin{align*}
    \frac{\sum_{i=1}^n y_i \cdot \mathbbm{1}(z_i \in [a_j,b_j])}{\sum_{i=1}^n \mathbbm{1}(z_i \in [a_j,b_j])} \geq a_j + 2\sqrt{\frac{\log (2K^2/\delta)}{\sum_{i=1}^n \mathbbm{1}(z_i \in [a_j,b_j])}}.
\end{align*}
Therefore, the interval $[a_j,b_j]$ will be included in $\I$ under $A_1 \cap A_2$. 

Next, we show $\PP(A_1\cap A_2)\ge 1-\delta$. Since 
$
    \Big|b_j - a_j\Big| \geq 4\epsilon_1,
$ 
applying Lemma~\ref{lem:sample-size-interval} with $c=4\epsilon_1$, 
we have 
\begin{align*}
    \PP(A_1) \geq &~ 1-J\exp\Big(-\frac{\lambda\epsilon_1 n}{2}\Big)\\
    \geq &~ 1- n\exp\Big(-\frac{\lambda\epsilon_\stat n}{2}\Big)\\
    \geq &~ 1- n\exp\Big(-\log(3n/\delta)\Big) \tag{$n \geq \lambda^{-1} \log(3n/\delta) / \sqrt{2}$}\\
    \geq &~ 1-\delta/3 .
\end{align*}

Under $A_1$, by Hoeffding's inequality and a union bound, we have
\begin{align*}
    \PP(A_2\mid A_1) \geq &~ 1- 2 J\exp\Big(-\epsilon_\stat^2 \cdot\sum_{i=1}^n \mathbbm{1}(z_i \in [a_j,b_j])/2\Big)\\
    \geq &~ 1- 2 n\exp\Big(-\epsilon_\stat^2 \cdot \lambda \epsilon_1 n\Big)\\
    \geq &~ 1- 2 n\exp\Big(-\epsilon_\stat^3 \cdot \lambda  n\Big)\\
    \geq &~ 1- 2 n\exp\Big(- \frac{16\log(3n/\delta)}{\lambda n } \cdot \lambda  n\Big)\\
    \geq &~ 1- 2 n\exp\Big(- 16\log(3n/\delta)\Big)\\
    \geq &~ 1- 2 n (3n/\delta)^{- 16}\\
    \geq &~ 1- \frac{2 \delta^{16}}{3^{16} n^{15}}\\
    \geq &~ 1-\delta/6.
\end{align*}
Combining, we have
\begin{align*}
    \PP(A_1 \cap A_2) \geq 1 - \delta.
\end{align*}

\paragraph{Step 3. } We show that the intervals $[a_{1},b_{1}],\dots,[a_J,b_J]$ in Step 2 satisfy the property
\begin{align*}
    \PP_{x \sim \mu}\Big(\exists [a_j,b_j]  ~s.t.~ |b_j-a_j| \leq C(n,\epsilon,\delta,\delta_1) ~\text{and}~ p(x) \in [a_j,b_j] \Big) \geq 1-\delta_1,
\end{align*}
where 
\begin{align*}
    C(n,\epsilon,\delta,\delta_1) &= \log(6/\delta_1) \cdot 56\Big(2\Big(\frac{2\log(3n/\delta)}{\lambda n}\Big)^{1/3}  + \frac{4\epsilon}{\delta_1}\Big).
\end{align*}
Define event
\begin{align*}
    A_3 = \Big\{\exists [a_j,b_j]  ~s.t.~ |b-a| \leq C(n,\epsilon,\delta,\delta_1) ~\text{and}~ p(x) \in [a_j,b_j] \Big\}.
\end{align*}
By Eq.~\eqref{eq:consistency-existence-length}, for any $j \in [J]$,
\begin{align*}
    |b_j-a_j| \leq C(n,\epsilon,\delta,\delta_1).
\end{align*}
It follows from Eq.~\eqref{eq:consistency-existence-coverage} that
\begin{align*}
    \PP_{x \sim \mu}\Big(\exists [a_j,b_j]  ~s.t.~ |b_j-a_j| \leq C(n,\epsilon,\delta,\delta_1) ~\text{and}~ p(x) \in [a_j,b_j] \Big) = &~ \PP\Big(\estModel(X) \in \cup_{j \in [J]} [a_j,b_j]\Big) \\
    \geq &~ 1- \delta_1.
\end{align*}
\end{proof}

\subsection{Proof of Lemma~\ref{lem:sample-size-interval}}\label{proof_lem:sample-size-interval}
\begin{proof}
Fix an interval $[a_j, b_j]$ for some $j \in [J]$.
Define $w_i := \mathbbm{1}(z_i \in [a_j,b_j])$ for $i \in [n]$. Then $\{w_i\}_{i=1}^n$ are i.i.d.\ Bernoulli random variables with mean
\begin{align*}
    \E[w_i] 
    = \PP(z_i \in [a_j,b_j])
    \ge \lambda (b_j-a_j)
    \ge \lambda c. 
\end{align*}
Let $S_j := \sumin w_i$. The total expected sample count satisfies $\E[S_j] \ge \lambda cn$. 
By the lower-tail multiplicative Chernoff bound,
$$\mathbb{P}(S_j \le (1-\delta)\E[S_j]) \le \exp(-\delta^2 \E[S_j]/2).$$ 
Applying it with $\delta = 1/2$,  
\begin{align*}
    \PP\Big(\sumin w_i < \lambda cn/2\Big) 
    \le &~ \PP\Big(S_j \le \frac{\E[S_j]}{2}\Big) \\
    \le &~ \exp\Big(-\frac{\E[S_j]}{8}\Big) \\
    \le &~ \exp\Big(-\frac{\lambda cn}{8}\Big).
\end{align*}
By a union bound over all $j \in [J]$, with probability at least $1-J\exp\Big(-\frac{\lambda cn}{8}\Big)$,
\begin{align*}
    \sumin \mathbbm{1}(z_i \in [a_j,b_j]) \geq \lambda cn/2
\end{align*}
holds for all $j \in [J]$. This completes the proof.
\end{proof}

\subsection{Proof of Lemma~\ref{lem:dist-regularity}}\label{proof_lem:dist-regularity}

\begin{proof}
We present a constructive proof based on discretization of $[0,1]$. To simplify notation, we abbreviate $\E_{Z \sim \nu}$ as $\E$ and $\PP_{Z \sim \nu}$ as $\PP$.

\paragraph{Step 1: Construct the graph.}
Consider $[(i-1)/\discreteNum, i/\discreteNum], i = 1,2, \dots,\discreteNum$ with $\discreteNum = \lfloor 1/(4\epsilon) \rfloor$. Without loss of generality assume $\discreteNum\ge2$.
Construct a directed graph $G = (V,E)$ where $V = [\discreteNum]$ and let the edges be defined as follows: $1 \ra 2, \discreteNum-1 \la \discreteNum$, where there is a directed edge $i \ra j$ iff both of the following conditions hold:
\begin{itemize}
    \item $|i-j| = 1$, and
    \item Either \begin{align}\label{eq:edge-condition-1}
   \PP(Z\in [(i-1)/\discreteNum, i/\discreteNum]) \text{ and } \Big(\E\Big[Z\mid Z \in [(i-1)/\discreteNum, i/\discreteNum]\Big] - \frac{i-0.5}{\discreteNum}\Big) \cdot (j-i) \ge 0 ,
\end{align}
or 
\begin{align}\label{eq:edge-condition-2}
    \PP\Big(Z \in [(i-1)/\discreteNum, i/\discreteNum]\Big) \leq 3 \cdot \PP\Big(Z \in [(j-1)/\discreteNum, j/\discreteNum]\Big).
\end{align}
\end{itemize}

Eq.~\eqref{eq:edge-condition-1} can be interpreted as: if $\E\Big[Z\mid Z \in [(i-1)/\discreteNum, i/\discreteNum]\Big]$ is closer to $(i-1)/\discreteNum$, then draw a directed edge $i-1 \la i$; if $\E\Big[Z\mid Z \in [(i-1)/\discreteNum, i/\discreteNum]\Big]$ is closer to $i/\discreteNum$, then draw a directed edge $i \ra i+1$. It directs an edge toward the side of the bin on which its conditional mean lies.

Eq.~\eqref{eq:edge-condition-2} can be interpreted as: we draw a directed edge to $i \ra j$ (for $j = i-1$ or $i+1$) if the probability mass on $[(i-1)/\discreteNum, i/\discreteNum]$ is not substantially greater than that of $[(j-1)/\discreteNum, j/\discreteNum]$. It directs an edge toward a neighboring bin whose mass is not substantially smaller.

By construction, it is obvious that every vertex has at least one outgoing edge since for positive-mass vertex Eq.~\eqref{eq:edge-condition-1} holds either for $j = i-1$ or $j=i+1$, and for zero-mass vertex Eq.~\eqref{eq:edge-condition-2} holds for both neighbors.

There is at least one bidirectional adjacent pair. The existence of such pair follows from a continuity argument: since 
$1 \ra 2, \discreteNum-1 \la \discreteNum$, and every vertex has at least one outgoing edge, let $k^*\in \{2,\dots,\discreteNum\}$ be the smallest integer such that there exists an edge $k^* -1 \la k^*$. Then $k^* -1 \ra k^*$ because every vertex has at least one outgoing edge, and if the outgoing edge for $k^*-1$ is $k^*-1 \rightarrow k^*-2$, then  $k^*$ would not be the smallest vertex as defined. This means that $k^* - 1 \leftrightarrows k^*$ forms a bidirectional pair, thus confirming the existence.

Now, denote the collection of $H$ maximal blocks of consecutive vertices connected by bidirectional edges as
\begin{align*}
    \{D_i: D_i = \{l_i, l_i+1, \dots, r_i\} , \text{ where }l_i\leftrightarrows l_i+ 1\leftrightarrows\ldots\leftrightarrows r_i, ~i\in[H] \}
\end{align*}
Then we have 
\begin{align*}
    1\le l_1 < r_1 < l_2 < r_2 < \ldots < l_H \le \discreteNum.
\end{align*}

\paragraph{Step 2: Construct intervals from the graph.}
We now construct intervals, where we first subdivide each block into short groups, and then assign the vertices between consecutive blocks to one of the two neighboring groups.

We now divide each bidirectional block $D_i$ into $m_i:=\lfloor(r_i- l_i+1)/2\rfloor$ consecutive groups, whereas every group has two vertices, except that the last has three when
$r_i- l_i+1$ is odd. 
That is, denote the $j$-th group as
$\{\alpha_{i,j},\dots,\beta_{i,j}\}$, with endpoints
\begin{align}\label{eq:block-core-endpoints}
 \alpha_{i,j}&= l_i+2(j-1),\nonumber\\
 \beta_{i,j}&=
 \begin{cases}
   l_i+2j-1,&j<m_i,\\
  r_i,&j=m_i.
 \end{cases}
\end{align}

Notice that for any $i\in[H-1]$, all integer vertices between two consecutive bidirectional blocks must be connected separately by a leftwards chain and a rightwards chain (upon merging duplicated edges) as one move from left to right. That is,
\begin{align*}
    r_i \la \cdots \la w_i \ra \cdots \ra l_{i+1},
\end{align*}
for some $w_i
\in \{r_i, \dots, l_{i+1}\}$. 
(Otherwise, there exists another bidirectional block between $r_i$ and $l_{i+1}$.) We call $w_i$ as an intervening source vertex, where we assign it the block on its left when constructing intervals. Denote the endpoints of constructed intervals as $\{u_i, v_i\}, i\in [H]$.
Define the cut points and the ranges assigned to the blocks by
\begin{align}\label{eq:block-assignment}
 \omega_0&=0,\qquad\omega_H=\discreteNum,\qquad
 \omega_i=\min\{w_i, l_{i+1}-1\}\quad(i\in[H-1]),\nonumber\\
 u_i&=\omega_{i-1}+1,\qquad v_i=\omega_i\quad(i\in[H]).
\end{align}
They satisfy
\begin{align*}
 u_i\leq l_i<r_i\leq v_i,
 \qquad u_{i+1}=v_i+1\quad(i\in[H-1]),
\end{align*}
and all edges in each assigned range have the form
\begin{align}\label{eq:block-with-chains}
 u_i\ra\cdots\ra l_i\leftrightarrows\cdots
 \leftrightarrows r_i\la\cdots\la v_i.
\end{align}

Truncate the chains in
Eq.~\eqref{eq:block-with-chains} after at most $\lceil \log(3/\delta)\rceil$ vertices by setting
\begin{align}\label{eq:truncated-block-endpoints}
 s_i=u_i\vee( l_i-\lceil \log(3/\delta)\rceil),\qquad
 t_i=v_i\wedge(r_i+\lceil \log(3/\delta)\rceil).
\end{align}
Attach the retained left chain to the first group and the retained
right chain to the last group. That is, set
\begin{align*}
 s_{i,j}&=
 \begin{cases}
  s_i,&j=1,\\
  \alpha_{i,j},&j>1,
 \end{cases}
 &
 t_{i,j}&=
 \begin{cases}
  \beta_{i,j},&j<m_i,\\
  t_i,&j=m_i.
 \end{cases}
\end{align*}
Define intervals
\begin{align}\label{eq:constructed-intervals}
 I_{i,j}=[a_{i,j},b_{i,j}]
 =[(s_{i,j}-1)/\discreteNum,t_{i,j}/\discreteNum],
 \qquad i\in[H],\ j\in[m_i].
\end{align}
An example illustration of the intervals (in boxes, without truncation) as: 

\begin{align}\label{eq:endpoint_illustration}
  \cdots \la v_{i-1} \ra \underbrace{\boxed{ u_i\ra\cdots\ra l_i\leftrightarrows l_i + 1 } }_{\text{attach chain to first block pair}}\leftrightarrows \underbrace{\boxed{ l_i + 2 \leftrightarrows l_i + 3} }_{\text{grouped by 2}} \leftrightarrows   \cdots 
 \leftrightarrows \underbrace{\boxed{ r_i - 1 \leftrightarrows  r_i\la\cdots\la v_i}}_{\text{attach chain to last block pair}}  \ra \cdots .
\end{align}

Truncation could be illustrated as:

\begin{align}\label{eq:truncation_illustration}
 \cdots \la v_{i-1} \ra  u_i\ra\cdots\ra \underbrace{\boxed{l_i - \lceil \log(3/\delta)\rceil\ra \cdots\ra l_i\leftrightarrows l_i + 1 } }_{\text{attach truncated chain}}\leftrightarrows \underbrace{\boxed{ l_i + 2 \leftrightarrows l_i + 3} }_{\text{grouped by 2}} \leftrightarrows   \cdots 
 .
\end{align}

Remove any intervals of zero probability. Relabel those remaining as
$I_i=[a_i,b_i]$, $i\in[L]$, in increasing order. 


\paragraph{Step 3: Establish theoretical properties.}

We show that the intervals $[a_i,b_i], i\in[L]$ satisfy the desired properties. 
By construction each interval contains between $1$ and $2\lceil \log(3/\delta)\rceil+3$ vertices, thus Eq.~\eqref{eq:regular-condition-length} is satisfied.
To show Eq.~\eqref{eq:regular-condition}, Fix $i \in [L]$. 
By $s_i \geq u_i, t_i \leq v_i$ and the definition of $u_i,v_i$ (after relabeling), there exist edges $s_i \ra s_i + 1$ and $t_i - 1 \la t_i$. We will show that $s_i \ra s_i + 1$ implies that $\E[Z\mid Z \in [a_i,b_i]] \geq a_i + \epsilon \cdot \mathbbm{1}(a_i \neq 0)$ by considering the following three cases of $s_i \ra s_i + 1$:

\paragraph{Case 1: $s_i = 1$. } Then $a_i=0$ and 
trivially
$
    \E[Z\mid Z \in [a_i,b_i]] \geq a_i + \epsilon \cdot \mathbbm{1}(a_i \neq 0).
$

\paragraph{Case 2:  }
The edge is drawn by Eq.~\eqref{eq:edge-condition-1}, which is
\begin{align*}
    \E\Big[Z\mid Z \in [(s_i-1)/\discreteNum, s_i/\discreteNum]\Big] \geq  \frac{s_i-0.5}{\discreteNum} .
\end{align*}
Then we have
\begin{align*}
    \E[Z\mid Z \in [a_i,b_i]] = &~ \sum_{j=s_i}^{t_i} \frac{\PP(Z \in [(j-1)/\discreteNum, j/\discreteNum])}{\PP(Z \in [a_i,b_i])} \cdot \E[Z\mid Z \in [(j-1)/\discreteNum, j/\discreteNum]]\\
    \geq &~ \E\Big[Z\mid Z \in [(s_i-1)/\discreteNum, s_i/\discreteNum]\Big] \\
    \geq &~ \frac{s_i-0.5}{\discreteNum} \\
    \geq &~ a_i + \epsilon.
\end{align*}
The first step comes from the law of total expectation and the fact that $[(j-1)/\discreteNum, j/\discreteNum], j=s_i,\dots, t_i$ forms a partition of $[a_i,b_i]$; the second step comes from $\E[Z\mid Z \in [(j-1)/\discreteNum, j/\discreteNum]] \geq \E\Big[Z\mid Z \in [(s_i-1)/\discreteNum, s_i/\discreteNum]\Big]$ for all $j \geq s_i$; the third step comes from Eq.~\eqref{eq:edge-condition-1}; and the final step comes from
$a_i = (s_i-1)/\discreteNum,$ $ \epsilon \le 1/(4\discreteNum)$, so $ a_i + \epsilon \le (s_i-3/4)/\discreteNum \le (s_i - 0.5)/\discreteNum$.

\paragraph{Case 3: } 
The edge is drawn by Eq.~\eqref{eq:edge-condition-2}, which is
\begin{align*}
    \PP\Big(Z \in [(s_i-1)/\discreteNum, s_i/\discreteNum]\Big) \leq 3 \cdot \PP\Big(Z \in [s_i/\discreteNum, (s_i+1)/\discreteNum]\Big).
\end{align*}
Under Case 3, we have
\begin{align*}
    \E[Z\mid Z \in [a_i,b_i]] = &~ \sum_{j=s_i}^{t_i} \frac{\PP(Z \in [(j-1)/\discreteNum, j/\discreteNum])}{\PP(Z \in [a_i,b_i])} \cdot \E[Z \mid Z \in [(j-1)/\discreteNum, j/\discreteNum]]\\
    \geq &~ \frac{\PP(Z \in [(s_i-1)/\discreteNum, s_i/\discreteNum])}{\PP(Z \in [a_i,b_i])} \cdot \E[Z \mid Z \in [(s_i-1)/\discreteNum, s_i/\discreteNum]] \\
    &~ + \left(1-\frac{\PP(Z \in [(s_i-1)/\discreteNum, s_i/\discreteNum])}{\PP(Z \in [a_i,b_i])}\right) \cdot \E[Z \mid Z \in [s_i/\discreteNum, (s_i+1)/\discreteNum]] \\
    \geq &~ 3/4 \cdot (s_i-1)/\discreteNum + 1/4 \cdot s_i/\discreteNum\\
    \ge &~ a_i + \epsilon. 
\end{align*}

The first step comes from the law of total expectation and $[(j-1)/\discreteNum, j/\discreteNum], j=s_i,\dots, t_i$ forms a partition of $[a_i,b_i]$; the second step comes from pulling the first term out $j=s_i$ and then using $\E[Z\mid Z \in [(j-1)/\discreteNum, j/\discreteNum]] \geq \E\Big[Z\mid Z \in [s_i/\discreteNum, (s_i+1)/\discreteNum]\Big]$ for all $j \geq s_i+1$; the third step comes from Eq.~\eqref{eq:edge-condition-2}; the last step comes from $a_i = (s_i-1)/\discreteNum,$ $ \epsilon \le 1/(4\discreteNum)$, so $ a_i + \epsilon \le (s_i-3/4)/\discreteNum$.

Repeating similar arguments we can show $t_i - 1 \la t_i $ implies that $\E[Z\mid Z \in [a_i,b_i]] \leq b_i - \epsilon \cdot \mathbbm{1}(b_i \neq 1)$. Therefore, Eq.~\eqref{eq:regular-condition} is confirmed.

Finally, we establish Eq.~\eqref{eq:regular-condition-coverage} by bounding the probability of the cells removed when the
chains were truncated. 
Define $q_i:=\PP( Z \in [(i-1)/\discreteNum, i/\discreteNum])$. Recall that as demonstrated, on the left chain assigned to block $D_i$,
the reverse edge is absent at every step. In particular, for each
$d\in\{1,\dots, l_i-u_i\}$, the edge
$ l_i-d+1\ra l_i-d$ does not exist and thus Eq.\eqref{eq:edge-condition-2} is violated. Therefore, 
\begin{align*}
 q_{ l_i-d+1}>3q_{ l_i-d}.
\end{align*}
Iterating this inequality along the chain gives
\begin{align*}
 q_{ l_i-d}<3^{-d}q_{ l_i}.
\end{align*}
We then have 
\begin{align*}    \sum_{d=\lceil\log(3/\delta)\rceil+1}^{ l_i-u_i}q_{ l_i-d} < \sum_{d=\lceil\log(3/\delta)\rceil+1}^{ l_i-u_i}3^{-d} q_{ l_i} .
\end{align*}
The analogous argument on the right chain ($d\in\{1,\dots,v_i-r_i\}$) gives
\begin{align*}
 q_{r_i+d}&<3^{-d}q_{r_i}
\\ \sum_{d=\lceil\log(3/\delta)\rceil+1}^{v_i-r_i}q_{r_i+d} &< \sum_{d=\lceil\log(3/\delta)\rceil+1}^{v_i-r_i}3^{-d}q_{r_i}.
\end{align*}

Combining, the total mass sum of discarded cells on two sides
is at most
\begin{align*}
 \sum_{d=\lceil\log(3/\delta)\rceil+1}^{ l_i-u_i}q_{ l_i-d}
 +\sum_{d=\lceil\log(3/\delta)\rceil+1}^{v_i-r_i}q_{r_i+d} &\leq (q_{ l_i}+q_{r_i})\sum_{d=\lceil\log(3/\delta)\rceil+1}^{\infty}3^{-d}\\
 &=\frac12\,3^{-\lceil\log(3/\delta)\rceil}(q_{ l_i}+q_{r_i}) \tag{sum of geometric series}.
\end{align*}

 Recall that each point can at most belong to two of the intervals,
\begin{align*}
 \sum_{i=1}^H(q_{ l_i}+q_{r_i})\leq2,
\end{align*}
by union bound,
\begin{align*}
 \PP\Big(Z\notin\cup_{i\in[L]}[a_i,b_i]\Big)
 &\leq\frac12\,3^{-\lceil\log(3/\delta)\rceil}\sum_{i=1}^H(q_{ l_i}+q_{r_i})\\
 &\leq3^{-\lceil\log(3/\delta)\rceil}\leq e^{-\lceil\log(3/\delta)\rceil}\leq\delta/3\leq\delta.
\end{align*}

This proves Eq.~\eqref{eq:regular-condition-coverage} and completes
the proof.

\end{proof}

\subsection{Proof of Lemma~\ref{lem:conditional-dist-shift}}
\label{proof_lem:conditional-dist-shift}

\begin{proof}
Let $j_1,j_2,\dots,j_M$ be all integers in $[L]$ such that Eq.~\eqref{eq:distribution-shift} is not satisfied. If $M=0$, then all intervals satisfy Eq.~\eqref{eq:distribution-shift}, and choosing $\{i_1,\dots,i_J\} = [L]$ completes the proof.

Otherwise, for any $w \in \{j_1,j_2,\dots,j_M\}$, we have $\PP(V \in [a_{w},b_{w}]) > 0$ and
\begin{align*}
    \frac{2\epsilon}{\delta} < &~ \Big|\E\Big[U\mid V \in [a_{w},b_{w}]\Big] - \E\Big[V\mid V \in [a_{w},b_{w}]\Big]\Big|\\
    \leq &~ \E\Big[|U-V| \mid V \in [a_{w},b_{w}]\Big] \tag{by the triangle inequality}\\
    = &~ \frac{1}{\PP(V \in [a_{w},b_{w}])} \cdot \E\Big[|U-V| \cdot \mathbbm{1}(V \in [a_{w},b_{w}])\Big].
\end{align*}
Each point belongs to at most two intervals, so  $\sum_{w \in \{j_1,\dots,j_M\}} \mathbbm{1}(V \in [a_w,b_w]) \leq 2$. It follows that
\begin{align*}
    2\epsilon \geq &~ 2\E[|U-V|]\\
    \geq &~ \E\left[|U-V| \sum_{w \in \{j_1,\dots,j_M\}} \mathbbm{1}(V \in [a_{w},b_{w}])\right]\\
    = &~ \sum_{w \in \{j_1,j_2,\dots,j_M\}}\E\Big[|U-V| \cdot \mathbbm{1}(V \in [a_{w},b_{w}])\Big]\\
    > &~ \sum_{w \in \{j_1,j_2,\dots,j_M\}}\PP(V \in [a_{w},b_{w}]) \cdot \frac{2\epsilon}{\delta}\\
    \geq &~ \PP\Big(V \in \cup_{w \in \{j_1,j_2,\dots,j_M\}} [a_w,b_w]\Big) \cdot \frac{2\epsilon}{\delta}.
\end{align*}
Dividing both sides by $2\epsilon/\delta$ yields
\begin{align*}
    \PP\Big(V \in \cup_{w \in \{j_1,j_2,\dots,j_M\}} [a_w,b_w]\Big) < \delta.
\end{align*}
Setting $\{i_1,\dots,i_J\} := [L] \setminus \{j_1,\dots,j_M\}$, the remaining intervals satisfy Eq.~\eqref{eq:distribution-shift} by construction and Eq.~\eqref{eq:distribution-shift-coverage}.
\end{proof}

\subsection{Proof of Theorem~\ref{thm:tp}}
\label{proof_thm:tp}
\begin{proof}
Let $A_j=\{p(X)\in\interval_j\}$, $j\in[\numInterval]$, and
$A_0=\{p(X)\notin\cup_{j=1}^{\numInterval}\interval_j\}$.
These events form a partition. For each $j$ with $m_j>0$, Eq.~\eqref{eq:task-predicting_true_prob} gives $C_{\tp}(X;\epsilon)=E^\epsilon(\interval_j)$ on $A_j$.
The law of total probability therefore implies
\begin{align}\label{eq:task-predicting_true_prob_decomposition}
 \PP\Big(p^*(X)\notin C_{\tp}(X;\epsilon)\Big)~\leq\sum_{j:m_j>0}m_j
 \PP\Big(p^*(X)\notin E^\epsilon(\interval_j)\mid A_j\Big)
 +\PP(A_0).
\end{align}
For a positive-mass interval, let $\mu_j=\E[p^*(X)\mid A_j]$.
By the tower property and the \titleShort condition,
\begin{align*}
 \mu_j=\E[\E[Y\mid X]\mid A_j]=\E[Y\mid A_j]\in\interval_j.
\end{align*}
If $p^*(X)\notin E^\epsilon(\interval_j)$, its distance from every
point of $\interval_j$, including $\mu_j$, exceeds $\epsilon$.
Chebyshev's inequality conditionally on $A_j$ gives
\begin{align*}
 \PP\Big(p^*(X)\notin E^\epsilon(\interval_j)\mid A_j\Big)
 &\leq\PP\Big(|p^*(X)-\mu_j|>\epsilon\mid A_j\Big)\\
 &\leq\frac{\Var(p^*(X)\mid A_j)}{\epsilon^2}.
\end{align*}
Since $\PP(A_0)=1-\sum_{j=1}^{\numInterval}m_j$, substituting them into Eq.~\eqref{eq:task-predicting_true_prob_decomposition} yields
\begin{align*}
 \PP\Big(p^*(X)\notin C_{\tp}(X;\epsilon)\Big)~\leq\epsilon^{-2}\sum_{j:m_j>0}m_j
 \Var\Big(p^*(X)\mid p(X)\in\interval_j\Big)
 +\Big(1-\sum_{j=1}^{\numInterval}m_j\Big).
\end{align*}
This proves the theorem.
\end{proof}

\subsection{Proof of Theorem~\ref{thm:mc_error}}
\label{proof_thm:mc_error}

\begin{proof}
Fix a class $\iteratorClass\in[\numClass]$ and abbreviate
\[
\pi:=\pi_\iteratorClass,\qquad
t:=t_\iteratorClass,\qquad
J:=J_\iteratorClass,\qquad
S:=S_\iteratorClass,
\]
\[
m_j:=m_{\iteratorClass,j},\qquad
a_j:=a_{\iteratorClass,j},\qquad
b_j:=b_{\iteratorClass,j},
\qquad j=1,\ldots,J.
\]
Define the disjoint events
\[
A_0
:=
\left\{
p_\iteratorClass(X)\notin \bigcup_{j=1}^{J}[a_j,b_j]
\right\},
\qquad
A_j
:=
\bigl\{
p_\iteratorClass(X)\in [a_j,b_j]
\bigr\},
\quad j=1,\ldots,J.
\]
Let
\[
\mu_0
:=
\PP\bigl(Y(X)=\iteratorClass \mid A_0\bigr)\in[0,1],
\qquad
\mu_j
:=
\PP\bigl(Y(X)=\iteratorClass \mid A_j\bigr)\in[a_j,b_j],
\quad j=1,\ldots,J,
\]
where the inclusion $\mu_j\in[a_j,b_j]$ follows from the definition of \titleShorts.
Since the events $A_0,A_1,\ldots,A_J$ partition the sample space,
\begin{align}
\pi
=
\PP\bigl(Y(X)=\iteratorClass\bigr)
=
t\mu_0+\sum_{j=1}^{J} m_j\mu_j.
\label{eq:mc_partition_identity}
\end{align}

Now,
\begin{align}
&~\PP\Bigl(Y(X)\notin C_{\mc}(X;S),\,Y(X)=\iteratorClass\Bigr)
\nonumber\\
=
&~t\mu_0+\sum_{j\notin S} m_j\mu_j,
\label{eq:mc_miscoverage_numerator}
\end{align}
because class $\iteratorClass$ is omitted precisely on the uncovered region $A_0$ and on those intervals $A_j$ with $j\notin S$.
Dividing Eq.~\eqref{eq:mc_miscoverage_numerator} by $\pi$ gives
\begin{align}
\PP\Bigl(Y(X)\notin C_{\mc}(X;S)\mid Y(X)=\iteratorClass\Bigr)
=
\frac{t\mu_0+\sum_{j\notin S} m_j\mu_j}{\pi}.
\label{eq:mc_conditional_error_exact}
\end{align}

We first prove the upper-endpoint bound.
Since $\mu_0\le 1$ and $\mu_j\le b_j$ for every $j\notin S$,
\[
t\mu_0+\sum_{j\notin S} m_j\mu_j
\le
t+\sum_{j\notin S} m_j b_j.
\]
Combining with Eq.~\eqref{eq:mc_conditional_error_exact} yields
\[
\PP\Bigl(Y(X)\notin C_{\mc}(X;S)\mid Y(X)=\iteratorClass\Bigr)
\le
\Gamma_{\iteratorClass}^{\mathrm{up}}(S_\iteratorClass).
\]

Next we prove the lower-endpoint bound.
By Eqs.~\eqref{eq:mc_partition_identity} and \eqref{eq:mc_miscoverage_numerator},
\[
t\mu_0+\sum_{j\notin S} m_j\mu_j
=
\pi-\sum_{j\in S} m_j\mu_j.
\]
Since $\mu_j\ge a_j$ for all $j\in S$,
\[
t\mu_0+\sum_{j\notin S} m_j\mu_j
\le
\pi-\sum_{j\in S} m_j a_j.
\]
Combining again with Eq.~\eqref{eq:mc_conditional_error_exact} gives
\[
\PP\Bigl(Y(X)\notin C_{\mc}(X;S)\mid Y(X)=\iteratorClass\Bigr)
\le
\Gamma_{\iteratorClass}^{\mathrm{low}}(S_\iteratorClass).
\]
Therefore
\[
\PP\Bigl(Y(X)\notin C_{\mc}(X;S)\mid Y(X)=\iteratorClass\Bigr)
\le
\Gamma_{\iteratorClass}(S_\iteratorClass).
\]

It remains to prove exactness.
Define
\[
U:=t+\sum_{j\notin S} m_j b_j,
\qquad
L:=\pi-\sum_{j\in S} m_j a_j.
\]
We show that the miscoverage numerator in Eq.~ \eqref{eq:mc_miscoverage_numerator} can attain $\min\{U,L\}$.

\emph{Case 1: $U\le L$.}
Set
\[
\mu_0=1,
\qquad
\mu_j=b_j \ \text{ for all } j\notin S.
\]
Then the miscoverage numerator equals $U$.
To make Eq.~\eqref{eq:mc_partition_identity} hold, we need to choose $\mu_j\in[a_j,b_j]$ for $j\in S$ so that
\[
\sum_{j\in S} m_j\mu_j=\pi-U.
\]
This is possible because
\[
U\le L=\pi-\sum_{j\in S} m_j a_j
\quad \Longrightarrow \quad
\pi-U\ge \sum_{j\in S} m_j a_j,
\]
and, since any admissible law must satisfy
\[
\pi
\le
t+\sum_{j\notin S} m_j b_j+\sum_{j\in S} m_j b_j
=
U+\sum_{j\in S} m_j b_j,
\]
we also have
\[
\pi-U\le \sum_{j\in S} m_j b_j.
\]
Hence $\pi-U$ lies in the interval
\[
\left[\sum_{j\in S} m_j a_j,\ \sum_{j\in S} m_j b_j\right],
\]
so such a choice of $\{\mu_j\}_{j\in S}$ exists.
Therefore equality is attained with value
\[
\frac{U}{\pi}
=
\Gamma_{\iteratorClass}^{\mathrm{up}}(S_\iteratorClass)
=
\Gamma_{\iteratorClass}(S_\iteratorClass).
\]

\emph{Case 2: $L<U$.}
Set
\[
\mu_j=a_j \ \text{ for all } j\in S.
\]
Then the retained intervals contribute exactly $\sum_{j\in S} m_j a_j$, so by Eq.~\eqref{eq:mc_partition_identity} the miscoverage numerator must equal $L$.
It remains to show that one can choose $\mu_0\in[0,1]$ and $\mu_j\in[a_j,b_j]$ for $j\notin S$ so that
\[
t\mu_0+\sum_{j\notin S} m_j\mu_j=L.
\]
The attainable range of the left-hand side is
\[
\left[\sum_{j\notin S} m_j a_j,\ t+\sum_{j\notin S} m_j b_j\right].
\]
Because any admissible law must satisfy
\[
\pi\ge \sum_{j=1}^{J} m_j a_j,
\]
we have
\[
L
=
\pi-\sum_{j\in S} m_j a_j
\ge
\sum_{j\notin S} m_j a_j.
\]
And by assumption,
\[
L<U=t+\sum_{j\notin S} m_j b_j.
\]
Hence $L$ lies in the attainable range, so equality is attained with value
\[
\frac{L}{\pi}
=
\Gamma_{\iteratorClass}^{\mathrm{low}}(S_\iteratorClass)
=
\Gamma_{\iteratorClass}(S_\iteratorClass).
\]

This proves exactness in both cases.
\end{proof}

\subsection{Proof of Theorem~\ref{thm:mc_optimality}}
\label{proof_thm:mc_optimality}

\begin{proof}
By construction, $\Gamma_{\iteratorClass}\bigl(S_\iteratorClass^\star(\alpha_\iteratorClass)\bigr)\le \alpha_\iteratorClass$ for every $\iteratorClass\in[\numClass]$.
Applying Theorem~\ref{thm:mc_error} with $S=S^\star(\alpha)$ yields
\[
\PP\Bigl(Y(X)\notin C_{\mc}^\star(X;\alpha)\mid Y(X)=\iteratorClass\Bigr)
\le
\alpha_\iteratorClass,
\]
which proves the coverage claim.

We now prove size-optimality.
For any $S=(S_1,\ldots,S_\numClass)$,
\begin{align*}
\E\bigl[|C_{\mc}(X;S)|\bigr]
=
\sum_{\iteratorClass=1}^{\numClass}
\PP\Bigl(\iteratorClass\in C_{\mc}(X;S)\Bigr).
\end{align*}
Since the intervals in $\intervalset_{p_\iteratorClass}$ are disjoint for each fixed class $\iteratorClass$,
\[
\PP\Bigl(\iteratorClass\in C_{\mc}(X;S)\Bigr)
=
\sum_{j\in S_\iteratorClass}
\PP\bigl(p_\iteratorClass(X)\in [a_{\iteratorClass,j},b_{\iteratorClass,j}]\bigr)
=
\sum_{j\in S_\iteratorClass} m_{\iteratorClass,j}.
\]
Therefore
\begin{align}
\E\bigl[|C_{\mc}(X;S)|\bigr]
=
\sum_{\iteratorClass=1}^{\numClass}
\sum_{j\in S_\iteratorClass} m_{\iteratorClass,j}.
\label{eq:mc_expected_size}
\end{align}

The objective \eqref{eq:mc_expected_size} separates over classes, and the feasibility constraints
\[
\Gamma_{\iteratorClass}(S_\iteratorClass)\le \alpha_\iteratorClass,
\qquad
\iteratorClass\in[\numClass],
\]
also separate over classes.
Hence the global minimization problem decomposes into $\numClass$ independent classwise problems, and each classwise optimizer is exactly $S_\iteratorClass^\star(\alpha_\iteratorClass)$ by \eqref{eq:mc_population_opt}.
Summing the classwise minima proves the result.
\end{proof}

\subsection{Proof of Theorem~\ref{thm:mc_same_sample}}
\label{proof_thm:mc_same_sample}

\begin{proof}
Let $\mathcal{E}_{\mathrm{PI}}$ denote the event that all intervals returned by Algorithm~1 are genuine disjoint \titleShorts simultaneously.
By assumption,
\[
\PP(\mathcal{E}_{\mathrm{PI}})\ge 1-\delta_{\mathrm{PI}}.
\]

For each class $\iteratorClass\in[\numClass]$ and each candidate interval $I\in\mathcal{A}_{p_\iteratorClass}$, define
\[
Z_k^{(\iteratorClass,I)}
:=
\mathbbm{1}\bigl(p_\iteratorClass(X_k)\in I\bigr),
\qquad
k=1,\ldots,n_{\mathrm{cal}}.
\]
Conditioned on the data-generating distribution, these are i.i.d.\ Bernoulli random variables with mean
\[
m_{\iteratorClass}(I)
:=
\PP\bigl(p_\iteratorClass(X)\in I\bigr).
\]
Hence Hoeffding's inequality gives, for every fixed $(\iteratorClass,I)$,
\[
\PP\left(
\left|
\frac{1}{n_{\mathrm{cal}}}\sum_{k=1}^{n_{\mathrm{cal}}} Z_k^{(\iteratorClass,I)}
-
m_{\iteratorClass}(I)
\right|
>
\varepsilon_m
\right)
\le
2e^{-2n_{\mathrm{cal}}\varepsilon_m^2}.
\]
Applying a union bound over all candidate intervals in all classes yields
\[
\PP\left(
\sup_{\iteratorClass\in[\numClass]}
\sup_{I\in\mathcal{A}_{p_\iteratorClass}}
\left|
\frac{1}{n_{\mathrm{cal}}}\sum_{k=1}^{n_{\mathrm{cal}}} Z_k^{(\iteratorClass,I)}
-
m_{\iteratorClass}(I)
\right|
>
\varepsilon_m
\right)
\le
2N_{\mathrm{cand}}e^{-2n_{\mathrm{cal}}\varepsilon_m^2}
=
\delta_m.
\]
Therefore, with probability at least $1-\delta_m$, we have simultaneously for every class $\iteratorClass$ and every candidate interval $I\in \mathcal{A}_{p_\iteratorClass}$,
\[
\left|
\widehat m_{\iteratorClass}(I)-m_{\iteratorClass}(I)
\right|
\le
\varepsilon_m.
\]
Since each selected interval $\widehat I_{\iteratorClass,j}$ belongs to $\mathcal{A}_{p_\iteratorClass}$ by construction, it follows that
\[
m_{\iteratorClass,j}
:=
\PP\bigl(p_\iteratorClass(X)\in \widehat I_{\iteratorClass,j}\bigr)
\ge
\underline m_{\iteratorClass,j}
\qquad
\text{for all } \iteratorClass \text{ and } j.
\]
Let $\mathcal{E}_m$ denote this event.

Next, for each class $\iteratorClass$, the random variables $\mathbbm{1}(Y_k=\iteratorClass)$ are i.i.d.\ Bernoulli with mean $\pi_\iteratorClass$.
Another Hoeffding bound and a union bound over $\iteratorClass\in[\numClass]$ give
\[
\PP\left(
\sup_{\iteratorClass\in[\numClass]}
\bigl|
\widehat\pi_\iteratorClass-\pi_\iteratorClass
\bigr|
>
\varepsilon_\pi
\right)
\le
2\numClass\,e^{-2n_{\mathrm{cal}}\varepsilon_\pi^2}
=
\delta_\pi.
\]
Hence, with probability at least $1-\delta_\pi$, we simultaneously have
\[
\underline\pi_\iteratorClass
\le
\pi_\iteratorClass
\le
\overline\pi_\iteratorClass
\qquad
\text{for all }\iteratorClass\in[\numClass].
\]
Let $\mathcal{E}_\pi$ denote this event.

By the union bound,
\[
\PP\bigl(\mathcal{E}_{\mathrm{PI}}\cap \mathcal{E}_m\cap \mathcal{E}_\pi\bigr)
\ge
1-\delta_{\mathrm{PI}}-\delta_m-\delta_\pi.
\]
Fix a realization of $D_{\mathrm{cal}}$ in the event
\[
\mathcal{E}
:=
\mathcal{E}_{\mathrm{PI}}\cap \mathcal{E}_m\cap \mathcal{E}_\pi.
\]
Conditioned on this realization, the intervals $\widehat I_{\iteratorClass,j}=[\widehat a_{\iteratorClass,j},\widehat b_{\iteratorClass,j}]$ are now deterministic and genuine \titleShorts, so Theorem~\ref{thm:mc_error} applies to them.

Fix a class $\iteratorClass\in[\numClass]$ and a subset $S_\iteratorClass\subseteq[\widehat J_\iteratorClass]$.
By Theorem~\ref{thm:mc_error},
\begin{align}
&~
\PP\Bigl(
Y(X)\notin \widehat C_{\mc}(X;S)
\ \Big|\
Y(X)=\iteratorClass,\ D_{\mathrm{cal}}
\Bigr)
\nonumber\\
\le&~
\min\Biggl\{
\frac{
\widehat t_\iteratorClass
+\sum_{j\notin S_\iteratorClass}
\widehat m_{\iteratorClass,j}^{\mathrm{pop}}
\widehat b_{\iteratorClass,j}
}{
\pi_\iteratorClass
},
\ 
1-
\frac{
\sum_{j\in S_\iteratorClass}
\widehat m_{\iteratorClass,j}^{\mathrm{pop}}
\widehat a_{\iteratorClass,j}
}{
\pi_\iteratorClass
}
\Biggr\},
\label{eq:mc_empirical_start}
\end{align}
where
\[
\widehat m_{\iteratorClass,j}^{\mathrm{pop}}
:=
\PP\bigl(p_\iteratorClass(X)\in \widehat I_{\iteratorClass,j}\mid D_{\mathrm{cal}}\bigr),
\qquad
\widehat t_\iteratorClass
:=
1-\sum_{j=1}^{\widehat J_\iteratorClass}\widehat m_{\iteratorClass,j}^{\mathrm{pop}}.
\]
Using
\[
\widehat t_\iteratorClass
+
\sum_{j\notin S_\iteratorClass}
\widehat m_{\iteratorClass,j}^{\mathrm{pop}}
\widehat b_{\iteratorClass,j}
=
1-
\sum_{j\in S_\iteratorClass}\widehat m_{\iteratorClass,j}^{\mathrm{pop}}
-
\sum_{j\notin S_\iteratorClass}
(1-\widehat b_{\iteratorClass,j})
\widehat m_{\iteratorClass,j}^{\mathrm{pop}},
\]
together with $\widehat m_{\iteratorClass,j}^{\mathrm{pop}}\ge \underline m_{\iteratorClass,j}$ on $\mathcal{E}_m$, we obtain
\[
\widehat t_\iteratorClass
+
\sum_{j\notin S_\iteratorClass}
\widehat m_{\iteratorClass,j}^{\mathrm{pop}}
\widehat b_{\iteratorClass,j}
\le
1-
\sum_{j\in S_\iteratorClass}\underline m_{\iteratorClass,j}
-
\sum_{j\notin S_\iteratorClass}
(1-\widehat b_{\iteratorClass,j})
\underline m_{\iteratorClass,j}.
\]
Since $\pi_\iteratorClass\ge \underline\pi_\iteratorClass$ on $\mathcal{E}_\pi$, it follows that
\[
\frac{
\widehat t_\iteratorClass
+\sum_{j\notin S_\iteratorClass}
\widehat m_{\iteratorClass,j}^{\mathrm{pop}}
\widehat b_{\iteratorClass,j}
}{
\pi_\iteratorClass
}
\le
\widehat\Gamma_{\iteratorClass}^{\mathrm{up}}(S_\iteratorClass).
\]
Likewise, on $\mathcal{E}_m$ and $\mathcal{E}_\pi$,
\[
\sum_{j\in S_\iteratorClass}
\widehat m_{\iteratorClass,j}^{\mathrm{pop}}
\widehat a_{\iteratorClass,j}
\ge
\sum_{j\in S_\iteratorClass}
\underline m_{\iteratorClass,j}
\widehat a_{\iteratorClass,j},
\qquad
\pi_\iteratorClass\le \overline\pi_\iteratorClass,
\]
so
\[
1-
\frac{
\sum_{j\in S_\iteratorClass}
\widehat m_{\iteratorClass,j}^{\mathrm{pop}}
\widehat a_{\iteratorClass,j}
}{
\pi_\iteratorClass
}
\le
1-
\frac{
\sum_{j\in S_\iteratorClass}
\underline m_{\iteratorClass,j}
\widehat a_{\iteratorClass,j}
}{
\overline\pi_\iteratorClass
}
=
\widehat\Gamma_{\iteratorClass}^{\mathrm{low}}(S_\iteratorClass).
\]
Combining these inequalities with Ineq. \eqref{eq:mc_empirical_start} yields
\[
\PP\Bigl(
Y(X)\notin \widehat C_{\mc}(X;S)
\ \Big|\
Y(X)=\iteratorClass,\ D_{\mathrm{cal}}
\Bigr)
\le
\widehat\Gamma_{\iteratorClass}(S_\iteratorClass).
\]
Since the argument holds simultaneously for every class and every subset on the event $\mathcal{E}$, the theorem follows.

Finally, if $\widehat S_\iteratorClass^\star(\alpha_\iteratorClass)$ is feasible for every class, then
\[
\widehat\Gamma_{\iteratorClass}\bigl(\widehat S_\iteratorClass^\star(\alpha_\iteratorClass)\bigr)
\le
\alpha_\iteratorClass.
\]
Applying the displayed bound above with $S_\iteratorClass=\widehat S_\iteratorClass^\star(\alpha_\iteratorClass)$ gives
\[
\PP\Bigl(
Y(X)\notin \widehat C_{\mc}^\star(X;\alpha)
\ \Big|\
Y(X)=\iteratorClass,\ D_{\mathrm{cal}}
\Bigr)
\le
\alpha_\iteratorClass
\]
for every class $\iteratorClass\in[\numClass]$.
\end{proof}
\clearpage
\section{Implementation Details}
\subsection{Baselines}
\label{app:baselines}
\begin{enumerate}
\item \textbf{\titleShort.} We construct \titleShorts from the calibration split. For Task~1, We adapt the procedure of Eq.~\eqref{eq:task-predicting_true_prob} for an $\epsilon$-expansion of \titleShorts.  For Task~2, we use Eq.~\eqref{eq:mc_acceptance}. That is, include class if its predicted probability lies in the an acceptance region by the subset solver in Algorithm~\ref{alg:mc_population_optimal}.

\item \textbf{Simultaneous confidence interval.}
Suppose a point estimator $p_{\theta}$ is parametrized by $\theta \in \mathbb{R}^d$. 
Given covariate value $x$, denote $p_{\theta^*}(x)$ as the true conditional probability when correctly specified. The simultaneous confidence interval for $p_{\theta^*}(x)$ is computed from a Wald confidence region for $\theta^*$ (using the chi-square critical value).
Let $\hat{\theta}$ be the fitted parameter and let $\widehat V$ be an estimate of its covariance matrix.
Let
\[
p_\theta(x) = g^{-1}\!\bigl(z(x)^\top \theta\bigr),
\]
where $z(x)$ is the feature vector (for example, $z(x) = (1~x^\top)^\top$) and $g^{-1}$ is the inverse link function. 
The simultaneous confidence interval for $p_{\theta^*}(x)$ is
\[
\mathrm{SCI}_{1-\alpha}(x)
=
\left[
\inf_{\theta \in \mathcal C_{1-\alpha}} p_\theta(x),
\;
\sup_{\theta \in \mathcal C_{1-\alpha}} p_\theta(x)
\right],
\]
where 
\[
\mathcal C_{1-\alpha}
=
\left\{
\theta \in \mathbb{R}^d :
(\theta-\hat\theta)^\top \widehat V^{-1}(\theta-\hat\theta)
\le \chi^2_{d,\,1-\alpha}
\right\}.
\]

\item \textbf{Split conformal prediction.} For each test covariate input $x$, the split conformal interval is 
\[
\widehat C_{1-\alpha}(x)
=
\bigl[\hat p(x)-\hat q_{1-\alpha},\ \hat p(x)+\hat q_{1-\alpha}\bigr]\cap[0,1],
\]
where $\hat q_{1-\alpha}$ is an empirical $1-\alpha$ quantile of the residual scores $|Y_i-\hat p(X_i)|$ on $n_\mathrm{cal}$ calibration data points. The coverage satisfies
\begin{align}\label{eq:conformal_tp}
\PP\Big(Y \in \widehat C_{1-\alpha}(x) \mid \{X_i, Y_i\}_{i\in [n_\mathrm{cal}]}\Big) \in \Big[1-\alpha,1-\alpha+(1+n_\mathrm{cal})^{-1}\Big)
\end{align}
\citep{papadopoulos2002inductive}.

\item \textbf{Calibration-based interval.}
We discretize $[0,1]$ into bins $B_k=(b_{k-1},b_k]$ with midpoint $m_k=(b_{k-1}+b_k)/2$. 
We retain a subset of bins such that their mass-weighted absolute calibration error is close to that of \titleShort; if that bin is not retained, we use the retained bin whose midpoint is closest to $\hat p(x)$.

\item \textbf{Fixed-width binning.} We discretize $[0,1]$ into fixed bins and use the bin containing $\hat p(x)$ as the reported interval.

\item \textbf{Conformal classification.}
This an additional baseline only for Task~2 (which differs from the others by not relying on intervals of point predictions). Let $\hat p_g(x)$ denote the predicted probability of class $g$. 
Given a test covariate input $x$, the label set includes classes whose predicted probability exceeds a threshold $\hat \tau_\alpha$,
\[
\widehat \Gamma_{1-\alpha}(x)
=
\{g \in [K] : \hat p_g(x) \ge \hat \tau_\alpha\},
\]
where $\hat \tau_\alpha$ is obtained from the lower tail of scores on calibration data:
\[
\hat \tau_\alpha = \lceil\alpha(n_\mathrm{cal} + 1)\rceil \text{ smallest of $\{\hat p_{Y_i}(X_i)\}_{ i\in{[n_\mathrm{cal}]}}$}.
\]
Analogous to Eq.~\eqref{eq:conformal_tp}, it is marginally guaranteed that on the test data 
\[
\PP\Big(Y \in \widehat \Gamma_{1-\alpha}(x)\mid \{X_i, Y_i\}_{i\in [n_\mathrm{cal}]}\Big) \in \Big[1-\alpha,1-\alpha+(1+n_\mathrm{cal})^{-1}\Big).
\]
\end{enumerate}

\paragraph{Interval subset solver for Task 2} For \titleShort, fixed-width binning, calibration-based intervals, and split conformal prediction, the classwise candidate intervals from the calibration set are passed to Algorithm~\ref{alg:mc_population_optimal}. For each class the solver keeps a min-mass subset whose plugin two-sided certificate meets the target \(\alpha\). Simultaneous confidence intervals and conformal classification bypass the solver. For simultaneous CIs we include class \(g\) if the upper endpoint satisfies \(U_g(x)\ge\tau\), with \(\tau\) the \(\lceil\alpha(n_{\mathrm{cal}}+1)\rceil\)-th smallest of \(\{U_{Y_i}(X_i)\}\). For conformal classification, we use the calibration set to select a threshold on point predictions.

\subsection{Metrics}
\label{app:metrics}
For Task~1, we report the average interval length, calibration errors, the coverage of the prediction space $[0,1]$, and the coverage of the target probability $p^*(x)$ on the test data. For a method that outputs intervals $\widehat C(X_i)=[L_i,U_i]$ on evaluation points $\{X_i\}_{i=1}^{n_{\mathrm{eval}}}$, let $\mathcal I=\{I_1,\dots,I_M\}$ denote the set of distinct reported intervals, where $I_j=[\ell_j,u_j]$. For each interval $I_j$, define its midpoint as $m_j=(\ell_j+u_j)/2$ and let $A_j=\{i:\widehat C(X_i)=I_j\}$ be the set of evaluation points assigned to that interval. We estimate the expected calibration error of the intervals relative to their midpoints by:
\[
\mathrm{ECE_{mid}}
=
\frac{1}{M}\sum_{j=1}^M
\left|
\frac{1}{|A_j|}\sum_{i\in A_j} p^*(X_i)-m_j
\right|.
\]
We also estimate the expected calibration error of the intervals relative to the predictions $p(X_i)$ by:
\[
\mathrm{ECE_{pred}}
=
\frac{1}{M}\sum_{j=1}^M
\left|
\frac{1}{|A_j|}\sum_{i\in A_j} p^*(X_i)- \frac{1}{|A_j|}\sum_{i\in A_j} p(X_i)
\right|.
\]
Intuitively, $\mathrm{ECE_{mid}}$ assesses the discrepancy between an interval’s midpoint and the average target probability $p^*(X)$ among points assigned to that interval, capturing how well the interval is centered around the target probabilities.
$\mathrm{ECE_{pred}}$ measures the discrepancy between the average prediction $p(X)$ and the average target probability $p^*(X)$ among points assigned to each interval, representing if grouping align with the underlying predictions.
The reported relative ECE is the mean ECE of each method divided by the smallest mean ECE across all methods.

For Task 2, we measure class-conditional error on label set $C(X;\tau)$,
\[
\frac{\sum_{i=1}^{n_{\text{eval}}} q_g(X_i)\mathbf{1}\{g\notin C(X_i;\tau)\}}
     {\sum_{i=1}^{n_{\text{eval}}} q_g(X_i)}.
\] We also report the average prediction-set size.

\subsection{Setup for Task 1}\label{app:task1_setup}

\paragraph{Univariate interval visualization (Figure~\ref{fig:task1_interval_visualization}).}
The covariate is sampled uniformly from a discrete grid $\mathrm{X\_GRID} = \{-2,\ldots,2\}$ with $1000$ equally spaced points, and the noise-free target probability used for evaluation is
\[
p^*(x) = \sigma(2x),
\]
where $\sigma(t) = 1/(1+e^{-t})$.
Observed labels are generated from the noisy logistic model
\[
Y \mid X=x,\xi \sim \mathrm{Bernoulli}\!\left(\sigma(2x + \xi)\right),
\qquad
\xi \sim N(0, 0.1^2).
\]
Thus the marginal conditional label probability is $\PP(Y=1\mid X=x)=\mathbb E_\xi[\sigma(2x+\xi)]$, whereas the reported target-probability coverage uses the noise-free $p^*(x)=\sigma(2x)$ above.
We set 
\[
n_{\mathrm{train}} = 2000,\qquad
n_{\mathrm{cal}} = 2000,\qquad
n_{\mathrm{eval}} = 10000.
\]
The common method settings are $\alpha = 0.1$, PICPI with \texttt{num\_bin}$=50$ in empirical mode, PICPI expansion parameter $\epsilon = 0.02$, and $50$ bins for the calibration-based and fixed-width baselines. The shared point predictor is scikit-learn logistic regression with its default $\ell_2$ penalty and regularization strength $1$.

\paragraph{Misspecified decision-tree illustration (Figure~\ref{fig:task1_misspecified_tree}).}
For the qualitative misspecification figure, the covariate is drawn from
\[
X \sim \mathrm{Unif}[-2,2],
\]
with true probability
\[
p^*(x) = \sigma\!\left(1.2\sin(2.5x) + 0.8x\right),
\]
and labels
\[
Y \mid X=x \sim \mathrm{Bernoulli}(p^*(x)).
\]
The point predictor is a deliberately misspecified decision tree with \texttt{max\_depth = 5} and \texttt{min\_samples\_leaf = 25}, fitted on
\[
n_{\mathrm{train}} = 20000,\qquad
n_{\mathrm{cal}} = 20000
\]
samples. PICPI is built with \texttt{num\_bin}$=100$ candidate grid points in empirical mode and expanded by $\epsilon=0.02$. The curves are plotted on a dense evaluation grid of $600$ evenly spaced points in $[-2,2]$.

\paragraph{Multivariate result table (Table~\ref{tab:task1_multivariate_mc_summary}).}
The covariate is
\[
X \sim N(0, I_{20}),
\]
and the true target probability is
\[
p^*(x) = \sigma(\beta_0 + x^\top \beta),
\qquad
\beta_0 = 0,
\qquad
\beta = (1.6,\,-1.1,\,0.9,\,0,\ldots,0)^\top \in \mathbb{R}^{20}.
\]
Labels are again generated from a noisy logistic model,
\[
Y \mid X=x,\xi \sim \mathrm{Bernoulli}\!\left(\sigma(\beta_0 + x^\top \beta + \xi)\right),
\qquad
\xi \sim N(0, 0.1^2).
\]
As in the univariate experiment, the reported target-probability coverage uses the noise-free $p^*(x)$ above, not the Gaussian-averaged marginal conditional label probability.
Each replication uses
\[
n_{\mathrm{train}} = 2000,\qquad
n_{\mathrm{cal}} = 2000,\qquad
n_{\mathrm{eval}} = 10000,
\]
and the table is averaged over $100$ Monte Carlo replications. The shared point predictor is scikit-learn logistic regression with its default $\ell_2$ penalty. Unless otherwise stated, the method hyperparameters are the same as in the univariate setting: $\alpha = 0.1$, PICPI with \texttt{num\_bin}$=50$ candidate grid points and $\varepsilon_{\mathrm{PICPI}} = 0.02$, and $50$ bins for the two binning baselines.

\paragraph{Split conformal implementation.}
For the Task~1 split conformal baseline, the implementation computes residual scores $R_i=|Y_i-\hat p(X_i)|$ on the calibration split and sets
\[
u=\min\!\left\{\frac{\lceil(1-\alpha)(n_{\mathrm{cal}}+1)\rceil}{n_{\mathrm{cal}}},1\right\},
\qquad
\hat q_{1-\alpha}=\operatorname{Quantile}^{\mathrm{linear}}_u
\bigl(\{R_i\}_{i\in[n_{\mathrm{cal}}]}\bigr).
\]
Here $\operatorname{Quantile}^{\mathrm{linear}}$ denotes the default linear-interpolation convention of \texttt{numpy.quantile}. This numerical implementation uses an interpolated empirical quantile rather than the exact conformal order statistic.

\paragraph{Empirical mode diagnostics (Figure~\ref{fig:emp_mode_diagnostics}).}
\label{app:task1_setup_emp_mode_diagnostics} 
The covariate and noisy-logit DGP, training size $n_{\mathrm{train}}=2000$, and fitted logistic model are the same as for Table~\ref{tab:task1_multivariate_mc_summary}. We compare empirical-mode and population-mode PICPI using $100$ equal-width score bins, with $\delta=0.1$ for population mode. We vary the calibration size over $\{10^3,3\times10^3,10^4,3\times10^4,10^5,3\times10^5,10^6,3\times10^6,10^7\}$, use $n_{\mathrm{eval}}=10^6$, and run $50$ replications for each calibration size. In these diagnostics, the held-out true probability is the Gaussian-averaged marginal probability $\mathbb E_\xi[\sigma(\beta_0+x^\top\beta+\xi)]$.

\subsection{Setup for Task 2}\label{app:task2_setup}

\paragraph{Coverage sweep for two DGPs (Figure~\ref{fig:task2})}
 Both use a $d = 10$ dimensional covariate and $\numClass = 4$ classes. Given the true class-probability vector $q(x) \in [0,1]^{\numClass}$, labels are drawn as
\[
Y \mid X = x \sim \mathrm{Categorical}\big(q(x)\big).
\]
Each replication uses disjoint splits of sizes
\[
n_{\mathrm{train}} = n_{\mathrm{cal}} = n_{\mathrm{eval}} = 2000,
\]
and all curves are averages over $1000$ independent replications. \titleShort uses \texttt{num\_bin}$=100$ equally spaced candidate grid points in empirical mode, while the calibration-based and fixed-width baselines use $100$ bins. Candidate intervals from \titleShort, split conformal prediction, calibration-based intervals, and fixed-width binning are passed to the classwise interval-selection solver (Algorithm~\ref{alg:mc_population_optimal}). For split conformal prediction, each class uses at most $200$ interval centers sampled without replacement from its calibration scores. Simultaneous confidence intervals and conformal classification instead use a calibration threshold, as described in Section~\ref{sec:experiment}. The coverage sweep evaluates all methods at $16$ equally spaced target coverage levels $1 - \alpha \in \{0, 1/15, \ldots, 1\}$.

\paragraph{DGP 1: Linear-Gaussian.}
The covariate is drawn from
\[
X \sim N(0, I_{10}),
\]
and the true class probabilities follow a linear softmax model,
\[
q(x) = \mathrm{softmax}(W x + b),
\]
with a sparse weight matrix
{\small
\[
\setlength{\arraycolsep}{3pt}
W =
\begin{bmatrix}
1.4 & -1.0 & 0.8 & 0 & 0 & 0 & 0 & 0 & 0 & 0 \\
0 & -0.7 & 0 & 1.1 & 0.8 & 0 & 0 & 0 & 0 & 0 \\
0 & 0 & 0.9 & 0 & 0 & -1.2 & 1.0 & 0 & 0 & 0 \\
-0.6 & 0 & 0 & 0 & 0 & 0 & 0 & 0.8 & -0.9 & 1.2
\end{bmatrix}
\]
}%
and an intercept $b = (0.25,\,-0.20,\,0.15,\,-0.20)^\top$.
The multinomial logistic working model is therefore correctly specified.

\paragraph{DGP 2: Nonlinear-mixture.}
The covariate is drawn from a four-component latent-factor mixture with component weights
\[
(0.32,\, 0.24,\, 0.27,\, 0.17).
\]
Equivalently, a latent component label is first drawn as
\[
k \sim \mathrm{Categorical}(0.32,\, 0.24,\, 0.27,\, 0.17).
\]
Conditional on component $k\in\{1,\ldots, 4\}$, an initial draw $\widetilde{X}\in\mathbb{R}^{10}$ is generated as
\[
\widetilde{X} = \mu_k + \Lambda^\top Z + \mathrm{diag}(\sigma_k)\, h(Z)\, S\, \varepsilon,
\qquad
Z \sim N(0, I_2),
\quad
\varepsilon \sim N(0, I_{10}),
\]
where $S = \sqrt{7/\chi^2_7}$ induces Student-$t$ (heavy) tails and $h(z) = 1 + 0.25\tanh(z_1 + 0.45 z_2)$ induces heteroscedasticity in the latent factors. The component-specific location and scale vectors $\mu_k, \sigma_k \in \mathbb{R}^{10}$ are the rows of
{\small
\begin{gather*}
\setlength{\arraycolsep}{3pt}
M =
\begin{bmatrix}
-1.4 & 0.5 & -0.9 & 0.2 & -0.3 & 0.6 & -0.5 & 0.4 & -0.7 & 0.3 \\
-0.2 & -1.0 & 0.7 & -0.6 & 0.8 & -0.4 & 0.3 & -0.5 & 0.6 & -0.8 \\
0.9 & 0.6 & -0.2 & 1.1 & -0.7 & 0.4 & 0.9 & 0.7 & -0.1 & 0.5 \\
1.6 & -0.3 & 1.0 & 0.5 & 0.4 & -0.8 & -1.0 & -0.6 & 0.9 & -0.4
\end{bmatrix},
\\
\setlength{\arraycolsep}{3pt}
V =
\begin{bmatrix}
0.60 & 0.45 & 0.55 & 0.40 & 0.50 & 0.35 & 0.45 & 0.42 & 0.50 & 0.38 \\
0.45 & 0.65 & 0.50 & 0.60 & 0.55 & 0.40 & 0.35 & 0.48 & 0.44 & 0.57 \\
0.55 & 0.50 & 0.70 & 0.45 & 0.60 & 0.50 & 0.55 & 0.52 & 0.46 & 0.50 \\
0.50 & 0.40 & 0.60 & 0.55 & 0.45 & 0.65 & 0.60 & 0.58 & 0.54 & 0.47
\end{bmatrix},
\end{gather*}
}%
and the shared factor-loading matrix is
{\small
\[
\setlength{\arraycolsep}{3pt}
\Lambda =
\begin{bmatrix}
0.95 & -0.55 & 0.35 & 0.45 & -0.30 & 0.40 & -0.20 & 0.32 & -0.28 & 0.24 \\
0.30 & 0.60 & -0.45 & 0.20 & 0.55 & -0.35 & 0.65 & -0.25 & 0.48 & -0.42
\end{bmatrix}.
\]
}%
The covariate $X$ is then obtained from $\widetilde{X}$ by fixed deterministic warps. Starting from $x = \tilde{x} + 0.22\sin(\tilde{x}_1)\,\delta_k$, the coordinates are updated sequentially (each line using the current values) via
\begin{align*}
x_3 &\leftarrow x_3 + 0.28\,(x_1^2 - 1), &
x_5 &\leftarrow x_5 + 0.16\, x_2 x_4, &
x_7 &\leftarrow x_7 + 0.14\, x_3 x_6, \\
x_8 &\leftarrow x_8 + 0.18\sin(x_1 x_3), &
x_9 &\leftarrow x_9 + 0.15\, x_5 x_7, &
x_{10} &\leftarrow x_{10} + 0.12\,(x_8^2 - 1),
\end{align*}
followed by the component offset $x \leftarrow x + o_k$ and the final warps
\begin{align*}
x_6 &\leftarrow x_6 + 0.20\,\mathrm{sgn}(x_1)\sqrt{|x_1| + 10^{-6}}, &
x_2 &\leftarrow x_2 - 0.15\tanh(x_7 - x_5), \\
x_9 &\leftarrow x_9 + 0.18\,\mathrm{sgn}(x_8)\sqrt{|x_8| + 10^{-6}}, &
x_{10} &\leftarrow x_{10} - 0.12\tanh(x_9 - x_6).
\end{align*}
The drift and offset vectors $\delta_k, o_k \in \mathbb{R}^{10}$ are the rows of
{\small
\begin{gather*}
\setlength{\arraycolsep}{3pt}
D =
\begin{bmatrix}
0.30 & -0.20 & 0.10 & 0.00 & 0.12 & -0.08 & 0.05 & 0.07 & -0.06 & 0.04 \\
-0.10 & 0.28 & -0.12 & 0.08 & -0.06 & 0.14 & -0.04 & -0.09 & 0.10 & -0.05 \\
0.08 & -0.12 & 0.26 & -0.16 & 0.10 & -0.06 & 0.18 & 0.11 & -0.04 & 0.09 \\
-0.14 & 0.06 & -0.08 & 0.24 & -0.10 & 0.20 & -0.22 & -0.05 & 0.12 & -0.13
\end{bmatrix},
\\
\setlength{\arraycolsep}{3pt}
O =
\begin{bmatrix}
-0.12 & 0.06 & -0.04 & 0.02 & -0.05 & 0.03 & -0.02 & 0.04 & -0.03 & 0.01 \\
0.05 & -0.10 & 0.08 & -0.06 & 0.09 & -0.04 & 0.03 & -0.02 & 0.05 & -0.07 \\
0.10 & 0.04 & -0.06 & 0.11 & -0.08 & 0.06 & 0.12 & 0.03 & -0.09 & 0.08 \\
0.16 & -0.02 & 0.10 & 0.07 & 0.05 & -0.11 & -0.14 & -0.08 & 0.07 & -0.05
\end{bmatrix}.
\end{gather*}
}%
The resulting marginal covariate law is non-Gaussian, multimodal, and dependent across coordinates.
The true class probabilities follow a softmax model whose linear logits are rescaled by a covariate-dependent scale function,
\begin{gather*}
q(x) = \mathrm{softmax}\!\left(\frac{W x + b}{s(x)}\right),
\\
s(x) = \mathrm{clip}\big(0.86 + 0.22\tanh(0.60 x_1 - 0.35 x_4 + 0.20 x_9),\ 0.65,\ 1.40\big),
\end{gather*}
where the weight matrix $W \in \mathbb{R}^{4 \times 10}$ and intercept $b \in \mathbb{R}^4$ were generated once (by combining i.i.d.\ Gaussian entries with structured rank-one terms drawn from a fixed seed) and held constant across all replications; rounded to two decimals,
{\small
\[
\setlength{\arraycolsep}{3pt}
W =
\begin{bmatrix}
-1.49 & 1.57 & -0.40 & -1.12 & 1.37 & -0.67 & 0.81 & -0.92 & 0.96 & 0.07 \\
-0.10 & 0.32 & -0.75 & -0.45 & 0.29 & -0.14 & 0.54 & -0.37 & -0.30 & -0.10 \\
0.32 & -0.65 & 0.47 & -0.05 & 0.11 & -0.12 & -0.44 & 0.32 & 0.16 & 0.57 \\
1.53 & -1.48 & 1.06 & 1.23 & 0.05 & 1.37 & -0.50 & 0.91 & 0.18 & 0.41
\end{bmatrix}
\]
}%
and $b = (0.30,\, 0.16,\, -0.05,\, -0.41)^\top$.
Because the log-odds are nonlinear in $x$ through $s(x)$, the multinomial logistic working model is misspecified and thus is not expected to be well calibrated under this DGP.

\subsection{Setup for Figure~\ref{fig:thm52_box_rate}}

\label{app:verification-consistency-prediction-setup}

The covariate and noisy-label DGP are the same as in the univariate interval visualization of Section~\ref{app:task1_setup}. We use Algorithm~\ref{alg:calibration_population} with \(100\) bins and \(\delta=0.1\) and \(100\) repetitions.
The noise-free target is $p^*(x)=\sigma(2x)$. We fit an unpenalized logistic-regression estimator once using a training set of size $10,000$, fix an evaluation set of size $10,000$, and vary the calibration-set size over $\{10^3,3\times10^3,10^4,3\times10^4,10^5,3\times10^5,10^6,3\times10^6,10^7\}$. 
\begin{enumerate}
\item fit a logistic-regression estimator \(\hat p\) on the training fold;
\item apply the calibration procedure on the calibration fold to obtain an interval collection \(\mathcal I=\{[a_j,b_j]\}_j\);
\item evaluate the interval system on the independent evaluation fold by computing, for each point, the minimal width among all intervals that contain the target probability \(p^*(X_i)\), with width \(1.0\) assigned if no interval contains \(p^*(X_i)\).
\end{enumerate}

\section{Additional Algorithms}\label{app:alg}

Algorithm~\ref{alg:max_coverage_intervals} selects a disjoint subset for intervals returned by Algorithm~\ref{alg:calibration_population}. The \textsc{DisjointCalibration} procedure first imposes a maximum length and
then uses weighted interval scheduling to select a non-overlapping subcollection with maximum covered length. 

The \textsc{MaxCoverageIntervals} procedure solves a weighted interval scheduling problem to select a set of non-overlapping intervals that maximizes total coverage, where each interval is weighted by its length. It sorts intervals by end time, precomputes the last compatible interval for each candidate, and then uses dynamic programming to recursively compute the optimal value, followed by a backtracking step to recover the selected intervals.

\begin{algorithm*}[h]
   \caption{Maximum-coverage non-overlapping intervals (weighted interval scheduling)}
   \label{alg:max_coverage_intervals}
   \small
\begin{algorithmic}[1]
\Procedure{\textsc{DisjointCalibration}}{length bound $l$, \titleShorts $\intervalset_p$}
\State
\Comment{Restrict \titleShorts to intervals of length $\le l$ and maximize covered length}
\State $\intervalset_p^{(\le l)} \la \emptyset$
\For{{\bf each} interval $[a,b] \in \intervalset_p$}
    \If{$b - a \le l$}
        \State $\intervalset_p^{(\le l)} \la \intervalset_p^{(\le l)} \cup \{[a,b]\}$
    \EndIf
\EndFor
\If{$\intervalset_p^{(\le l)} = \emptyset$}
    \State {\bf Return} $\emptyset$ \Comment{No interval satisfies the length constraint}
\EndIf
\State Relabel $\intervalset_p^{(\le l)}$ as $\{[s_i, e_i]\}_{i=1}^m$
\State $\mathcal{S}^\star \la \textsc{MaxCoverageIntervals}\big(\{[s_i, e_i]\}_{i=1}^m\big)$

\Comment{Select a disjoint set intervals of maximal coverage}
\State {\bf Return} $\mathcal{S}^\star$
\EndProcedure
\\
\Procedure{\textsc{MaxCoverageIntervals}}{Intervals $\{[s_i, e_i]\}_{i=1}^m$}

\Comment{Find a maximum-length set of non-overlapping intervals}
\State Sort the intervals so that $e_1 \le e_2 \le \cdots \le e_m$
\For{$j \la 1$ {\bf to} $m$}
    \State $w_j \la e_j - s_j$ \Comment{Weight = interval length}
\EndFor
\For{$j \la 1$ {\bf to} $m$}
    \State $p(j) \la \max\{\, i < j : e_i \le s_j \,\}$, or $0$ if no such $i$ exists
    \Comment{Binary search over $\{e_1,\dots,e_{j-1}\}$}
\EndFor
\State Initialize array $\text{OPT}[0..m]$ \Comment{Dynamic Programming}
\State $\text{OPT}[0] \la 0$
\For{$j \la 1$ {\bf to} $m$}
    \State $\text{OPT}[j] \la \max\bigl(w_j + \text{OPT}[p(j)],~\text{OPT}[j-1]\bigr)$\Comment{Maximum coverage using $\{1,2,\dots,j\}$}
\EndFor
\State Initialize selected set $\mathcal{S}^\star \la \emptyset$
\State $j \la m$
\While{$j > 0$}
    \If{$w_j + \text{OPT}[p(j)] > \text{OPT}[j-1]$}
        \State $\mathcal{S}^\star \la \mathcal{S}^\star \cup \{[s_j, e_j]\}$
        \State $j \la p(j)$
    \Else
        \State $j \la j - 1$
    \EndIf
\EndWhile
\State {\bf Return} $\mathcal{S}^\star$
\EndProcedure

\end{algorithmic}
\end{algorithm*}

\end{document}